\documentclass{article}
\usepackage[utf8]{inputenc}
\usepackage{proximalfm_arxiv}
\usepackage{multicol}

\usepackage[utf8]{inputenc}
\usepackage{amsmath, amssymb}
\usepackage{algorithm}
\usepackage{algpseudocode}
\usepackage{tcolorbox}
\usepackage{tikz}
\usepackage{dsfont}
\usepackage{svg}
\usepackage{booktabs}
\usepackage{mathtools}
\usepackage{etoc}
\usepackage{appendix}
\usepackage{amsthm}
\usepackage{xspace}
\usepackage{bbm}
\usepackage{tabularx}
\usepackage{longtable}
\usepackage{caption}
\usepackage{multirow}
\usepackage{placeins}
\usepackage{hyperref}
\usepackage{cleveref}
\usepackage{adjustbox}
\usepackage[T1]{fontenc}
\usepackage{lmodern}

\usepackage[shortlabels]{enumitem}

\usetikzlibrary{positioning, arrows.meta}
\usetikzlibrary{calc}
\usetikzlibrary{decorations.pathreplacing}

\setlist[itemize]{leftmargin=12pt}
\setlist[enumerate]{leftmargin=12pt}

\newcommand{\E}{\mathbb{E}}
\newcommand{\Var}{\mathrm{Var}}
\newcommand{\1}{\mathbf{1}}
\newcommand{\R}{\mathbb{R}}
\newcommand{\T}{\top}
\newcommand{\norm}[1]{\left\lVert #1 \right\rVert}

\newcommand{\prob}{\mathbb{P}}
\newcommand{\indep}{\perp \!\!\! \perp}

\newcommand{\modelname}{\textsc{ProximalFM}\xspace}
\newcommand{\codeavailability}{}
\newcommand*\diff{\mathop{}\!\mathrm{d}}

\theoremstyle{plain}
\newtheorem{theorem}{Theorem}[section]

\newtheorem{lemma}[theorem]{Lemma}

\newtheorem{assumption}{Assumption}
\newtheorem{definition}[theorem]{Definition}

\crefname{assumption}{Assumption}{Assumptions}
\Crefname{assumption}{Assumption}{Assumptions}

\AddToHook{env/appendices/begin}{%
  \crefalias{section}{appendix}%
  \crefalias{subsection}{appendix}%
  \crefalias{subsubsection}{appendix}%
}

\title{\modelname: Amortized Proximal Causal Inference under Hidden Confounding}
\author{
  Christophe Muller$^{1}$\thanks{Corresponding author: \href{mailto:christophe.muller@stats.ox.ac.uk}{christophe.muller@stats.ox.ac.uk}},
  Ayub Kharel$^{1}$, 
  Alex Luedtke$^{2}$,
  Chan Park$^{3}$,
  Eric Tchetgen Tchetgen$^{4}$,\\[0.8ex]
  Juan L. Gamella$^{5}$,
  Rahul G. Krishnan$^{6}$,
  Ricardo Silva$^{7}$,
  Jakob Zeitler$^{1,8}$ \\[1.5ex]
  {\fontsize{9}{11}\selectfont
    $^{1}$\,University of Oxford\quad
    $^{2}$\,Harvard University\quad
    $^{3}$\,University of Illinois Urbana-Champaign}\\
  {\fontsize{9}{11}\selectfont
    $^{4}$\,Perelman School of Medicine, University of Pennsylvania\quad
    $^{5}$\,Causal Chamber}\\
  {\fontsize{9}{11}\selectfont
    $^{6}$\,University of Toronto \& Vector Institute\quad
    $^{7}$\,University College London\quad
    $^{8}$\,Pioneer Centre for SMARTbiomed}
}
\date{}

\begin{document}

\maketitle
\addtocontents{toc}{\protect\etocsettocdepth{0}}

\begin{abstract}
    Standard causal identification methods often assume no unmeasured confounding and can fail when relevant confounders are unobserved. Proximal causal inference instead uses proxy variables to identify effects under hidden confounding and additional assumptions. However, nonparametric proximal estimation can be challenging in practice: recovering causal estimands such as the conditional average treatment effect (CATE) requires solving an ill-posed integral equation that is data-hungry, hyperparameter-sensitive, and optimization-unstable. Bayesian inference for such models provides a desirable alternative, mitigating these difficulties by regularizing through the prior. However, computing a posterior is itself challenging, as a typical likelihood function will include latent variables. Following the recent success of tabular foundation models in backdoor, instrumental variable, and frontdoor settings, we propose that prior-data fitted networks (PFNs) are uniquely suited to resolve this bottleneck. Indeed, by training on synthetic data sampled from compliant structural causal models with access to oracle counterfactuals, we simplify the task substantially, amortizing the implied Bayesian operator inversion into a single transformer forward pass. Compared to prior literature that focuses primarily on point estimation, our model, \modelname, explicitly targets the Bayesian posterior distribution of the CATE. One unique aspect of this problem is that we need to provide Monte Carlo estimates of the oracle CATEs, leading to a novel variation of PFNs that accounts for the added stochastic error. Across a diverse suite of proximal regimes, \modelname achieves consistently strong CATE-estimation performance without dataset-specific tuning, with its largest advantage when latent confounding is substantial and the proxies are weakly informative; it also provides fast inference through a single amortized forward pass.
\end{abstract}

\begin{center}
  \textbf{Reproducibility and weights:}\\
  \url{https://github.com/ChristopheMuller/ProximalFM}\\
  \url{https://huggingface.co/ChristopheMuller/ProximalFM}
\end{center}

\section{Introduction}
\label{sec:intro}

Causal inference from observational data is a fundamental task across disciplines, yet its validity typically hinges on the \emph{no unmeasured confounding assumption}. This assumption requires that investigators have measured a sufficiently rich set of covariates to ensure that supervised regression and classification functions can, in theory, also provide how an outcome varies under \emph{intervention} on a possible cause of interest \citep{hernan:20}, with specialized estimation methods improving on off-the-shelf uses of supervised learning. In practice, however, there are often unmeasured variables that are plausible confounders (e.g., unrecorded disease severity).

One of the most common tasks in causal inference is estimating the \emph{conditional average treatment effect} (CATE) function: how an outcome variable $Y$ potentially varies in expectation, as we vary a treatment variable $A$ by \emph{intervening} on the latter. This is a function of an observed realization of a vector of covariates $X$. Using the notation $Y(a)$ to denote the \emph{potential outcome} under a hypothetical intervention on $A$ at level $a$ \citep{rubinEstimatingCausalEffects1972}, the CATE function for a \emph{binary} $A$ is given by 
\begin{equation}
Q(x) = \mathbb{E}[Y(1)-Y(0)\mid X=x].
\label{eq:cate}
\end{equation}

This is identifiable from observational data if unconfoundedness holds. That failing, other assumptions are needed. \emph{Proximal causal learning} has emerged as a principled framework to identify and estimate causal effects when hidden confounders cannot be ruled out \citep{tchetgenIntroductionProximalCausal2020}. The main idea is to exploit auxiliary variables, \emph{proxies}, which are associated with the hidden confounders without fully removing their influence. However, using proxy information while avoiding parametric models requires solving ill-posed integral equations, leading to a difficult estimation problem. As a result, current nonparametric \citep{ghassami2022minimaxkernel, sverdrupProximalCausalLearning2023, YangFW23} and neural estimation methods \citep[NMMR, ][]{kompaDeepLearningMethods2022} are computationally demanding, requiring intensive, dataset-specific optimization.

Simultaneously, the machine learning community is shifting toward amortized inference via foundation models. Prior-data fitted networks \citep[PFNs, ][]{mullerTransformersCanBayesian2024} have demonstrated that transformers can learn to perform Bayesian inference through in-context learning. Models like TabPFN and its successors have shown state-of-the-art performance on small tabular datasets without requiring retraining \citep{grinsztajn2026tabpfn3}. As we review in the next section, recent work has extended PFN-style models to causal inference settings. However, they generally do not make use of proximal assumptions, limiting their utility in the presence of hidden confounders. Our contributions are:


\begin{enumerate}
\item \textbf{\modelname: a foundation model for proximal causal learning.}
We introduce \modelname, the first tabular foundation model designed for proximal causal inference. It leverages proxy variables without requiring dataset-specific training at deployment. Causal effects are obtained in a single forward pass through a target dataset, providing a Bayesian solution to an ill-posed inverse problem without hyperparameter tuning.

\item \textbf{A structural prior and Monte Carlo CATE targets for proximal causal learning.}
We introduce a diverse prior over structural causal models that enforces the proximal assumptions for identification. 
We use Monte Carlo integration to approximate the CATE. These approximations are then used as targets to train \modelname. This prior determines the inductive bias regularizing the proximal estimation problem amortized by \modelname.

\item \textbf{Comprehensive evaluation, from semi-synthetic data to physical systems.}
We evaluate \modelname on diverse semi-synthetic proximal benchmarks that combine real-world covariate distributions with known, controlled causal mechanisms. 
Beyond semi-synthetic benchmarks, we take a first step towards evaluation on real physical systems using Causal Chambers \citep{causalchambers}, a controlled physical testbed for
causal inference.

\end{enumerate}

\subsection{Related work}
\label{sec:lit_review}

\paragraph{Tabular foundation models}

Recent advances have demonstrated the potential of transformer-based foundation models for tabular prediction. TabPFN pioneered this approach for supervised learning tasks with tabular data \citep{hollmannTabPFNTransformerThat2023, hollmannAccuratePredictionsSmall2025, grinsztajn2026tabpfn3}. This success prompted a rapidly expanding literature. Examples include work on more efficient tabular in-context learning, such as TabICL \citep{quTabICLTabularFoundation2025, quTabICLv2BetterFaster2026}, and tuning on real-world data, as in RealTabPFN \citep{gargRealTabPFNImprovingTabular2025}. This progress has also motivated standardized evaluation. TabArena provides a living benchmark for IID tabular classification and regression \citep{erickson2026tabarena}. Extending this evaluation beyond conventional IID settings, \citet{purucker2026beyondiid} find that current tabular foundation models are particularly strong on small- to medium-scale IID datasets, but do not yet consistently outperform conventional methods on non-IID, large-scale, or high-dimensional data.

\paragraph{Amortized causal learning} 

The success of tabular models has motivated their application to causal inference. Causal foundation models are pretrained on a prior over causal data-generating processes and use in-context learning to estimate causal quantities on new datasets in a single forward pass. \citet{stith2026causal} provide a recent overview of this emerging literature.

\citet{robertsonDoPFNInContextLearning2025} introduced Do-PFN, demonstrating that transformer-based foundation models can learn causal estimation procedures directly from synthetic data and perform zero-shot inference on unseen datasets. \citet{balazadehCausalPFNAmortizedCausal2025} introduced CausalPFN, a framework for doing amortized learning of causal effects identified through unconfoundedness. This setting was also considered in \citet{luedtke2024adversarial}, under an adversarially chosen prior. \citet{maFoundationModelsCausal2026} proposed CausalFM, extending causal foundation models beyond unconfoundedness, such as the case of identifiable additive-error instrumental-variable (IV) settings. Rather than learning a single causal estimator, CausalFM trains separate models for distinct causal identification strategies, thereby preserving the classical separation between identification and estimation. \cite{balazadeh2026iv} relaxes the assumption of additive-error in  the IV setting to build in-context learners for \emph{bounding} causal effects for instrumental variables. Different from proximal learning, such partial identification IV methods do not typically exploit multiple proxies, nor exploit completeness assumptions (as defined in the next section) that could otherwise allow for identification.

Despite these advances, a foundation model designed for proximal causal inference remains absent from the literature.

\section{Proximal causal inference}
\label{sec:problem_setup}

We consider an observational causal inference setting where the standard assumption of no unmeasured confounding fails. Our objective is to estimate the CATE, as in Eq. (\ref{eq:cate}), for binary treatments. Let the observed variables be $O = (L, A, Y)$, where $A \in \{0,1\}$ denotes a treatment of interest, $Y$ is the outcome, and $L$ is a set of measured covariates. The unmeasured confounders are denoted by $U$.

In the presence of $U$, the standard exchangeability condition $Y(a) \indep A \mid L$ is violated \citep{hernan:20}. To achieve identification, we adopt the proximal causal learning framework \citep[see, e.g.,][]{miaoIdentifyingCausalEffects2018, tchetgenIntroductionProximalCausal2020}. This framework partitions $L$ into three subsets, each with a distinct role. Let $L=(X,Z,W)$ where $X$ contains baseline covariates, while $Z$ and $W$ are complementary proxies for $U$. $Z$ is a \emph{treatment-confounding} proxy: while it carries information about the latent $U$, it may otherwise be related directly to treatment assignment. Conversely, $W$ is an \emph{outcome-confounding} proxy which carries information about $U$ and may otherwise directly shape the outcome. \Cref{fig:dag_proximal} illustrates some admissible proximal directed acyclic graphs (DAGs), with formal restrictions stated in the following section.

\begin{figure}[t]
    \centering
    \begin{tikzpicture}[
line cap=round,
line join=round,
x=1cm, y=1cm,
main node/.style={circle, draw=black, line width=0.65pt, minimum size=6.4mm, inner sep=0pt, font=\small\itshape},
observed/.style={main node, fill=white, text=black},
latent/.style={main node, fill=white, text=black, dashed, dash pattern=on 2.5pt off 1.8pt},
outcome/.style={main node, draw=blue!80!black, fill=blue!5, text=blue!85!black, line width=0.8pt},
edge/.style={draw=black!85, line width=0.55pt, -{Latex[length=1.6mm,width=1.1mm]}},
latent edge/.style={edge, draw=black!70, dashed, dash pattern=on 2.2pt off 1.6pt},
sublabel/.style={font=\small\bfseries, text=black!85, align=center}
]

    \begin{scope}[xshift=0cm]
        \node[latent]   (U_a) at (0, 1.05) {$U$};
        \node[observed] (X_a) at (0, 0) {$X$};
        \node[observed] (Z_a) at (-1.35, 0) {$Z$};
        \node[observed] (W_a) at (1.35, 0) {$W$};
        \node[observed] (A_a) at (-0.85, -0.95) {$A$};
        \node[outcome]  (Y_a) at (0.85, -0.95) {$Y$};

        \draw[latent edge] (U_a) -- (X_a);
        \draw[latent edge] (U_a) -- (Z_a);
        \draw[latent edge] (U_a) -- (W_a);
        \draw[latent edge] (U_a) to[bend right=10] (A_a);
        \draw[latent edge] (U_a) to[bend left=10] (Y_a);

        \draw[edge] (X_a) -- (Z_a);
        \draw[edge] (X_a) -- (W_a);
        \draw[edge] (X_a) -- (A_a);
        \draw[edge] (X_a) -- (Y_a);

        \draw[edge] (Z_a) -- (A_a);
        \draw[edge] (W_a) -- (Y_a);
        \draw[edge] (A_a) -- (Y_a);

        \node[sublabel] at (0, -1.5) {(a)};
    \end{scope}

    \begin{scope}[xshift=3.8cm]
        \node[latent]   (U_b) at (0, 1.05) {$U$};
        \node[observed] (X_b) at (0, 0) {$X$};
        \node[observed] (Z_b) at (-1.35, 0) {$Z$};
        \node[observed] (W_b) at (1.35, 0) {$W$};
        \node[observed] (A_b) at (-0.85, -0.95) {$A$};
        \node[outcome]  (Y_b) at (0.85, -0.95) {$Y$};

        \draw[latent edge] (U_b) -- (X_b);
        \draw[latent edge] (U_b) -- (Z_b);
        \draw[latent edge] (U_b) -- (W_b);
        \draw[latent edge] (U_b) to[bend right=10] (A_b);
        \draw[latent edge] (U_b) to[bend left=10] (Y_b);

        \draw[edge] (X_b) -- (A_b);
        \draw[edge] (X_b) -- (Y_b);

        \draw[edge] (A_b) -- (Y_b);

        \node[sublabel] at (0, -1.5) {(b)};
    \end{scope}

    \begin{scope}[xshift=7.6cm]
        \node[latent]   (U_c) at (0, 1.05) {$U$};
        \node[observed] (X_c) at (0, 0) {$X$};
        \node[observed] (Z_c) at (-1.35, 0) {$Z$};
        \node[observed] (W_c) at (1.35, 0) {$W$};
        \node[observed] (A_c) at (-0.85, -0.95) {$A$};
        \node[outcome]  (Y_c) at (0.85, -0.95) {$Y$};

        \draw[latent edge] (U_c) -- (Z_c);
        \draw[latent edge] (U_c) -- (W_c);
        \draw[latent edge] (U_c) to[bend right=10] (A_c);
        \draw[latent edge] (U_c) to[bend left=10] (Y_c);

        \draw[edge] (X_c) -- (Y_c);

        \draw[edge] (X_c) -- (U_c);
        \draw[edge] (X_c) -- (A_c);
        \draw[edge] (X_c) -- (Z_c);
        \draw[edge] (X_c) -- (W_c);
        \draw[edge] (A_c) -- (Z_c);
        \draw[edge] (W_c) -- (Y_c);
        \draw[edge] (A_c) -- (Y_c);

        \node[sublabel] at (0, -1.5) {(c)};
    \end{scope}

    \begin{scope}[xshift=11.4cm]
        \node[latent]   (U_d) at (0, 1.05) {$U$};
        \node[observed] (X_d) at (0, 0) {$X$};
        \node[observed] (Z_d) at (-1.35, 0) {$Z$};
        \node[observed] (W_d) at (1.35, 0) {$W$};
        \node[observed] (A_d) at (-0.85, -0.95) {$A$};
        \node[outcome]  (Y_d) at (0.85, -0.95) {$Y$};

        \draw[latent edge] (U_d) -- (Z_d);
        \draw[latent edge] (U_d) -- (W_d);
        \draw[latent edge] (U_d) to[bend right=10] (A_d);
        \draw[latent edge] (U_d) to[bend left=10] (Y_d);

        \draw[edge] (X_d) -- (U_d);

        \draw[edge] (X_d) -- (Z_d);
        \draw[edge] (X_d) -- (A_d);
        \draw[edge] (X_d) -- (Y_d);

        \draw[edge] (W_d) -- (Y_d);
        \draw[edge] (A_d) -- (Y_d);

        \node[sublabel] at (0, -1.5) {(d)};
    \end{scope}

\end{tikzpicture}
    \caption{Examples of DAGs compatible with proximal causal identification assumptions. In every panel, $Z$ and $W$ are informative proxies for the unmeasured confounder $U$. Direct $Z$--$A$ edges (in either direction) and a direct $W \to Y$ edge are optional.}

    \label{fig:dag_proximal}
\end{figure}

\subsection{Identifying assumptions}
\label{sec:identifying_assumptions}

To identify the causal effect $Q(x)$, it suffices for the joint distribution of $(X, U, Z, W, A, Y)$ to satisfy the following conditions:

\begin{assumption}[Latent exchangeability]
\label{as:latent_ignorability}

The potential outcomes are independent of the treatment assignment conditional on the observed covariates $X$ and the unmeasured confounders $U$:
\[
    Y(a) \indep A \mid (X, U) \qquad \text{for each } a \in \{0,1\}.
\]

\end{assumption}

\begin{assumption}[Proxy exclusion restrictions]
\label{as:proxy_exclusion}

The proxies must satisfy conditional independencies relative to the treatment and outcome:

\begin{multicols}{2}
\begin{itemize}
  \item Treatment-inducing proxy:\enspace
        \mbox{$Y \indep Z \mid (A,X,U)$}
  \item Outcome-inducing proxy:\enspace
        \mbox{$W \indep (A,Z) \mid (X,U)$}
\end{itemize}
\end{multicols}
\end{assumption}

\begin{assumption}[Latent positivity]
\label{as:positivity}

Every observational unit has a non-zero probability of receiving any given treatment assignment, even when conditioning on the observed covariates and the unmeasured confounder:
\[
P(A = a \mid X, U) > 0 \quad \text{ for } a \in \{0,1\}
\]

\end{assumption}




\begin{assumption}[Completeness]
\label{as:completeness}
The proxies must capture enough variation in $U$ to identify the causal effect.
For any square-integrable measurable functions $v$ of $U$ and $g$ of $Z$,
\[
\begin{aligned}
\E[v(U)\mid Z=z,A=a,X=x]=0\ \text{for almost all }(z,a,x)
    &\implies v(U)=0\ \text{a.s.},\\
\E[g(Z)\mid W=w,A=a,X=x]=0\ \text{for almost all }(w,a,x)
    &\implies g(Z)=0\ \text{a.s.}
\end{aligned}
\]
\end{assumption}

\subsection{Classical and modern estimation via bridge functions}
\label{sec:bridge_functions}

In addition to Assumptions~\ref{as:latent_ignorability}--\ref{as:completeness}, we assume the regularity conditions ensuring the existence of an \emph{outcome confounding bridge function} $h_0(W,A,X)$. This function satisfies the Fredholm integral equation \citep{miaoIdentifyingCausalEffects2018, tchetgenIntroductionProximalCausal2020}
\begin{equation}
\E[Y \mid Z,A,X]
=
\E[h_0(W,A,X)\mid Z,A,X].
\label{eq:bridge}
\end{equation}
The bridge function enables identification of the conditional average treatment effect through the \emph{proximal g-formula}
\begin{equation}
Q(x)
=
\E[h_0(W,1,X)-h_0(W,0,X)\mid X=x].
\label{eq:proximal_g_formula}
\end{equation}
\Cref{eq:proximal_g_formula} identifies the effect conditional on $X$ alone by averaging over the proxies. 

Early approaches to recovering this bridge function focused on parametric proximal g-computation. Under linear specifications, proximal g-computation reduces to a two-stage estimation procedure closely related to instrumental variable methods. 
While computationally efficient, proximal two-stage least squares \citep[\texttt{P2SLS},][]{tchetgenIntroductionProximalCausal2020} relies heavily on strict parametric assumptions. To capture more complex confounding mechanisms, recent two-stage methods have replaced linear regressions with highly flexible nonparametric approximators, such as kernel proxy variable \citep[\texttt{KPV},][]{mastouriProximalCausalLearning2021} and deep feature proxy variable \citep[][]{xu2021deep}.

An alternative is to estimate the outcome bridge function by directly enforcing the conditional moment restriction equivalent to \Cref{eq:bridge}.
Proxy maximum moment restriction \citep[\texttt{PMMR},][]{mastouriProximalCausalLearning2021} leverages a reproducing kernel Hilbert space, while neural maximum moment restriction \citep[\texttt{NMMR},][]{kompaDeepLearningMethods2022} extends this approach by using neural networks. 

A separate strand of methods are doubly-robust meta-learners for heterogeneous treatment effect estimation, e.g. \texttt{P-learner} from \citet{sverdrupProximalCausalLearning2023} and the Forster-Warmuth learner of \citep{YangFW23}. They estimate an outcome bridge and a treatment bridge functions, then constructs a pseudo-outcome from them. These outcomes are used to fit a standard conditional mean regression mapping the domain of covariates $X$ to treatment effect.

While these modern approaches are highly adaptive, they introduce their own practical hurdles. They demand complex, dataset-specific hyperparameter tuning and extensive model retraining from scratch for every new causal task. The kernel methods also require the computation of kernel matrices, which can impose computational and memory burden in large sample sizes. This motivates the search for an amortized alternative capable of directly estimating $Q(x)$ from observational context.
\section{ProximalFM}
\label{sec:proximalFM}

We propose \modelname, a PFN for proximal causal inference. Pre-trained on synthetic proximal models, \modelname predicts a posterior distribution over $Q(x)$ from an observational context and query covariates $x$. Unlike existing approaches, it does this without explicitly solving an ill-posed inverse problem as an intermediate step. Instead, it directly outputs a CATE posterior from a single forward pass of a neural network.\codeavailability

\subsection{A synthetic prior for proximal identification}
\label{sec:proximal_prior}

To train \modelname, we construct a prior $\Pi$ over structural causal models \citep[SCMs, ][]{pearl:09}. An SCM specifies a causal graph, structural functions mapping parent variables and exogenous noise to child variables, and distributions over the exogenous noise. Each SCM defines an episode: its structure stays identical throughout, while each of its units is generated by drawing fresh noise and propagating it through its structural equations:
\begin{equation}
\label{eq:structural_scm}
\begin{aligned}
    X &= f_X(\epsilon_X), &
    U &= f_U(X,\epsilon_U), \\
    Z &= f_Z(X,U,\epsilon_Z), &
    W &= f_W(X,U,\epsilon_W), \\
    A &= f_A(X,U,Z,\epsilon_A), &
    \bigl(Y(0),Y(1)\bigr) &= f_Y(X,U,W,\epsilon_Y).
\end{aligned}
\end{equation}
The exogenous-noise vectors $\epsilon_{(.)}$ are drawn independently for every unit from distributions specified by the episode's SCM.  $X$, $U$, $Z$, and $W$ may contain continuous and categorical columns. $A\in\{0,1\}$ is drawn from a Bernoulli propensity. $Y(0)$ and $Y(1)$ are continuous. A generated unit is the tuple $(X,U,Z,W,A,Y(0),Y(1))$: both potential outcomes are generated, but the observational dataset keeps only $Y=Y(A)$.

$\Pi$ satisfies two complementary objectives. First, each SCM must be valid and satisfy the proximal assumptions. Second, the prior must be \emph{diverse}: we build on the synthetic prior of TabICLv1 \citep{quTabICLTabularFoundation2025}, in which each causal mechanism is a randomly initialized multi-layer perceptron. The prior $\Pi$ induces variation across SCMs in their dimensions, nonlinear functional forms, feature types and noise distributions, as well as in confounding strength, treatment overlap, and treatment-effect heterogeneity. An overview of the synthetic prior is available in \Cref{app:prior_procedure}.


\begin{figure}[t]
    \centering
    \begin{tikzpicture}[
x=0.018cm,
y=-0.018cm,
line cap=round,
line join=round,
plate outer/.style={draw=black!35, fill=black!2!white, line width=0.6pt, rounded corners=5pt},
plate inner/.style={draw=teal!50!black!40, fill=teal!2!gray!2!white, line width=0.6pt, dashed, dash pattern=on 3pt off 2pt, rounded corners=4pt},
plate tag outer/.style={draw=black!25, fill=white, font=\fontsize{6.5}{7.5}\selectfont\bfseries, text=black!70, rounded corners=2pt, inner xsep=4pt, inner ysep=2pt},
plate tag inner/.style={draw=teal!40, fill=white, font=\fontsize{6.5}{7.5}\selectfont\bfseries, text=teal!65!black, rounded corners=2pt, inner xsep=4pt, inner ysep=2pt},
observed/.style={draw=cyan!50!blue!80!black, fill=cyan!6!blue!3, line width=0.8pt},
outcome/.style={draw=blue!85!black, fill=blue!5!white, line width=0.8pt},
latent/.style={draw=black!70, fill=white, line width=0.8pt, dashed, dash pattern=on 2.5pt off 1.8pt},
edge/.style={draw=black!75, line width=0.6pt, -{Latex[length=1.8mm,width=1.3mm]}},
latent edge/.style={edge, draw=black!65, dashed, dash pattern=on 2.2pt off 1.6pt},
pipeline edge context/.style={draw=teal!75!black, line width=0.7pt, -{Latex[length=1.8mm,width=1.3mm]}},
pipeline edge target/.style={draw=blue!75!black, line width=0.7pt, -{Latex[length=1.8mm,width=1.3mm]}},
box context/.style={draw=teal!65!black, fill=teal!3!white, line width=0.7pt, rounded corners=4pt},
box target/.style={draw=blue!70!black, fill=blue!3!white, line width=0.7pt, rounded corners=4pt},
box divider/.style={draw=black!12, line width=0.4pt},
node label/.style={font=\normalsize\itshape, inner sep=0pt, text=black!90},
small text/.style={font=\scriptsize, inner sep=0pt, text=black!85},
output math/.style={font=\footnotesize, inner sep=0pt, text=black!90},
every node/.style={text=black}
]

\draw[plate outer] (18,12) rectangle (350,260);
\draw[plate inner] (40,60) rectangle (328,230);

\node[plate tag inner, anchor=south east] at (320,224) {replicates $k=1,\ldots,K$};
\node[plate tag outer, anchor=south east] at (342,254) {units $i=1,\ldots,T$};

\draw[edge] (184,54) -- (184,68);
\draw[edge] (169,48) -- (109,120);
\draw[edge] (199,48) -- (259,120);
\draw[edge] (168,42) .. controls (54,68) and (54,144) .. (110,173);
\draw[edge] (200,42) .. controls (314,68) and (314,144) .. (258,173);

\draw[latent edge] (170,94) -- (112,122);
\draw[latent edge] (198,94) -- (256,122);
\draw[latent edge] (174,98) .. controls (156,124) and (138,150) .. (128,166);
\draw[latent edge] (194,98) .. controls (212,124) and (230,150) .. (240,165);

\draw[edge] (105,146) -- (118,167);
\draw[edge] (263,146) -- (250,165);

\draw[observed] (184,36) circle[radius=18];
\node[node label] at (184,36) {$X$};

\draw[latent] (184,84) circle[radius=16];
\node[node label] at (184,84) {$U$};

\draw[observed] (98,132) circle[radius=16];
\node[node label] at (98,132) {$Z$};

\draw[observed] (270,132) circle[radius=16];
\node[node label] at (270,132) {$W$};

\draw[observed] (124,182) circle[radius=16];
\node[node label] at (124,182) {$A$};

\draw[outcome] (244,182) ellipse[x radius=30,y radius=18];
\node[small text, font=\scriptsize\bfseries\color{blue!85!black}] at (244,175.5) {$Y(0)$};
\node[small text, font=\scriptsize\bfseries\color{blue!85!black}] at (244,188.5) {$Y(1)$};
\draw[draw=blue!30, line width=0.4pt] (218,182) -- (270,182);

\draw[box context] (402,18) rectangle (846,114);
\node[font=\small\bfseries\color{teal!70!black}] at (624,34) {Observational Context ($i=1,\ldots,N$)};
\draw[box divider] (418,48) -- (830,48);
\node[output math, anchor=west] at (420,70) {Subsample: \quad $k_i \sim \operatorname{Unif}\{1,\ldots,K\}$};
\node[output math, anchor=west] at (420,96) {Context: \quad $\mathcal{D}_N = \bigl\{(x_i,\, w_{i,k_i},\, z_{i,k_i},\, a_{i,k_i},\, y_{i,k_i})\bigr\}_{i=1}^N$};

\draw[pipeline edge context] (328,70) -- node[above, font=\fontsize{6}{7}\selectfont\bfseries, color=teal!75!black, inner sep=1.5pt] {subsample} (402,70);

\draw[box target] (402,130) rectangle (846,254);
\node[font=\small\bfseries\color{blue!80!black}] at (624,146) {Training Targets ($i=N+1,\ldots,T$)};
\draw[box divider] (418,160) -- (830,160);
\node[output math, anchor=west] at (420,182) {MC mean: \quad $\widehat{\mu}_a(x_i) = \frac{1}{K}\sum_{k=1}^K y_{i,k}(a), \quad a \in \{0,1\}$};
\node[output math, anchor=west] at (420,208) {CATE target: \quad $\widehat{Q}_{i,K} = \widehat{\mu}_1(x_i) - \widehat{\mu}_0(x_i)$};
\node[output math, anchor=west] at (420,234) {MC noise: \quad $S_{i,K}^2 = \frac{1}{K}\operatorname{var}_K\bigl(y_{i,k}(1) - y_{i,k}(0)\bigr)$};

\draw[pipeline edge target] (274,182) -- node[above, font=\fontsize{6}{7}\selectfont\bfseries, color=blue!75!black, inner sep=1.5pt] {MC average} (402,182);

\end{tikzpicture}
    \caption{Synthetic-prior episode: each covariate $x_i$ is fixed across $K$ downstream SCM draws. One draw forms the observational context; replicate potential outcomes yield Monte Carlo CATE targets and their sampling variance. $\operatorname{var}_K$ is the unbiased sample variance of the $K$ paired treatment effects.} 
    \label{fig:prior_dag}
\end{figure}



\paragraph{Enforcing proximal assumptions.}

By construction, every SCM drawn from $\Pi$ is designed to satisfy the identification conditions of \Cref{sec:identifying_assumptions} (see \Cref{app:prior_audit} for empirical diagnostics):
\begin{itemize}
    \item \textbf{Structural constraints (\ref{as:latent_ignorability} \& \ref{as:proxy_exclusion}).} By construction in \eqref{eq:structural_scm}, mutual independence of the exogenous noises $\epsilon_{(.)}$ and the specified mechanism inputs encode the required exclusions. The corresponding conditional independencies follow immediately for every sampled SCM.
    \item \textbf{Positivity (\ref{as:positivity}).} Guaranteed by clamping treatment assignment probabilities to $[\delta, 1-\delta]$ with $\delta > 0$ before sampling $A$.
    \item \textbf{Completeness (\ref{as:completeness}).} Unlike structural graph constraints, completeness is a more subtle, distribution-level condition: it is untestable in observational data and cannot be strictly guaranteed by construction for arbitrary non-linear mechanisms \citep{tchetgenIntroductionProximalCausal2020}. The prior therefore relies on a heuristic requiring the dimensions of $W$ and $Z$ to be greater than the one of $U$. Their cardinalities also dominate the one of $U$ (see \Cref{app:prior_postprocess}).
\end{itemize}

\paragraph{Contexts \& Monte Carlo targets.}

A naive training framework partitions each dataset of $T$ units generated from the prior, $\{(x_i,u_i,z_i,w_i,a_i,y_i(0),y_i(1))\}_{i=1}^T$, into $N$ context units and $T-N$ query units. The context becomes $\mathcal{D}_N=\{(x_i,z_i,w_i,a_i,y_i\}_{i=1}^N$ while the query inputs and targets are $\{(x_i,y_i(1)-y_i(0))\}_{i=N+1}^T$. The PFN then uses the query individual treatment effects (ITE, $y_i(1)-y_i(0)$) as targets to be predicted from $(\mathcal{D}_N,x_i)$. While simple, this framework is flawed. First, training on ITEs would target their conditional distribution, whereas our estimand is the CATE $Q(x_i)$. Second, the ITE distribution depends on the joint law of $(Y(1),Y(0))$. Our prior defines this law, but it is typically unknown and not identified from observational data, so the learned ITE uncertainty would only reflect our choice of prior. 

We instead approximate the CATE of the query units. As $Q(x)$ is an average over the heterogeneity in $U$, $W$ and noise, it cannot be generated in a single draw. Each query $x_i$ is kept fixed and we run the downstream SCM $K$ times with independent noise. We obtain the individual effects $\tau_{i,k} = y_{i,k}(1) - y_{i,k}(0)$ whose mean is $\widehat{Q}_{i,K}$, with estimated sampling variance $S_{i,K}^2$ (\Cref{fig:prior_dag}). The estimator $\widehat{Q}_{i,K}$ is unbiased for $Q(x_i)$ at every $K$ and $S_{i,K}^2 \to 0$ as $K$ grows, so $K$ trades compute against target precision. The transformer receives exclusively the observational context $\mathcal{D}_N$ and the query covariates $x_i$; the Monte Carlo estimates $\widehat{Q}_{i,K}$ and $S_{i,K}^2$ are reserved for training supervision.


\subsection{Model \& objective}
\label{sec:model_training}

\modelname is a PFN trained to map an observational dataset directly to a Bayesian posterior distribution over the CATE estimand. Given an observational context set $\mathcal{D}_N = \bigl\{ (x_i,\, z_i,\, w_i,\, a_i,\, y_i) \bigr\}_{i=1}^{N}$ of observed units and a query covariate vector $x^\star$, the model outputs a predictive distribution over the CATE $Q(x^\star)$ in a single forward pass. As it is trained on datasets from the proximal prior $\Pi$ (\Cref{sec:proximal_prior}), its output approximates the CATE posterior predictive distribution under $\Pi$ \citep{mullerTransformersCanBayesian2024}: dataset-specific Bayesian inference is amortized once at pre-training time.

\begin{figure}[t]
    \centering
    \begin{tikzpicture}[
x=0.015cm,
y=-0.015cm,
line cap=round,
line join=round,
plate color/.style={draw=black!45, fill=gray!3, line width=0.5pt, rounded corners=3pt},
op color/.style={draw=teal!80!black, fill=teal!5, line width=0.6pt, rounded corners=2pt},
out color/.style={draw=blue!80!black, fill=blue!3, line width=0.8pt, rounded corners=2pt},
cellX/.style={draw=cyan!50!blue!80!black, fill=cyan!5, line width=0.6pt},
cellZ/.style={draw=orange!75!black, fill=orange!8, line width=0.6pt},
cellW/.style={draw=violet!70!black, fill=violet!6, line width=0.6pt},
cellA/.style={draw=green!45!black, fill=green!8, line width=0.6pt},
cellY/.style={draw=blue!80!black, fill=blue!4, line width=0.6pt},
roleX/.style={cellX, fill=cyan!18, rounded corners=2pt, line width=0.7pt},
roleZ/.style={cellZ, fill=orange!20, rounded corners=2pt, line width=0.7pt},
roleW/.style={cellW, fill=violet!16, rounded corners=2pt, line width=0.7pt},
plate/.style={plate color},
edge/.style={draw=black!75, line width=0.6pt, -{Latex[length=1.8mm,width=1.3mm]}},
operation/.style={op color},
output/.style={out color},
histbar/.style={draw=blue!70!black, fill=blue!22, line width=0.4pt},
meanline/.style={draw=black!60, dashed, dash pattern=on 2pt off 1.5pt, line width=0.5pt},
small text/.style={font=\scriptsize, inner sep=0pt, text=black!85},
tiny text/.style={font=\fontsize{7}{8.5}\selectfont, inner sep=0pt, text=black!60},
output math/.style={font=\footnotesize, inner sep=0pt, text=black!90},
every node/.style={text=black}
]

\draw[plate] (18,50) rectangle (404,267);
\node[tiny text, anchor=south west] at (28,261) {\textbf{context embedding}};

\node[small text] at (36,74) {$X$};
\draw[cellX] (50,61) rectangle (104,87);
\node[small text] at (77,74) {$x_1$};
\draw[cellX] (110,61) rectangle (164,87);
\node[small text] at (137,74) {$x_2$};
\node[tiny text] at (176,74) {$\cdots$};
\draw[cellX] (188,61) rectangle (242,87);
\node[small text] at (215,74) {$x_N$};
\draw[draw=black!55, line width=0.5pt] (256,74) circle (6);
\node[tiny text] at (256,74) {$+$};
\draw[roleX] (270,61) rectangle (308,87);
\node[small text] at (289,74) {$e_X$};

\node[small text] at (36,108) {$Z$};
\draw[cellZ] (50,95) rectangle (104,121);
\node[small text] at (77,108) {$z_1$};
\draw[cellZ] (110,95) rectangle (164,121);
\node[small text] at (137,108) {$z_2$};
\node[tiny text] at (176,108) {$\cdots$};
\draw[cellZ] (188,95) rectangle (242,121);
\node[small text] at (215,108) {$z_N$};
\draw[draw=black!55, line width=0.5pt] (256,108) circle (6);
\node[tiny text] at (256,108) {$+$};
\draw[roleZ] (270,95) rectangle (308,121);
\node[small text] at (289,108) {$e_Z$};

\node[small text] at (36,142) {$W$};
\draw[cellW] (50,129) rectangle (104,155);
\node[small text] at (77,142) {$w_1$};
\draw[cellW] (110,129) rectangle (164,155);
\node[small text] at (137,142) {$w_2$};
\node[tiny text] at (176,142) {$\cdots$};
\draw[cellW] (188,129) rectangle (242,155);
\node[small text] at (215,142) {$w_N$};
\draw[draw=black!55, line width=0.5pt] (256,142) circle (6);
\node[tiny text] at (256,142) {$+$};
\draw[roleW] (270,129) rectangle (308,155);
\node[small text] at (289,142) {$e_W$};

\draw[draw=black!30, line width=0.5pt, dash pattern=on 1.6pt off 1.6pt] (30,166) -- (392,166);

\node[small text] at (36,188) {$a$};
\draw[cellA] (50,175) rectangle (104,201);
\node[small text] at (77,188) {$a_1$};
\draw[cellA] (110,175) rectangle (164,201);
\node[small text] at (137,188) {$a_2$};
\node[tiny text] at (176,188) {$\cdots$};
\draw[cellA] (188,175) rectangle (242,201);
\node[small text] at (215,188) {$a_N$};

\node[small text] at (36,222) {$y$};
\draw[cellY] (50,209) rectangle (104,235);
\node[small text] at (77,222) {$y_1$};
\draw[cellY] (110,209) rectangle (164,235);
\node[small text] at (137,222) {$y_2$};
\node[tiny text] at (176,222) {$\cdots$};
\draw[cellY] (188,209) rectangle (242,235);
\node[small text] at (215,222) {$y_N$};

\draw[decorate, decoration={brace, mirror, amplitude=4pt}, draw=black!55, line width=0.5pt]
    (318,155) -- (318,61);
\draw[edge] (324,108) .. controls (360,108) and (400,122) .. (415,132);

\draw[decorate, decoration={brace, mirror, amplitude=4pt}, draw=black!55, line width=0.5pt]
    (250,235) -- (250,175);
\draw[operation] (272,188) rectangle (368,222);
\node[small text] at (320,205) {$e_{y \times a}$};
\draw[edge] (260,205) -- (272,205);

\draw[draw=black!55, line width=0.5pt] (422,138) circle (7);
\node[small text] at (422,138) {$+$};
\draw[edge] (368,205) .. controls (402,205) and (420,165) .. (422,145);
\draw[edge] (429,138) -- (470,138);

\draw[plate] (18,290) rectangle (404,370);
\node[tiny text, anchor=south west] at (28,364) {\textbf{query embedding}};

\node[small text] at (36,322) {$X^\star$};
\draw[cellX] (50,309) rectangle (104,335);
\node[small text] at (77,322) {$x^\star_1$};
\draw[cellX] (110,309) rectangle (164,335);
\node[small text] at (137,322) {$x^\star_2$};
\node[tiny text] at (176,322) {$\cdots$};
\draw[cellX] (188,309) rectangle (242,335);
\node[small text] at (215,322) {$x^\star_M$};
\draw[draw=black!55, line width=0.5pt] (256,322) circle (6);
\node[tiny text] at (256,322) {$+$};
\draw[roleX] (270,309) rectangle (308,335);
\node[small text] at (289,322) {$e_X$};

\draw[edge] (308,322) -- (470,322);

\draw[operation] (470,50) rectangle (620,370);
\node[small text, font=\normalsize\bfseries\color{teal!50!black}] at (545,198) {Transformer};
\node[small text, font=\normalsize\bfseries\color{teal!50!black}] at (545,222) {Encoder};

\draw[plate] (676,50) rectangle (1008,370);
\node[tiny text, anchor=south east] at (1000,362) {\textbf{prediction head}};

\draw[edge] (620,138) -- (695,138);

\draw[output] (695,62) rectangle (989,235);
\node[small text] at (922,80) {$q_\theta\bigl(\cdot \mid x^\star, \mathcal{D}_N\bigr)$};

\draw[draw=black!50, line width=0.5pt] (705,220) -- (979,220);

\draw[histbar] (708,213) rectangle (723,220);
\draw[histbar] (727,202) rectangle (742,220);
\draw[histbar] (746,180) rectangle (761,220);
\draw[histbar] (765,145) rectangle (780,220);
\draw[histbar] (784,112) rectangle (799,220); 
\draw[histbar] (803,128) rectangle (818,220);
\draw[histbar] (822,156) rectangle (837,220);
\draw[histbar] (841,149) rectangle (856,220); 
\draw[histbar] (860,169) rectangle (875,220);
\draw[histbar] (879,188) rectangle (894,220);
\draw[histbar] (898,202) rectangle (913,220);
\draw[histbar] (917,210) rectangle (932,220);
\draw[histbar] (936,215) rectangle (951,220);

\node[tiny text, anchor=south] at (811,96) {$\hat{Q}_\theta(x^\star)$};
\draw[meanline] (811,100) -- (811,220);

\draw[draw=blue!80!black, line width=0.4pt, <->] (748,206) -- (892,206);
\node[tiny text, font=\fontsize{6}{7}\selectfont\color{blue!80!black}, fill=blue!4, inner sep=1pt] at (820,206) {posterior width};

\draw[edge] (842,235) -- (842,265);

\draw[output] (765,265) rectangle (919,340);
\node[output math] at (842,292) {$\hat{Q}_\theta(x^\star)$};
\node[tiny text, font=\fontsize{7}{8}\selectfont\bfseries\color{blue!60!black}] at (842,320) {CATE estimate};

\end{tikzpicture}
    \caption{\modelname architecture. $e_X$, $e_Z$, and $e_W$ are learned structural-role embeddings; $e_{y\times a}$ encodes the factual outcome according to treatment.}
    \label{fig:model}
\end{figure}

\paragraph{Encoding the proximal setting.}

The backbone of \modelname follows the TabICLv2 architecture \citep{quTabICLv2BetterFaster2026}; we defer architectural details to \Cref{app:model_arch}. Our modifications concern the input processing which must carry the causal role of each column as the roles of $(X,Z,W)$ are not identifiable from their position or values. 
\modelname differs from regular tabular models such as TabICLv2 in two respects.  First, its target is not observed in the context: the ITE is never observed, only the factual outcome $Y=Y(A)$ is.  These observations only provide evidence about the underlying SCM and its CATE function $Q(x)$.  Second, \modelname targets the posterior distribution over the CATE, $p\bigl(Q(x^\star)\mid\mathcal{D}_N\bigr)$, rather than a distribution over an individual-level label such as the ITE.

Each query is represented by $x^\star$ only: including its proxy values $(w^\star,z^\star)$ would condition the prediction on the proxies and change the target away from the CATE $Q(x^\star)$. We add \emph{structural role embeddings},  a learnable token per feature group ($X$, $Z$, $W$), added to its column embeddings. A \emph{treatment-aware target encoder} is also implemented. It consists of two linear projections of the outcome selected by the treatment indicator $A$, so that outcomes observed under treatment and control are embedded separately (\Cref{fig:model}).

\paragraph{Posterior output \& noise-aware loss.}
\modelname uses a single distributional head to predict the posterior distribution $q_\theta(Q \mid x^\star, \mathcal{D}_N)$ directly, rather than separate heads for the two potential outcomes. We use regression-as-classification: $q_\theta$ is predicted as a histogram over $2\,000$ equal-width bins spanning $[-5, 5]$, in units of the standard deviation of the observed outcomes in context. 

The synthetic targets $\widehat Q_{i,K}$ are Monte Carlo averages over $K$ replicates. For sufficiently large $K$, their Monte Carlo error is approximately Gaussian by the central limit theorem, with variance estimated by $S_{i,K}^2$. Scoring them directly under a distributional loss would inflate the predicted variance by approximately $S_{i,K}^2$. Because $S_{i,K}^2$ is estimated, we correct for it by blurring the model rather than the target: we maximize the likelihood of $\widehat Q_{i,K}$ under the \emph{noise-convolved} predictive distribution
\begin{equation}
    \tilde{q}_\theta(\cdot \mid x_i, \mathcal{D}_N) \coloneqq q_\theta(\cdot \mid x_i, \mathcal{D}_N) \ast \mathcal{N}\bigl(0,\, S_i^2\bigr).
    \label{eq:blurred_distribution}
\end{equation}
giving, over the query points $i \in \{1, \dots, n_q\}$,
\begin{equation}
    \mathcal{L}_{\mathrm{noise\_aware}} = -\frac{1}{n_q} \sum_{i=1}^{n_q} \log \tilde{q}_\theta\bigl(\widehat Q_{i,K} \mid x_i, \mathcal{D}_N\bigr).
    \label{eq:noise_aware_loss}
\end{equation}

The Monte Carlo inflation is removed: the minimizer of \eqref{eq:noise_aware_loss} approximates the posterior over $Q$,  up to finite histogram resolution and the Gaussian approximation to the $K$ Monte Carlo noise. The closed form of \eqref{eq:blurred_distribution} for a histogram head is given in \Cref{app:noise_aware_loss}. We add a squared-error regularizer on the posterior mean, $\mathcal{L}_{\mathrm{mean}} = \frac{1}{n_q} \sum_{i=1}^{n_q} \bigl( \E_{q_\theta}[Q \mid x_i, \mathcal{D}_N] - \hat Q_K(x_i) \bigr)^2$. The total training objective is $\mathcal{L} = \mathcal{L}_{\mathrm{noise\_aware}} + \lambda_{\mathrm{mean}} \mathcal{L}_{\mathrm{mean}}$. At inference time, \modelname outputs the full CATE posterior distribution $q_\theta(Q \mid x^\star, \mathcal{D}_N)$ in a single forward pass, with point estimate $\hat Q_\theta(x^\star) = \E_{q_\theta}[Q \mid x^\star, \mathcal{D}_N]$. Details on the training are available in \Cref{app:model_training}.

\subsection{From proximal identification to Bayesian consistency}
\label{sec:bayesian_consistency}

Let $\psi$ parametrize an SCM drawn from $\Pi$. We adapt the Bayesian consistency argument of \cite{balazadehCausalPFNAmortizedCausal2025} to the CATE $Q(x;\psi)=\mu_1(x;\psi)-\mu_0(x;\psi)$, where $\mu_a(x;\psi)=\mathbb{E}_{\psi}[Y(a)\mid X=x]$, the conditional expected potential outcome (CEPO). Under the proximal identifying conditions, including completeness and existence of an outcome bridge, each CEPO is an observational functional $\mu_a(x;\psi)=\mathbb{E}_{\psi}[h_0(W,a,X)\mid X=x]$. Their difference therefore identifies $Q(x)$ and directly targeting CATE requires no change to this identification argument.

Suppose these conditions hold almost surely under the SCM prior $\Pi$. Under the measurability and integrability assumptions in Appendix~\ref{sec:consistency_result}, for each admissible fixed query $x$ and $\Pi$-almost every true SCM $\psi^\star$, the oracle Bayesian posterior mean satisfies
\begin{equation}
\label{eq:main_cate_consistency}
\overline Q_N(x) :=\int Q(x;\psi)\,\Pi(d\psi\mid\mathcal{D}_N) \xrightarrow[N\to\infty]{\mathrm{a.s.}} Q(x;\psi^\star).
\end{equation}
Theorem~\ref{theorem:ident} establishes asymptotic consistency of the Bayesian CATE target underlying \modelname: as the observational context grows, the oracle posterior concentrates on the true CATE and its mean converges to that value. This provides a theoretical foundation for the inference target that \modelname learns to approximate through Monte Carlo supervision and amortized inference. Our trained model's posterior-mean predictions inherit consistency when their approximation error relative to the oracle also vanishes.
\section{Evaluation}
\label{sec:eval}

\subsection{Evaluation setting}
\label{sec:eval_setting}

\paragraph{Methods evaluated.}

We compare \modelname against three families of methods. First, proximal estimators recover causal effects through bridge functions or conditional moment restrictions. These are the only baselines encoding the proximal identification assumptions. Second, we evaluate existing causal foundation models that assume no unmeasured confounding. Third, conventional S-, T-, and X-meta-learners provide other non-proximal backdoor baselines \citep{kunzelMetalearnersEstimatingHeterogeneous2019}. The backdoor methods are evaluated by either fitting $X$ alone or fitting $(X,W,Z)$ followed by marginalizing the proxy-conditional contrast over the proxies. Both variants return a function of $x$, but neither is generally identified as $Q(x)$ under the proximal assumptions. \Cref{app:results_baselineMethods} lists all methods and refers to their configuration and implementation details.

\paragraph{Datasets.} \Cref{app:results_semisynt} details our semi-synthetic datasets generation. The covariates $X$ and unobserved $U$ are features from real-world data. The proxies $W$ and $Z$ are then generated from $(U,X)$. Similarly, the treatment assignment $A$ and outcomes $Y(1),Y(0)$ are generated from $(U,X)$. Note that we don't use $Z$ and $W$ as direct inputs when generating the treatment and outcomes.

This semi-synthetic approach grounds the simulation on real-world features, while having control over dataset properties such as confounding strength of $U$ and proxy informativeness of $(W,Z)$ about $U$. We consider a factorial design crossing linear versus nonlinear mechanisms, low versus high latent-confounding strength, and low versus high proxy informativeness, yielding eight configurations. For each of the eight settings, we evaluate 12 datasets with three independent replicates each, giving a total of 288 evaluation instances. We set \(d_X=d_U=d_W=d_Z=5\).

\subsection{Empirical results}
\label{sec:results_performance}

\paragraph{\modelname is competitive across settings.}

\Cref{fig:semisynt_learning_curves} summarizes the performance of selected methods, with detailed results available in Tables~\ref{tab:semisynt_exhaustive_npehe_128}--\ref{tab:semisynt_exhaustive_npehe_2048}. Across all eight settings, \modelname is competitive, with its clearest advantage at larger context sizes under high confounding and low proxy reliability. Among the proximal methods, \texttt{P-learner (TabICL)} is also consistently competitive, but typically requires larger context sizes to obtain good performance. \texttt{KPV} and \texttt{PMMR} generally underperform, possibly because of the use of kernel-based bridge functions in this moderate dimension regime ($d_X=d_U=d_W=d_Z=5$). Flexible backdoor methods such as \texttt{S-learner (TabICL)} and \texttt{CausalPFN} perform surprisingly well when combined with simple proxy marginalization.  Their performance deteriorates, however, when strong confounding is combined with low proxy information.

\begin{figure}[t]
    \centering
    \includegraphics[width=\textwidth]{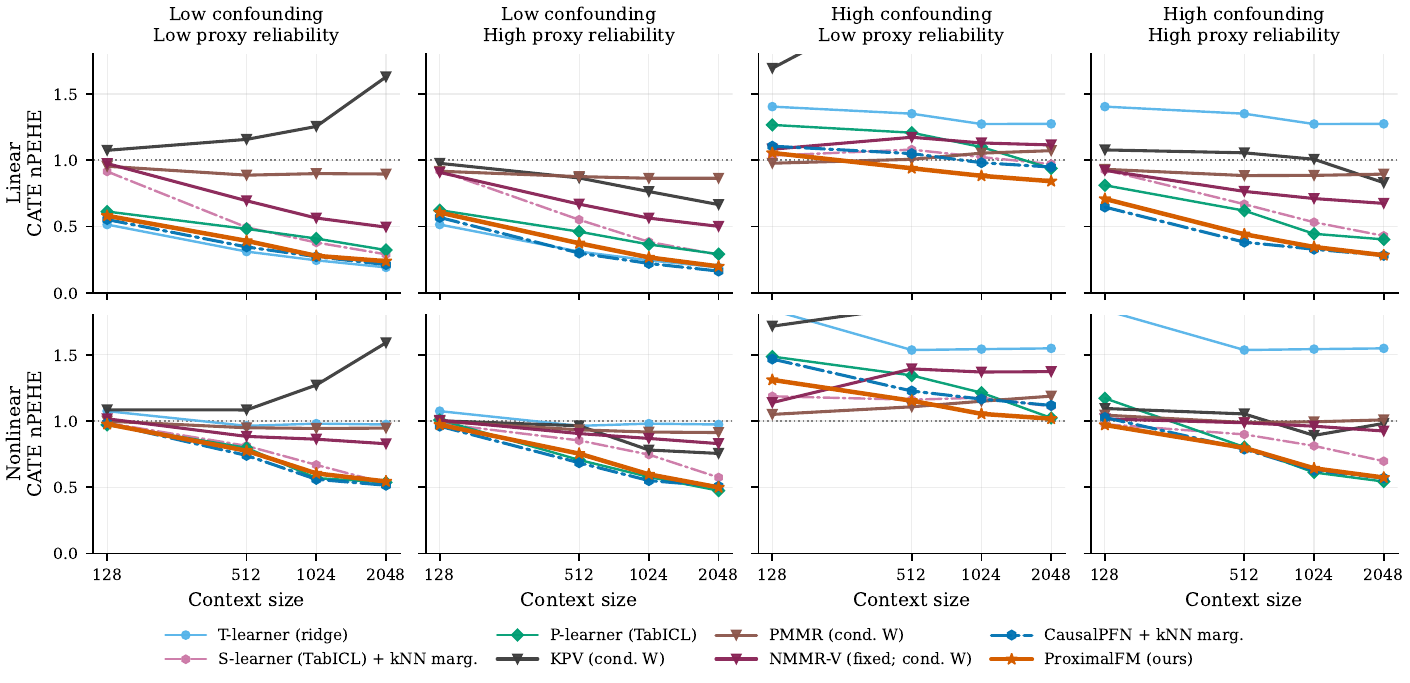}
    \caption{Median CATE nPEHE across benchmark episodes versus context size. Rows show linear and nonlinear mechanisms; columns vary latent confounding and proxy reliability. The dotted line marks the constant-ATE baseline ($\mathrm{nPEHE}=1$); lower is better.}
    \label{fig:semisynt_learning_curves}
\end{figure}

\paragraph{\modelname plateaus beyond its training range.}
We next consider the nonlinear regime with high latent confounding and low proxy reliability (\Cref{fig:semisynt_learning_curves_selection}, \Cref{tab:semisynt_extended_context_npehe}). This is a particularly relevant proximal setting: the confounding cannot be ignored, the proxy signal is weak and thus conditioning on proxies is insufficient. As the context grows beyond $n_{\mathrm{ctx}}=2\,048$, the \texttt{P-learner (TabICL)} continues to improve and becomes the strongest method, whereas \modelname improves little further and remains close to the constant-ATE baseline. A plausible explanation is a mismatch between evaluation and pre-training context sizes: \modelname was trained on contexts of at most $2\,048$ observations. To assess this, we continue training the model for limited time on episodes with contexts of up to $5\,000$ observations (\Cref{app:model_posttraining}, \texttt{ProximalFM-FT}). The adapted model shows small improvements at large context sizes, consistent with limited large-context training support contributing to the observed plateau.

\paragraph{\modelname is fast and requires no task-specific tuning.}
\modelname lies on a favourable accuracy--runtime frontier (\Cref{fig:results_speed_rank}). It achieves among the strongest CATE ranks while requiring little end-to-end runtime compared to several competing proximal methods. Moving inference from CPU to MPS further reduces its runtime without changing its predictions. \modelname does not require hyperparameter selection, simplifying deployment and making its reported runtime unaffected by the choice of a selection grid.

\begin{figure}
    \centering
    \includegraphics[width=\textwidth]{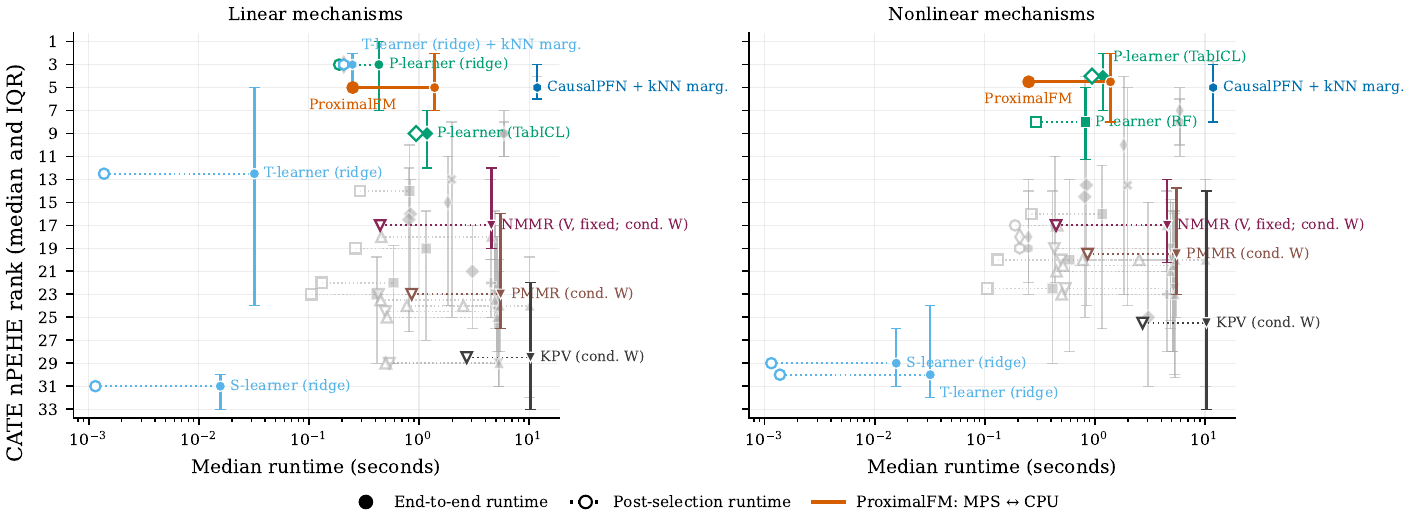}
    \caption{Accuracy and speed at context size $n=2\,048$, shown separately for linear and nonlinear mechanisms. Points show median runtime per episode against median paired CATE nPEHE rank (IQR whiskers). Filled and hollow points respectively include and exclude hyperparameter selection. The orange segment compares \modelname on CPU and MPS. More details in \Cref{app:results_speed}.
    }
    \label{fig:results_speed_rank}
\end{figure}

\paragraph{The posterior is overconfident on semi-synthetic data.}
\Cref{fig:posterior_calibration} compares nominal and empirical credible-interval coverage. On datasets sampled from the pre-training prior, coverage is close to nominal across all context sizes, and posterior uncertainty contracts steadily as the context grows. On the other hand, on the semi-synthetic datasets, the posterior is increasingly overconfident as the context size grows. This undercoverage under distribution shift is consistent with the out-of-distribution experiments from \citet{balazadehCausalPFNAmortizedCausal2025}. Notably, the posterior standard deviation continues to decrease beyond the maximum pre-training context size.

\paragraph{Proxy informativeness matters.}
The good performance of TabICL and CausalPFN in the semi-synthetic benchmark raises the question of whether flexible backdoor methods can substitute for proximal estimation. We investigate this in a setting with linear outcome and Gaussian proxy mechanisms, where we control the information that $(W,Z)$ carries about $U$ (\Cref{app:proxy_informativeness}). At $n_{\mathrm{ctx}}=2\,048$, \modelname outperforms proxy-marginalized CausalPFN throughout the sweep and proxy-adjusted linear regression (OLS) in most settings, although OLS catches up when proxies nearly reveal $U$. Well specified proximal 2SLS \citep{tchetgenIntroductionProximalCausal2020} provides a reference for what matching the estimator to this mechanism can achieve.

\paragraph{Towards a real physical evaluation.} 

We compare ATE estimations on Causal Chambers, a physical system with a known causal graph \citep{causalchambers} (\Cref{app:causal_chambers}). Backdoor meta-learners outperform proximal learners, including \modelname. We interpret this cautiously: \modelname is designed for CATE, this benchmark has only twelve configurations of one one-dimensional system and moderate sample size ($1\,000$ observations each).
\section{Conclusion}
\label{sec:conclusion}

We introduced \modelname, an amortized estimator for CATE under the proximal learning assumptions. Existing causal foundation models address settings such as backdoor, instrumental-variable, and frontdoor identification, but are not designed to exploit proxies for proximal identification. Rather than solving an ill-posed bridge equation separately for each new dataset, \modelname learns from synthetic episodes to estimate CATE from observational data, with regularization implied by the prior. Its CATE posterior also provides uncertainty estimates, though credible intervals can be overconfident beyond the training prior. Together with models for other identification settings, it gives practitioners a fast way to compare estimates under different plausible identifying assumptions.

\subsection*{Acknowledgements}

AK acknowledges funding support from His Majesty’s Government in the development of
this research. JZ is funded by the Pioneer Center for Statistical and computational Methods for Advanced Research to Transform Biomedicine (SMARTbiomed), DNRF grant number P4 and a 2026 Jesus College, University of Oxford, SCR Major Research Grant. RS is partially funded by the EPSRC Open Fellowship EP/W024330/1 and, by the Causality in Healthcare AI (CHAI) Hub, EPSRC grant number EP/Y028856/1. AL was funded through a Patient-Centered Outcomes Research Institute (PCORI) Award (ME-2024C2-
39990).

\printbibliography[title={References}]

\clearpage

\begin{appendices}
\numberwithin{figure}{section}
\numberwithin{table}{section}
\addtocontents{toc}{\protect\etocsettocdepth{3}}
\etocsettocstyle{\section*{Appendix}}{}
\tableofcontents
\newpage

\section{Consistency result}
\label{sec:consistency_result}

We adapt the observational-equivalence argument of \cite{balazadehCausalPFNAmortizedCausal2025} to the CATE functional. The result characterizes the oracle Bayesian learner under the SCM prior $\Pi$ used in the main text. Its application to \modelname is elaborated in Section~\ref{sec:consistency_scope}.

\subsection{Setup and identification}

Let $\psi$ range over a standard Borel parameter space $\Psi$. Each SCM induces a joint law $P^\psi$ of the observed variables and both potential outcomes, with $A\in\{0,1\}$ and $Y=Y(A)$. Write $O=(X,Z,W,A,Y)$ for the observed variables, taking values in a standard Borel space $\mathcal{O}$, and $R(\psi):=P^\psi_{\mathrm{obs}}$ for their law. Under the joint Bayesian model, first draw $\psi\sim\Pi$ and then draw $O_1,O_2,\ldots\mid\psi$ independently from $R(\psi)$. The observational context is $\mathcal{D}_N=(O_1,\ldots,O_N)$.

\begin{assumption}[Measurability]
\label{ass:measurability}
The map $\psi\mapsto P^\psi$ is measurable, and the image $\mathcal{R}:=\{R(\psi):\psi\in\Psi\}$ is a Borel subset of $\mathcal{P}(\mathcal{O})$, the standard Borel space of probability laws on $\mathcal{O}$. We choose jointly measurable versions of
\begin{equation}
\label{eq:cate_functional}
\mu_a(x;\psi) :=\mathbb{E}_\psi[Y(a)\mid X=x],\qquad Q(x;\psi):=\mu_1(x;\psi)-\mu_0(x;\psi).
\end{equation}
\end{assumption}

Throughout what follows, fix an admissible query $x$ at which these conditional means are defined and the stated assumptions apply. Our results apply pointwise in $x$ and we treat the query as fixed (if instead a query is sampled from an SCM-dependent covariate distribution and provides additional evidence, the same argument uses the conditional prior $\Pi(d\psi\mid X_{\mathrm{query}}=x)$, assuming the context and query are conditionally independent given $\psi$ and the assumptions below hold for that prior.)

\begin{assumption}[Integrability]
\label{ass:integrability}
At the query under consideration,
\begin{equation}
\label{eq:integrability}
\int_\Psi |Q(x;\psi)|\,\Pi(d\psi)<\infty.
\end{equation}
\end{assumption}
Integrability of both CEPOs is sufficient, since $|Q(x;\psi)|\leq|\mu_1(x;\psi)|+|\mu_0(x;\psi)|$.

\begin{definition}[Observational equivalence and CATE-identifiability]
\label{def:quotient_space}
\label{def:cate_ident}
Two SCMs are observationally equivalent when $R(\psi)=R(\psi')$. The prior is CATE-identifiable at $x$ if there is a measurable map $F_{x}:\mathcal{R}\to\mathbb{R}$ such that $Q(x;\psi)=F_{x}(R(\psi))$ for $\Pi$-almost every $\psi$.
\end{definition}

\begin{definition}[CEPO-identifiability]
\label{def:cepo-ident}
The prior is CEPO-identifiable at $x$ if, for each $a\in\{0,1\}$, there is a measurable $F_{a,x}$ with $\mu_a(x;\psi)=F_{a,x}(R(\psi))$ for $\Pi$-almost every $\psi$.
\end{definition}
CEPO-identifiability implies CATE-identifiability by taking $F_{x}=F_{1,x}-F_{0,x}$. The converse is unnecessary and need not hold, since a common unidentified component of the two means can cancel in their difference.

The oracle CATE posterior and its mean are
\begin{equation}
\label{eq:cate_posterior}
\begin{aligned}
\Pi^Q(B\mid x,\mathcal{D}_N) &:=\int_\Psi \mathbf{1}\{Q(x;\psi)\in B\} \,\Pi(d\psi\mid\mathcal{D}_N),\qquad B\in\mathcal{B}(\mathbb{R}),\\
\overline Q_N(x) &:=\int_\Psi Q(x;\psi)\,\Pi(d\psi\mid\mathcal{D}_N).
\end{aligned}
\end{equation}

\subsection{Consistency of the oracle CATE posterior}

Working directly with $\mathcal{R}$ represents observational equivalence classes by their common law. Let $\nu$ be the distribution of $R(\psi)$ under $\Pi$, and choose a measurable version of
\begin{equation}
\label{eq:cate_under_equ}
g_{x}(r):= \mathbb{E}_{\Pi}[Q(x;\psi)\mid R(\psi)=r].
\end{equation}

\begin{lemma}[Limit within observational equivalence classes]
\label{lemma:ident}
Under Assumptions~\ref{ass:measurability}--\ref{ass:integrability}, for $\Pi$-almost every $\psi^\star$, i.i.d. observations from $R(\psi^\star)$ satisfy
\begin{equation}
\label{eq:cate_class_limit}
\overline Q_N(x)\xrightarrow[N\to\infty]{\mathrm{a.s.}} g_{x}(R(\psi^\star)).
\end{equation}
\end{lemma}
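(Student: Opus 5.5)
The approach is Doob's martingale convergence theorem combined with consistency of the empirical measure. Under the joint Bayesian model, $\overline Q_N(x)=\E_\Pi[Q(x;\psi)\mid\mathcal{D}_N]$. By Assumption~\ref{ass:integrability}, $Q(x;\psi)\in L^1$ of the joint law. The filtration $\mathcal{F}_N=\sigma(O_1,\dots,O_N)$ increases to $\mathcal{F}_\infty=\sigma(O_1,O_2,\dots)$. Lévy's upward theorem therefore gives
\[
\overline Q_N(x)\to \E_\Pi[Q(x;\psi)\mid \mathcal{F}_\infty]
\]
almost surely under the joint law.

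The next step identifies this limit with $g_x(R(\psi))$. I will show that $\sigma(R(\psi))$ and $\mathcal{F}_\infty$ coincide up to null sets of the joint law.

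\emph{First inclusion.} Since $\mathcal{O}$ is standard Borel, Varadarajan's theorem says the empirical measures $\frac1N\sum_{i\le N}\delta_{O_i}$ converge weakly to $R(\psi)$ almost surely. Hence $R(\psi)$ equals, almost surely, a measurable function of the sequence. A concrete choice is the weak limit of the empirical measures where it exists, and a fixed law otherwise. Measurability of this choice uses a countable convergence-determining class on $\mathcal{P}(\mathcal{O})$. So $\sigma(R(\psi))$ is contained in $\mathcal{F}_\infty$ modulo null sets.

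\emph{Second inclusion.} Conditionally on $\psi$, the observations are i.i.d.\ with law $R(\psi)$. So the conditional law of $\mathcal{F}_\infty$ given $\psi$ depends on $\psi$ only through $R(\psi)$. Consequently $\psi$ and $\mathcal{F}_\infty$ are conditionally independent given $R(\psi)$.

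Combining the two inclusions gives
\[
\E[Q\mid\mathcal{F}_\infty]=\E\bigl[\E[Q\mid \mathcal{F}_\infty, R(\psi)]\mid\mathcal{F}_\infty\bigr]=\E\bigl[\E[Q\mid R(\psi)]\mid\mathcal{F}_\infty\bigr]=g_x(R(\psi))
\]
almost surely. The middle equality uses conditional independence. The last equality uses that $g_x(R(\psi))$ is $\mathcal{F}_\infty$-measurable up to null sets. Measurability of $g_x$ on $\mathcal{R}$ comes from Assumption~\ref{ass:measurability}, because $\mathcal{R}$ is Borel in a standard Borel space.

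Finally, I disintegrate. The joint law is $\int P_\psi^{\otimes\infty}\,\Pi(d\psi)$. The event $\{\overline Q_N(x)\to g_x(R(\psi))\}$ has full joint probability. By Fubini, for $\Pi$-a.e.\ $\psi^\star$ its $\psi^\star$-section has probability one under $R(\psi^\star)^{\otimes\infty}$. On that section, $\overline Q_N(x)$ is computed from the data alone, and the posterior is a fixed regular version, which exists since the spaces are standard Borel. This is exactly \eqref{eq:cate_class_limit}.

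The main obstacle is the measurability bookkeeping. Two points need care. First, the event of convergence must be jointly measurable in $(\psi,\text{data})$. Second, the "$R(\psi)$ is a measurable function of the data" map must be constructed on standard Borel spaces. The martingale step itself is routine.
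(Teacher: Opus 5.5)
Your proof is correct. It uses the same two ingredients as the paper, assembled in the opposite order and with the key black box opened.

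\textbf{The paper's route.} It first uses, at each finite $N$, the fact that the data depend on $\psi$ only through $R(\psi)$. This gives the identity $\overline Q_N(x)=\E[g_x(R)\mid\mathcal{D}_N]$. It then applies Doob's consistency theorem (Theorem~2.2 of \cite{miller2018detailedtreatmentdoobstheorem}) to the quotient model indexed by $r\in\mathcal{R}$, which is identifiable by construction. The disintegration is quoted from Miller's Corollary~2.3.

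\textbf{Your route.} You run L\'evy's upward theorem on $Q(x;\psi)$ directly. You then identify the limit $\E[Q\mid\mathcal{F}_\infty]$ by hand. Varadarajan's theorem supplies the fact that $R(\psi)$ is almost surely a measurable function of the infinite sample. That is exactly what identifiability contributes inside Doob's theorem, so you are proving the cited result inline.

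\textbf{What each buys.} The paper's version is shorter and delegates the measurability bookkeeping to the reference. Its finite-$N$ identity is also slightly more informative: at every sample size, the posterior mean sees $Q$ only through its within-class average $g_x$. Your version is self-contained and makes transparent why the quotient model is consistent: the empirical measure recovers the law. It also needs only that $R$ is a measurable map into $\mathcal{P}(\mathcal{O})$. The paper needs $\mathcal{R}$ to be Borel in order to treat it as a standard Borel parameter space for Miller's theorem. Your argument does not really use this, since $g_x$ can be chosen Borel on all of $\mathcal{P}(\mathcal{O})$ by Doob--Dynkin.

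\textbf{One correction to your framing.} The announced claim that $\sigma(R(\psi))$ and $\mathcal{F}_\infty$ coincide up to null sets is false. For instance, $O_1$ is not $\sigma(R(\psi))$-measurable modulo null sets unless the laws $R(\psi)$ are almost surely point masses. Your ``second inclusion'' is therefore not an inclusion at all. What you actually establish, and all that your displayed chain of equalities uses, is $\sigma(R(\psi))\subseteq\mathcal{F}_\infty$ modulo null sets together with $\psi\indep\mathcal{F}_\infty\mid R(\psi)$. State it that way.
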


\noindent\textit{Proof.} Conditional on $R$, the law of the entire observation sequence is $R^{\otimes\infty}$ and does not depend further on $\psi$. Thus
\begin{equation}
\label{equ:cond_independent}
\overline Q_N(x) =\mathbb{E}[Q(x;\psi)\mid\mathcal{D}_N] =\mathbb{E}[g_{x}(R)\mid\mathcal{D}_N].
\end{equation}
Moreover, conditional Jensen's inequality gives $\int|g_{x}|\,d\nu\leq\int|Q(x;\psi)|\,d\Pi<\infty$. The model indexed by $r\in\mathcal{R}$, with sampling law $r$, is measurable and identifiable by construction. Theorem~2.2 of \cite{miller2018detailedtreatmentdoobstheorem} therefore gives $\mathbb{E}[g_{x}(R)\mid\mathcal{D}_N] \to g_{x}(R)$ almost surely under the joint Bayesian law. Intersecting the probability-one events on which \eqref{equ:cond_independent} holds for each $N$ and disintegrating with respect to $\psi$ proves the stated parameterwise conclusion, as in Corollary~2.3 of \cite{miller2018detailedtreatmentdoobstheorem}.\hfill$\square$

\begin{theorem}[CATE identification and oracle consistency]
\label{theorem:ident}
Under Assumptions~\ref{ass:measurability}--\ref{ass:integrability}, the following are equivalent at the fixed query $x$:
\begin{enumerate}
\item The prior is CATE-identifiable at $x$.
\item For $\Pi$-almost every $\psi^\star$, i.i.d. observations from $R(\psi^\star)$ satisfy
\begin{equation}
\label{eq:oracle_cate_consistency}
\overline Q_N(x)\xrightarrow[N\to\infty]{\mathrm{a.s.}} Q(x;\psi^\star).
\end{equation}
\end{enumerate}
Under these conditions the exact CATE posterior also concentrates: for $\Pi$-almost every $\psi^\star$ and every $\epsilon>0$,
\begin{equation}
\label{eq:cate_concentration}
\Pi^Q\!\left(\{q:|q-Q(x;\psi^\star)|>\epsilon\} \mid x,\mathcal{D}_N\right)\xrightarrow[N\to\infty]{\mathrm{a.s.}}0 .
\end{equation}
\end{theorem}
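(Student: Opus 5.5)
The plan is to reduce everything to Lemma~\ref{lemma:ident}, which already gives $\overline Q_N(x)\to g_x(R(\psi^\star))$ almost surely for $\Pi$-almost every $\psi^\star$. The equivalence then becomes the statement that $g_x(R(\psi))=Q(x;\psi)$ for $\Pi$-almost every $\psi$ if and only if the prior is CATE-identifiable. The concentration claim \eqref{eq:cate_concentration} is handled separately by applying the same Doob-type argument to bounded functionals.

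\emph{(1)$\Rightarrow$(2).} Suppose $Q(x;\psi)=F_x(R(\psi))$ for $\Pi$-a.e.\ $\psi$. Then $Q(x;\psi)$ is $\sigma(R(\psi))$-measurable up to a null set, and it is integrable by Assumption~\ref{ass:integrability}. Hence $g_x(R(\psi))=\mathbb{E}_\Pi[Q(x;\psi)\mid R(\psi)]=F_x(R(\psi))=Q(x;\psi)$ $\Pi$-a.s. Substituting into \eqref{eq:cate_class_limit} and intersecting the two $\Pi$-full sets of $\psi^\star$ gives \eqref{eq:oracle_cate_consistency}.

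\emph{(2)$\Rightarrow$(1).} Take $F_x:=g_x$, which is measurable on $\mathcal{R}$ by construction. For $\Pi$-a.e.\ $\psi^\star$, Lemma~\ref{lemma:ident} and (2) give two almost-sure limits of the same sequence $\overline Q_N(x)$ under the law $R(\psi^\star)^{\otimes\infty}$. This law is a probability measure, so the event where both limits hold is nonempty. By uniqueness of limits, the deterministic numbers $g_x(R(\psi^\star))$ and $Q(x;\psi^\star)$ must coincide. Hence $Q(x;\psi)=g_x(R(\psi))$ $\Pi$-a.s., which is CATE-identifiability. The care needed here is only that both null sets are in $\psi^\star$ and that the limits are constants, so no further joint-measurability issues arise.

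\emph{Concentration.} I would fix $\epsilon>0$ and use the bounded functional $\phi_\epsilon(\psi)=\mathbf{1}\{|Q(x;\psi)-q^\star|>\epsilon\}$. The difficulty is that $q^\star=Q(x;\psi^\star)$ depends on the true parameter, so Lemma~\ref{lemma:ident} cannot be applied to $\phi_\epsilon$ directly. To get around this, I would apply Doob's theorem (Theorem~2.2 and Corollary~2.3 of \cite{miller2018detailedtreatmentdoobstheorem}) to the countable family $\mathbf{1}\{Q(x;\psi)\le t\}$, $t\in\mathbb{Q}$. Under identifiability, each of these is a.s.\ a measurable function of $R(\psi)$. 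Doob then gives that, for $\Pi$-a.e.\ $\psi^\star$ and simultaneously for all rational $t$,
\[
\Pi^Q\bigl((-\infty,t]\mid x,\mathcal{D}_N\bigr)\to\mathbf{1}\{q^\star\le t\}\quad\text{a.s.}
\]
Taking rationals $t_1\in(q^\star-\epsilon,q^\star)$ and $t_2\in(q^\star,q^\star+\epsilon)$, the posterior mass of $(-\infty,t_1]\cup(t_2,\infty)$ tends to $0$. This set contains $\{|q-q^\star|>\epsilon\}$, which yields \eqref{eq:cate_concentration}.

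An alternative route would derive concentration from the mean convergence via posterior variance, but that needs second moments, which Assumption~\ref{ass:integrability} does not provide. The countable-threshold argument avoids this, and I expect it to be the main technical step: the countable index set is what allows the exceptional null sets to be handled uniformly.
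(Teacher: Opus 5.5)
Your proposal is correct and follows essentially the same route as the paper. Both directions of the equivalence go through Lemma~\ref{lemma:ident}: for (1)$\Rightarrow$(2) you use $g_x=F_x$ $\nu$-a.e., and for (2)$\Rightarrow$(1) you take $g_x$ itself as the identifying map via uniqueness of almost-sure limits; the concentration claim is obtained exactly as in the paper, by applying Doob's theorem to the countable family $r\mapsto\mathbf{1}\{F_x(r)\le t\}$, $t\in\mathbb{Q}$, and sandwiching $Q(x;\psi^\star)$ between rational thresholds within $\epsilon$.
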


\noindent\textit{Proof.} If $Q(x;\psi)=F_{x}(R(\psi))$ almost surely, then $g_{x}=F_{x}$ $\nu$-almost everywhere. Intersecting the pullback of this set with the full-prior-measure set in Lemma~\ref{lemma:ident} proves consistency. Conversely, consistency and Lemma~\ref{lemma:ident} imply $Q(x;\psi)=g_{x}(R(\psi))$ for $\Pi$-almost every $\psi$. The measurable function $g_{x}$ therefore witnesses CATE-identifiability.

For posterior concentration, apply the same Doob argument to the bounded functions $r\mapsto\mathbf{1}\{F_{x}(r)\leq t\}$ for all rational $t$. Intersecting the resulting countably many full-measure sets gives convergence of the posterior probabilities of these half-lines to their values at $Q(x;\psi^\star)$. For any $\epsilon>0$, choose rational thresholds on either side of $Q(x;\psi^\star)$, each within $\epsilon$ of it. The posterior mass outside these thresholds tends to zero, proving \eqref{eq:cate_concentration}.\hfill$\square$

\subsection{Proximal identification}

Suppose the proximal assumptions hold $\Pi$-almost surely, including the completeness and bridge-existence conditions needed for the conditional proximal g-formula at the fixed query $x$. For an appropriate outcome bridge $h_0$ and the chosen identifying versions of the conditional means, the formula yields
\begin{equation}
\label{eq:proximal_cepo_cate}
\begin{aligned}
\mu_a(x;\psi) &=\mathbb{E}_{\psi}[h_0(W,a,x) \mid X=x],\\
Q(x;\psi) &=\mathbb{E}_{\psi}[h_0(W,1,x) -h_0(W,0,x) \mid X=x].
\end{aligned}
\end{equation}
The right-hand sides are identified by the observational law as their measurable representations give CEPO- and hence CATE-identifiability. Theorem~\ref{theorem:ident} consequently applies. Equivalently, when both CEPO posterior means are consistent and integrable, linearity gives
\begin{equation}
\label{eq:cepo_cate_corollary}
\overline Q_N(x) =\mathbb{E}[\mu_1(x;\psi)\mid\mathcal{D}_N] -\mathbb{E}[\mu_0(x;\psi)\mid\mathcal{D}_N] \xrightarrow[N\to\infty]{\mathrm{a.s.}} Q(x;\psi^\star).
\end{equation}
Note that this identity does not require the trained model to estimate the CEPO.

\subsection{Application to \modelname}
\label{sec:consistency_scope}

For independent downstream simulations from the SCM's conditional potential-outcome law given $X=x$, independent of the context conditional on $\psi$, the target $\widehat Q_K(x)=K^{-1}\sum_{k=1}^K(Y_k(1)-Y_k(0))$ satisfies
\begin{equation}
\label{eq:mc_cate_unbiased}
\mathbb{E}[\widehat Q_K(x)\mid\psi,\mathcal{D}_N] =Q(x;\psi),\qquad \mathbb{E}[\widehat Q_K(x)\mid\mathcal{D}_N] =\overline Q_N(x).
\end{equation}
With finite second moments, population squared-error regression on these labels therefore targets the oracle posterior mean even at finite $K$. Recovering the full posterior from noisy labels additionally requires a correctly specified and identifiable measurement-error model, or a limit in which Monte Carlo error vanishes. Our theoretical result does not establish exact recovery by the finite-$K$ Gaussian noise-aware objective or its combination with a mean penalty but is instead concerned with an oracle learner.

To make this explicit, let $\widehat Q_{\theta,N}(x)$ be the mean predicted by \modelname. Then
\begin{equation}
\label{eq:amortization_error}
\begin{aligned}
|\widehat Q_{\theta,N}(x)-Q(x;\psi^\star)| \leq{}&|\widehat Q_{\theta,N}(x)-\overline Q_N(x)|\\
&+|\overline Q_N(x)-Q(x;\psi^\star)|.
\end{aligned}
\end{equation}
Theorem~\ref{theorem:ident} controls the second term. Consistency of the learned estimator also requires the first term (comprising supervision, representation, and training errors) to vanish as context size grows, which extends beyond the scope of our theoretical contributions and would require substantially more developments in the theory of in-context learning.

\clearpage
\section{Details on the proximal prior}
\label{app:proximal_prior}

\subsection{Step-by-step procedure}
\label{app:prior_procedure}

We generate each synthetic dataset from an independently sampled structural causal model (SCM) compatible with the proximal DAG. The full routine is sketched in \Cref{fig:prior_pipeline} and specified in \Cref{alg:prior}; this section walks through it and fixes notation used across \Cref{app:proximal_prior}.

Throughout, a dataset consists of $T$ units (the sequence length) and the target estimand is the conditional average treatment effect $Q(x)=\E[Y(1)-Y(0)\mid X=x]$. Because $Q(x)$ marginalises over all unit-level heterogeneity beyond $X$, we approximate the conditional expected potential outcomes (CEPOs) $\mu_a(x)=\E[Y(a)\mid X=x]$ by Monte Carlo integration: every unit is replicated $K$ times, sharing its covariate value $x$ while drawing fresh exogenous noise—and therefore fresh latent-confounder and proxy realizations—and the potential outcomes are averaged over the $K$ replicates.

\paragraph{Once per dataset: structure.}
We first sample the latent and observed dimensionalities, drawing the confounder dimension $d_U$ from a law skewed toward small values and then the proxy dimensions $d_W, d_Z$ conditionally on $d_U$ so as to respect the completeness heuristic $d_W,d_Z\ge d_U$, with the covariate dimension $d_X$ drawn independently and uniformly; the sampling laws and the resulting induced marginals are described below in \Cref{app:prior_dimensions}. We then instantiate the causal mechanisms $f_X, f_U, f_W, f_Z, f_A, f_Y$ as independently initialised random multilayer perceptrons in the style of TabICL \citep{quTabICLTabularFoundation2025}, together with an auxiliary confounding-duplication network $f_{\mathrm{aug}}$ (described below), and sample the per-dataset hyperparameters (network depth, width, activations, noise scales; see \Cref{app:prior_hyperparams}). Mechanisms, dimensions and hyperparameters are held fixed across all $T$ units and their $K$ replicates, so the entire dataset is drawn from the \emph{same} SCM.

\paragraph{Per replicate: the topological pass.}
The baseline covariates are generated once,
\begin{equation}
  X = \mathrm{Corrupt}\big(f_X(\varepsilon_X)\big) \in \R^{T\times d_X},
  \label{eq:x}
\end{equation}
and each row is tiled $K$ times. The remaining variables are sampled in topological order over all $T\cdot K$ rows in a single vectorised pass, with mutually independent exogenous noise $\varepsilon_{(\cdot)}$ drawn afresh for every row:
\begin{align}
  U &= \mathrm{Corrupt}\big(f_U(\tilde{X}_U,\,\varepsilon_U)\big), &
  W &= \mathrm{Corrupt}\big(f_W(\tilde{U},\tilde{X}_W,\,\varepsilon_W)\big), &
  Z &= \mathrm{Corrupt}\big(f_Z(\tilde{U},\tilde{X}_Z,\,\varepsilon_Z)\big),
  \label{eq:uwz}
\end{align}

where $\tilde{(\cdot)}$ denotes standardisation to zero mean and unit variance and $\mathrm{Corrupt}(\cdot)$ applies the feature postprocessing of \Cref{app:prior_postprocess} (optional categorisation, outlier clamping, and scaling). The exogenous-noise blocks $\varepsilon_{(\cdot)}$ are sized proportionally to each mechanism's non-noise input width via a per-dataset share parameter (see \Cref{tab:prior_sampled_hp}), so the noise level does not shrink as signal width grows. Crucially, $Z$ is excluded from the inputs of $f_W$ and $f_Y$, and $W$ is excluded from those of $f_Z$ and $f_A$, so the proxy exclusion restrictions of \Cref{as:proxy_exclusion} hold by construction for any realisation of the random mechanisms.

Notation: $\tilde{X}_U, \tilde{X}_W, \tilde{X}_Z, \tilde{X}_A, \tilde{X}_Y$ denote per-mechanism random subsets of $\tilde{X}$'s columns, each drawn independently with its own per-dataset keep probability (see \Cref{app:prior_mechanisms}); every mechanism that reads $X$, including $f_U$, sees one of these subsets rather than the full block. Likewise, we'll use the notation $\tilde{W}_Y\subseteq\tilde{W}$ and $\tilde{Z}_A\subseteq\tilde{Z}$ to denote random column subsets for the direct proxy edges.

\paragraph{Mechanism wiring: confounding control and edge subsampling.}
Three per-dataset knobs then set which columns actually feed each mechanism. First, to control how strongly the confounder influences each of $A$ and $Y$ -- independently of $d_U$, which in a random MLP would otherwise pin the confounding to $U$'s accidental share of the input width -- we target a share $s_A,s_Y\sim\mathrm{Beta}(1.2,2.8)$ of each mechanism's inputs to be $U$-derived and convert it to a column count $k=\max(2,\operatorname{round}(\tfrac{s_c}{1-s_c}n))$ with $s_c=\min(s,0.75)$ and $n$ the width of the non-$U$ inputs. The mechanism is then fed either a subsample of $U$'s columns (when $k\le d_U$) or all of $U$ plus $k-d_U$ columns of a shared nonlinear re-encoding $U^{\mathrm{aug}}=f_{\mathrm{aug}}(\tilde U)$ (when $k>d_U$); we write $U^{[A]},U^{[Y]}$ for these confounding-controlled inputs. We detail this knob in \Cref{app:prior_mechanisms}.

Second and third, the direct covariate and proxy edges are made per-column optional, mirroring each other. Every mechanism downstream of $X$ -- i.e.\ $f_U, f_W, f_Z, f_A, f_Y$ -- receives an independent random subset $\tilde{X}_U,\tilde{X}_W,\tilde{X}_Z,\tilde{X}_A,\tilde{X}_Y\subseteq\tilde{X}$, each column entering with its own per-dataset keep probability  $\sim\mathrm{Unif}[0,1]$. Similarly, each column of $W$ (resp.\ $Z$) enters $f_Y$ (resp.\ $f_A$) with its own per-dataset keep probability $\sim\mathrm{Unif}[0,1]$, giving subsets $\tilde{W}_Y\subseteq\tilde{W}$, $\tilde{Z}_A\subseteq\tilde{Z}$. These masks alter the random-function prior without adding graph edges: they only remove direct \(X\), \(W\), or \(Z\) inputs, while the \(U\to W\) and \(U\to Z\) inputs and the completeness heuristics remain unchanged.

\paragraph{Potential outcomes, heterogeneity remix, and the observed dataset.}
The outcome mechanism returns both potential outcomes $Y_0^{\mathrm{raw}},Y_1^{\mathrm{raw}}$ from a shared forward pass, which are remixed via a per-dataset heterogeneity dial $\gamma\sim\mathrm{Unif}[0,1]$ into $Y_0,Y_1$, ranging continuously from the raw, fully heterogeneous effect ($\gamma=1$) to a single constant effect ($\gamma=0$); we detail and analyse this construction in \Cref{app:prior_outcome}. Averaging the remixed potential outcomes over the $K$ replicates of unit $i$ gives the CEPO estimates $\hat\mu_a(x_i)=\tfrac{1}{K}\sum_{k=1}^K y_{i,k}(a)$, and hence the CATE target $\hat Q(x_i)=\hat\mu_1(x_i)-\hat\mu_0(x_i)$. To form the observed dataset, one replicate $k_i\sim\mathrm{Unif}\{1,\dots,K\}$ is retained per unit, yielding the tuple $\big(x_i, w_{i,k_i}, z_{i,k_i}, a_{i,k_i}, y_{i,k_i}\big)$ with $y_{i,k_i}=y_{i,k_i}(a_{i,k_i})$. The transformer sees only these observational components; the CEPO targets $\hat\mu_0,\hat\mu_1$ are used purely as training targets and are never observed.

\paragraph{Proximal conditions.}
The construction enforces latent ignorability and the proxy exclusion restrictions of \Cref{as:latent_ignorability,as:proxy_exclusion} through the mechanism input restrictions and independent exogenous noises. Positivity (\Cref{as:positivity}) is enforced by the clipped treatment propensity (\Cref{app:prior_treatment}). Completeness (\Cref{as:completeness}) cannot be guaranteed for arbitrary nonlinear mechanisms; instead, the prior imposes numeric and categorical proxy-profile heuristics (\Cref{app:prior_dimensions,app:prior_postprocess}). Empirical diagnostics of the resulting overlap, exclusion restrictions, confounding strength, and proxy relevance are collected in \Cref{app:prior_audit}.

\begin{algorithm}[t]
\caption{Generation of one proximal dataset.}
\label{alg:prior}
\begin{algorithmic}[1]
\Require sequence length $T$, Monte Carlo count $K$, hyperparameters.
\State sample $d_U$ from \eqref{eq:dU_triangular}; sample $d_W,d_Z,d_X$ from \eqref{eq:proxy_dims}
\State instantiate random MLP mechanisms $f_X,f_U,f_W,f_Z,f_A,f_Y,f_{\mathrm{aug}}$
\State sample per-dataset hyperparameters: noise shares, confounding shares $s_A,s_Y$, edge keep-probabilities, overlap knobs $s,b$, $\gamma\sim\mathrm{Unif}(0,1)$
\Statex\vspace{2pt}
\Repeat
  \State $X \gets \mathrm{Corrupt}(f_X(\varepsilon_X)) \in \R^{T\times d_X}$, tiled $K$ times and standardised to $\tilde X$
  \State subsample covariate inputs $\tilde X_U,\tilde X_W,\tilde X_Z,\tilde X_A,\tilde X_Y\subseteq\tilde X$
  \State draw independent noise $\varepsilon_U,\varepsilon_W,\varepsilon_Z,\varepsilon_A,\varepsilon_Y$ over all $TK$ rows
  \State $U \gets \mathrm{Corrupt}(f_U(\tilde X_U,\varepsilon_U))$, standardised to $\tilde U$
  \State $W \gets \mathrm{Corrupt}(f_W(\tilde U,\tilde X_W,\varepsilon_W))$, standardised to $\tilde W$ \Comment{$Z\not\to f_W$}
  \State $Z \gets \mathrm{Corrupt}(f_Z(\tilde U,\tilde X_Z,\varepsilon_Z))$, standardised to $\tilde Z$ \Comment{$W\not\to f_Z$}
  \State subsample direct-proxy inputs $\tilde Z_A\subseteq\tilde Z$ and $\tilde W_Y\subseteq\tilde W$
  \State form $U^{[A]},U^{[Y]}$ from $\tilde U$ and $U^{\mathrm{aug}}=f_{\mathrm{aug}}(\tilde U)$ via confounding shares $s_A,s_Y$
  \State $\ell \gets f_A(U^{[A]},\tilde X_A,\tilde Z_A,\varepsilon_A)$;\quad $\tilde\ell \gets (\ell-\bar\ell)/(\operatorname{sd}(\ell)+10^{-8})$
  \State $\pi \gets \operatorname{clip}\big(\sigma(s\,\tilde\ell+b),\,\epsilon,\,1-\epsilon\big)$
  \State $A \sim \mathrm{Bernoulli}(\pi)$
  \State $(Y_0^{\mathrm{raw}},Y_1^{\mathrm{raw}}) \gets f_Y(U^{[Y]},\tilde X_Y,\tilde W_Y,\varepsilon_Y)$
  \State draw $\alpha\sim\mathrm{Unif}(0,1)^{TK}$; remix to $(Y_0,Y_1)$ via \eqref{eq:remix}
  \State $\hat\mu_a(x_i) \gets \tfrac1K\sum_{k=1}^K y_{i,k}(a)$ for $a\in\{0,1\}$
  \State select $k_i\sim\mathrm{Unif}\{1,\dots,K\}$ per unit; keep observed $\{x_i,w_i,z_i,a_i,y_i\}$
  \State remove constant columns from $X,U,W,Z$; update active dims $d_X,d_U,d_W,d_Z$ \label{line:constant_column_filter}
\Until{$d_X,d_W,d_Z>0$,  $\min(d_W,d_Z)\ge d_U>0$ and categorical dominance of $W,Z$ over $U$.}
\Statex\vspace{2pt}
\State \Return observed $\{x_i,w_i,z_i,a_i,y_i\}_{i=1}^T$ and targets $\{\hat\mu_0(x_i),\hat\mu_1(x_i)\}_{i=1}^T$
\end{algorithmic}
\end{algorithm}

\begin{figure}[ht]
  \centering
  \begin{adjustbox}{max totalsize={1.2\linewidth}{0.78\textheight}}
    \input{assets/prior/diagram_full}
  \end{adjustbox}
  \caption{Detailed diagram of the proximal prior's per-dataset generation pipeline.}
  \label{fig:prior_pipeline}
\end{figure}

\FloatBarrier
\subsection{Generative model components}
\label{app:prior_model}

\subsubsection{Parameters of the prior}
\label{app:prior_hyperparams}

Three sets of parameters govern the prior. \emph{Structural} parameters (\Cref{tab:prior_structural_hp}) fix the shape of a dataset -- the sampling laws of the four block dimensions, the number of rows, and the number of Monte Carlo replicates behind the $\mu_0/\mu_1$ targets -- and are the reference point for the dimension analysis of \Cref{app:prior_dimensions}. \emph{Fixed} hyperparameters (\Cref{tab:prior_fixed_hp}) are shared constants, identical across the whole prior. \emph{Sampled} hyperparameters (\Cref{tab:prior_sampled_hp}) are instead drawn once per dataset from a meta-distribution -- typically a log-scaled truncated normal whose own centre is itself uniform on a log scale -- so that mechanism depth, width, activation, and noise scale, as well as the causal knobs governing overlap and confounding strength, vary \emph{across} the prior rather than being pinned to a single value.

The fixed table lists only the constants that pin down behaviour described qualitatively elsewhere in this appendix; the generator additionally exposes on/off switches for the mechanisms of the following subsections -- the dimension heuristic (\Cref{app:prior_dimensions}), categorical completeness (\Cref{app:prior_postprocess}), the confounding-share subsampling of $U$'s inputs to $f_A$ and $f_Y$, and the per-column subsampling of the direct and covariate edges (\Cref{app:prior_mechanisms}). All are enabled throughout the prior.

\begin{table}
\centering
\caption{Structural parameters of the prior: the block dimensions and the dataset shape. Each dimension is drawn per dataset from the law shown, stated in terms of the ceilings listed below it. The proxy dimensions are drawn conditionally on $d_U$ so that the completeness heuristic of \Cref{app:prior_dimensions}.}
\label{tab:prior_structural_hp}
\small
\begin{tabularx}{\textwidth}{>{\raggedright\arraybackslash}X>{\raggedright\arraybackslash}X>{\raggedright\arraybackslash}X}
\toprule
hyperparameter & value & description \\
\midrule
\multicolumn{3}{l}{\textbf{Block dimensions}} \\
\texttt{dim\_\allowbreak X} & $\mathrm{Unif}\{0, \dots, \texttt{max\_dim\_X}\}$ & observed baseline covariate dimension \\
\texttt{dim\_\allowbreak U} & triangular on $\{1, \dots, \texttt{max\_dim\_U}\}$ & latent confounder dimension \\
\texttt{dim\_\allowbreak W} & $\mathrm{Unif}\{d_U, \dots, \texttt{max\_dim\_W}\}$ & outcome-inducing proxy dimension \\
\texttt{dim\_\allowbreak Z} & $\mathrm{Unif}\{d_U, \dots, \texttt{max\_dim\_Z}\}$ & treatment-inducing proxy dimension \\
\midrule
\multicolumn{3}{l}{\textbf{Dimension bounds}} \\
\texttt{max\_\allowbreak dim\_\allowbreak X} & $10$ & covariate-dimension ceiling \\
\texttt{max\_\allowbreak dim\_\allowbreak U} & $10$ & confounder-dimension ceiling ($d^{\max}$ of \Cref{app:prior_dimensions}) \\
\texttt{max\_\allowbreak dim\_\allowbreak W} & $10$ & outcome-confounding proxy dimension ceiling \\
\texttt{max\_\allowbreak dim\_\allowbreak Z} & $10$ & treatment-confounding proxy dimension ceiling \\
\midrule
\multicolumn{3}{l}{\textbf{Dataset}} \\
\texttt{context\_\allowbreak rows} & $\mathrm{Unif}\{64, \dots, 2047\}$ & context rows per episode \\
\texttt{n\_\allowbreak query} & $128$ & fixed query rows per episode \\
\texttt{mc\_\allowbreak cate} & $250$ & MC replicates $K$ per row behind the $\mu_0/\mu_1$ targets \\
\texttt{mc\_\allowbreak target\_\allowbreak rows} & query rows only & rows carrying MC CEPO targets \\
\bottomrule
\end{tabularx}
\end{table}

\begin{table}
\centering
\caption{Fixed hyperparameters of the prior, held constant across every dataset. Listed are the constants that pin down behaviour the surrounding text describes only qualitatively; switches whose shipped value is that behaviour itself are stated in the text instead.}
\label{tab:prior_fixed_hp}
\small
\begin{tabularx}{\textwidth}{>{\raggedright\arraybackslash}Xl>{\raggedright\arraybackslash}X}
\toprule
hyperparameter & value & description \\
\midrule
\multicolumn{3}{l}{\textbf{Mechanism}} \\
\texttt{noise\_\allowbreak std} & $0.001$ & tiny fixed per-layer dither so continuous columns realise distinct values \\
\midrule
\multicolumn{3}{l}{\textbf{Feature corruption}} \\
\texttt{max\_\allowbreak categories} & $50$ & cap on categories per feature \\
\texttt{proxy\_\allowbreak cat\_\allowbreak cover\_\allowbreak prob} & $0.5$ & chance a proxy mirrors a $U$ category (vs.\ a continuous wildcard) \\
\midrule
\multicolumn{3}{l}{\textbf{Positivity}} \\
\texttt{propensity\_\allowbreak epsilon} & $0.001$ & positivity clamp on the propensity \\
\midrule
\multicolumn{3}{l}{\textbf{Completeness}} \\
\texttt{completeness\_\allowbreak max\_\allowbreak attempts} & $50$ & resample budget before giving up \\
\bottomrule
\end{tabularx}
\end{table}

\begin{table}
\centering
\caption{Per-dataset sampled hyperparameters of the prior. Each is drawn once per dataset from the listed meta-distribution; log-scaled truncated normals draw their centre log-uniformly over the stated mean range and add the lower bound as an offset.}
\label{tab:prior_sampled_hp}
\small
\begin{tabularx}{\textwidth}{>{\raggedright\arraybackslash}X>{\raggedright\arraybackslash}X>{\raggedright\arraybackslash}X}
\toprule
hyperparameter & sampling law & description \\
\midrule
\multicolumn{3}{l}{\textbf{Mechanism}} \\
\texttt{num\_\allowbreak layers} & log-scaled trunc.\ normal, mean $\sim\log\mathrm{Unif}[1, 6]$, int $\ge 2$ & mechanism MLP depth \\
\texttt{hidden\_\allowbreak dim} & log-scaled trunc.\ normal, mean $\sim\log\mathrm{Unif}[5, 130]$, int $\ge 4$ & mechanism MLP width \\
\texttt{mlp\_\allowbreak activations} & categorical over 54 random activation functions & activation, drawn from a random library \\
\texttt{init\_\allowbreak std} & log-scaled trunc.\ normal, mean $\sim\log\mathrm{Unif}[0.01, 10]$, real $\ge 0$ & weight-initialisation scale \\
\texttt{block\_\allowbreak wise\_\allowbreak dropout} & categorical $\{True, False\}$ (random weights) & block-sparse vs.\ dense weight init \\
\texttt{mlp\_\allowbreak dropout\_\allowbreak prob} & scaled Beta on $[0, 0.9]$, shapes $\sim\mathrm{Unif}[0.1, 5]$ & dropout rate (dense init only) \\
\midrule
\multicolumn{3}{l}{\textbf{Noise}} \\
\texttt{sampling} & categorical $\{\texttt{normal}, \texttt{mixed}, \texttt{uniform}\}$ (random weights) & family of the root cause variables \\
\texttt{pre\_\allowbreak sample\_\allowbreak cause\_\allowbreak stats} & categorical $\{True, False\}$ (random weights) & pre-sample per-cause mean/std (normal sampling) \\
\texttt{noise\_\allowbreak share\_\allowbreak of\_\allowbreak input\_\allowbreak in\_\allowbreak U} & $\mathrm{Beta}(1.2, 2.8)$ & target noise share of $f_U$ inputs ($\varepsilon_U$ width) \\
\texttt{noise\_\allowbreak share\_\allowbreak of\_\allowbreak input\_\allowbreak in\_\allowbreak W} & $\mathrm{Beta}(1.2, 2.8)$ & target noise share of $f_W$ inputs ($\varepsilon_W$ width) \\
\texttt{noise\_\allowbreak share\_\allowbreak of\_\allowbreak input\_\allowbreak in\_\allowbreak Z} & $\mathrm{Beta}(1.2, 2.8)$ & target noise share of $f_Z$ inputs ($\varepsilon_Z$ width) \\
\texttt{noise\_\allowbreak share\_\allowbreak of\_\allowbreak input\_\allowbreak in\_\allowbreak A} & $\mathrm{Beta}(1.2, 2.8)$ & target noise share of $f_A$ inputs ($\varepsilon_A$ width) \\
\texttt{noise\_\allowbreak share\_\allowbreak of\_\allowbreak input\_\allowbreak in\_\allowbreak Y} & $\mathrm{Beta}(1.2, 2.8)$ & target noise share of $f_Y$ inputs ($\varepsilon_Y$ width) \\
\midrule
\multicolumn{3}{l}{\textbf{Causal}} \\
\texttt{propensity\_\allowbreak shift} & $\mathrm{Unif}[-2.5, 2.5]$ & treatment-logit bias (marginal treated fraction) \\
\texttt{propensity\_\allowbreak scale} & log-scaled trunc.\ normal, mean $\sim\log\mathrm{Unif}[0.5, 3]$, real $\ge 0.1$ & treatment-logit gain (overlap vs.\ confounding) \\
\texttt{confounding\_\allowbreak share\_\allowbreak of\_\allowbreak U\_\allowbreak in\_\allowbreak Y} & $\mathrm{Beta}(1.2, 2.8)$ & target share of $f_Y$ inputs drawn from $U$ \\
\texttt{confounding\_\allowbreak share\_\allowbreak of\_\allowbreak U\_\allowbreak in\_\allowbreak A} & $\mathrm{Beta}(1.2, 2.8)$ & target share of $f_A$ inputs drawn from $U$ \\
\texttt{direct\_\allowbreak edge\_\allowbreak share\_\allowbreak of\_\allowbreak W\_\allowbreak in\_\allowbreak Y} & $\mathrm{Beta}(1, 1)$ & per-column keep probability of the direct $W \to Y$ edge \\
\texttt{direct\_\allowbreak edge\_\allowbreak share\_\allowbreak of\_\allowbreak Z\_\allowbreak in\_\allowbreak A} & $\mathrm{Beta}(1, 1)$ & per-column keep probability of the direct $Z \to A$ edge \\
\texttt{covariate\_\allowbreak share\_\allowbreak of\_\allowbreak X\_\allowbreak in\_\allowbreak U} & $\mathrm{Beta}(1, 1)$ & per-column keep probability of the direct $X \to U$ edge \\
\texttt{covariate\_\allowbreak share\_\allowbreak of\_\allowbreak X\_\allowbreak in\_\allowbreak W} & $\mathrm{Beta}(1, 1)$ & per-column keep probability of the direct $X \to W$ edge \\
\texttt{covariate\_\allowbreak share\_\allowbreak of\_\allowbreak X\_\allowbreak in\_\allowbreak Z} & $\mathrm{Beta}(1, 1)$ & per-column keep probability of the direct $X \to Z$ edge \\
\texttt{covariate\_\allowbreak share\_\allowbreak of\_\allowbreak X\_\allowbreak in\_\allowbreak A} & $\mathrm{Beta}(1, 1)$ & per-column keep probability of the direct $X \to A$ edge \\
\texttt{covariate\_\allowbreak share\_\allowbreak of\_\allowbreak X\_\allowbreak in\_\allowbreak Y} & $\mathrm{Beta}(1, 1)$ & per-column keep probability of the direct $X \to Y$ edge \\
\midrule
\multicolumn{3}{l}{\textbf{Feature corruption}} \\
\texttt{cat\_\allowbreak col\_\allowbreak prob} & $\mathrm{Beta}(1, 3)$ & per-column categorisation probability \\
\bottomrule
\end{tabularx}
\end{table}

\FloatBarrier
\subsubsection{Dimensions \& dimension heuristic}
\label{app:prior_dimensions}

The block dimensions are sampled according to the completeness heuristic introduced in \Cref{app:prior_procedure}. The confounder dimension is drawn from a triangular (linearly decreasing) law on $\{1,\dots,d_U^{\max}\}$,
\begin{equation}
  \Pr[d_U = d] \;\propto\; (d_U^{\max}+1-d),
  \label{eq:dU_triangular}
\end{equation}
which skews $d_U$ toward small values. Conditional on \(d_U\), the two proxy dimensions are sampled uniformly subject to \(d_W,d_Z\ge d_U\), while the covariate dimension \(d_X\) is sampled independently and uniformly:
\begin{equation}
  d_W \sim \mathrm{Unif}\{d_U,\dots,d_W^{\max}\},\quad
  d_Z \sim \mathrm{Unif}\{d_U,\dots,d_Z^{\max}\},\quad
  d_X \sim \mathrm{Unif}\{d_X^{\min},\dots,d_X^{\max}\}.
  \label{eq:proxy_dims}
\end{equation}

This conditional-uniform sampling has a convenient consequence in the common-bound configuration used here, where $d_U$, $d_W$, and $d_Z$ all range from $1$ to a shared maximum $m$. When $d_U$ follows the triangular law, the weight $(m+1-d)$ placed on $d_U=d$ exactly cancels the normaliser $1/(m-d+1)$ of the conditional uniform, so the induced marginal of each proxy dimension is the linear ramp
\begin{equation}
  \Pr[d_W = w] \;=\; \frac{w}{\sum_{j=1}^{m} j}
  \;=\; \frac{2w}{m(m+1)},
  \qquad w \in \{1, \dots, m\},
  \label{eq:proxy_marginal}
\end{equation}
and likewise for $d_Z$.

The triangular choice for $d_U$ is what buys this. Had $d_U$ been uniform, the same conditional sampling would induce a \emph{harmonic} proxy marginal, $\Pr[d_W=w] = (H_m - H_{m-w})/m$ with $H_n$ the $n$-th harmonic number, which piles proxy mass against the ceiling $m$: the value $w=m$ is reachable from every $d_U$, whereas small $w$ requires a small $d_U$, so probability accumulates at the top (for $m=10$, $\Pr[d_W=10]=0.29$). The triangular law flattens this to the linear ramp of \eqref{eq:proxy_marginal} ($\Pr[d_W=10]=0.18$), and lowers the mean confounder and proxy dimensions from $5.5$ and $7.75$ to $4.0$ and $7.0$ respectively. \Cref{fig:prior_dim_marginals} shows the delivered marginals of both schemes against these analytic laws.

\begin{figure}[h]
    \centering
    \includegraphics{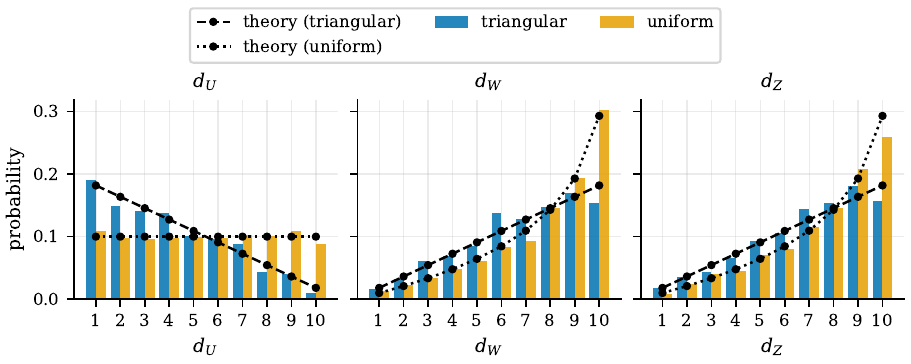}
    \caption{Delivered marginal distributions of the block dimensions $d_U, d_W, d_Z$ across the prior, for the shipped (triangular $d_U$) and the alternative (uniform $d_U$) sampling, overlaid with their analytic sampling laws (dashed: triangular; dotted: uniform). Bars are empirical frequencies over $1024$ datasets. Uniform $d_U$ pushes proxy mass against the ceiling $d^{\max}=10$; the triangular law spreads it into the linear ramp of \eqref{eq:proxy_marginal}. Delivered bars track the theoretical curves, confirming the generator samples the intended laws.}
    \label{fig:prior_dim_marginals}
\end{figure}

\FloatBarrier
\subsubsection{MLP generation, noise \& edge sampling}
\label{app:prior_mechanisms}

TabICL builds a single random MLP that turns a block of root ``causes'' into one feature table, reading its columns from the pooled activations of the MLP's intermediate layers. We reuse that construction unchanged as the mechanism for each \emph{individual} node -- the layer stack, the random activation library, the block-sparse or dropout-masked Gaussian initialisation, and the pooled-output selection, all sampled once per dataset (\Cref{tab:prior_sampled_hp}); we refer to TabICL for these details \citep{quTabICLTabularFoundation2025}. In \modelname, the innovative step is that a proximal dataset is not one block but a \emph{chain} of them, $f_X, f_U, f_W, f_Z, f_A, f_Y$ along a proximal learning DAG. Wiring one block into the next raises two questions a single-MLP generator never faces:
\begin{itemize}
    \item How should a parent block enter the child's mechanism?
    \item How should we inject fresh per-node randomness while keeping it proportional to the signal?
\end{itemize}

\paragraph{Parents and noise.} The root mechanism has no parents and maps a sampled noise input block straight to the covariates, $X=f_X(\varepsilon_X)$. Every downstream mechanism instead concatenates its standardised parent blocks with a fresh input block of its own, e.g.\ $U=f_U(\tilde X_U,\varepsilon_U)$ as in \eqref{eq:uwz}. These per-node blocks $\varepsilon_{(\cdot)}$ play the role of exogenous noise: we draw them from the \emph{same} sampler TabICL uses for its root causes -- each column independently normal, uniform, multinomial, or clamped Zipf under per-dataset mixing weights -- and size each to a per-dataset share of its mechanism's signal width (\Cref{tab:prior_sampled_hp}), so the noise-to-signal ratio stays controlled rather than shrinking as the parent blocks grow. The MLP's own within-layer Gaussian perturbation is left small ($\texttt{noise\_std}=10^{-3}$), so in practice the input blocks carry essentially all of a node's exogenous randomness. Postprocessing $\mathrm{Corrupt}(\cdot)$ (\Cref{app:prior_postprocess}) then reshapes each block before it is consumed.

\paragraph{Edge sampling.} At the MLP level, the parent blocks above are passed through dataset-specific column masks. For each mechanism that depends on $X$, we draw a keep probability from $\mathrm{Unif}[0,1]$ and retain the columns of $\tilde X$ independently according to that probability. The same procedure is applied to the $W\to Y$ and $Z\to A$ links, producing the masked proxy inputs $\tilde W_Y$ and $\tilde Z_A$. Thus edge sampling can only remove potential parents from the fixed proximal DAG: the completeness links $Z\leftarrow U\rightarrow W$ remain intact, and no new dependencies are introduced. The resulting subgraphs therefore retain the dimensionality heuristic and the structural proximal restrictions. The separate control of how strongly $U$ enters $f_A$ and $f_Y$ is described next.

\paragraph{Confounding control.} In a random MLP the confounder's influence on $A$ and $Y$ would be pinned by $U$'s share of each mechanism's input width, hence by $d_U$; to decouple confounding strength from $d_U$ we target a share $s_A,s_Y\sim\mathrm{Beta}(1.2,2.8)$ of the inputs to $f_A,f_Y$ to be $U$-derived and translate it into a column count $k$. When $k$ does not exceed $d_U$ the mechanism sees a $k$-column subsample of $U$; when it does, it sees all of $U$ plus $(k-d_U)$ extra columns drawn from a shared nonlinear re-encoding $U^{\mathrm{aug}}=f_{\mathrm{aug}}(\tilde U)$, itself a random MLP, giving the confounding-controlled inputs $U^{[A]},U^{[Y]}$. Since $f_{\mathrm{aug}}$ is a deterministic function of $U$, these extra columns carry no information beyond $U$ -- they only let the prior push confounding above what $d_U$ alone would permit, without altering the DAG or the completeness edges.

\begin{figure}[t]
    \centering
    \includegraphics{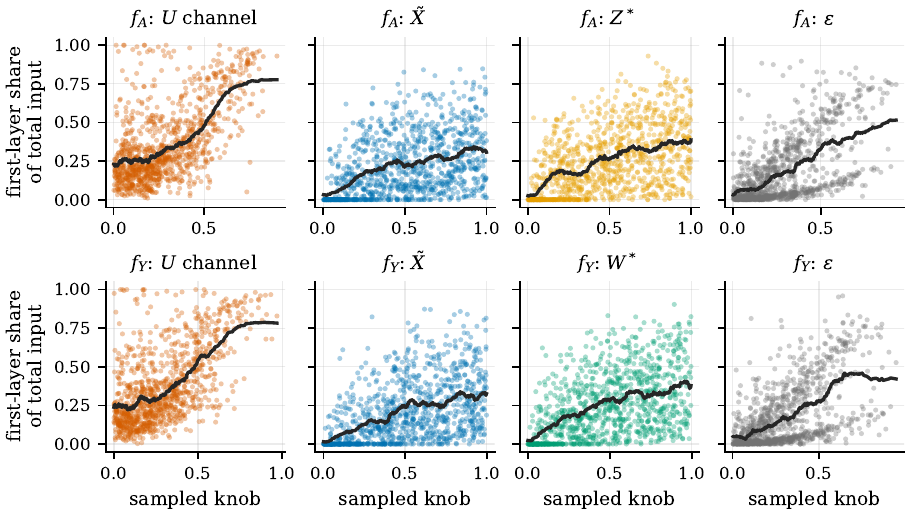}
    \caption{Every dial: each input channel's share of $f_A$'s (top) and $f_Y$'s (bottom) total input \emph{variance}, against its own sampled knob ($s$ for the $U$-channel, the covariate and proxy keep-probabilities for $\tilde X$ and $\tilde{Z}_A/\tilde{W}_Y$, the noise share for $\varepsilon$). Solid line: moving average. Each knob steers its channel's delivered variance share.}
    \label{fig:prior_mechanism_input_composition}
\end{figure}

\Cref{fig:prior_mechanism_input_composition} evaluates the effect on the mechanisms' input variance of the four signal knobs, one per channel\footnote{Confounder: \texttt{confounding\_share\_of\_U\_in\_*}. Covariate: \texttt{covariate\_share\_of\_X\_in\_*}. Proxy: \texttt{direct\_edge\_share\_of\_*\_in\_*}. Noise: \texttt{noise\_share\_of\_input\_in\_*}.} governing the $U$-, $X$-, proxy- and noise-channels respectively. We observe clearly the expected behaviour, as higher values for a knob correspond to a higher proportion of input variance carried by its own channel, on average. Then, \Cref{fig:prior_mechanism_output_dependence} considers the outputs of each mechanism, and evaluates their dependence on the input. On the delivered data we predict each mechanism's output from its parent blocks with a cross-fitted in-context regressor (TabICL, out-of-fold so every row is predicted exactly once), and read each block's \emph{unique} contribution as the leave-one-out drop $R^2(\text{all})-R^2(\text{all}\setminus b)$. For $f_A$ we predict the raw treatment logit $\mathrm{logit}(\pi)$ from $\{U,X,Z\}$; for $f_Y$ the potential outcome $Y_0$ from $\{U,X,W\}$; the noise channel, having no block to drop, is read as the unexplained share $1-R^2$. 

\Cref{fig:prior_mechanism_output_dependence} shows each channel's contribution rising with its own knob, mirroring \Cref{fig:prior_mechanism_input_composition} at the output level: the mechanisms are predicted with median $R^2\approx0.5$, and the noise knob visibly inflates the unexplained share. Two caveats with this procedure. First, the blocks are supersets of the subsampled columns each mechanism actually consumed, so a drop is a \emph{lower bound} on the true dependence. Second, because leave-one-out credits only \emph{unique} signal, the confounder channel reads low wherever $U$ is partly recoverable from its child proxy. On the $Y$ side, $Y_0$ is post heterogeneity-remix, so its $1-R^2$ also absorbs the exogenous mixing (which vanishes as $\gamma\to1$).

\begin{figure}[t]
    \centering
    \includegraphics{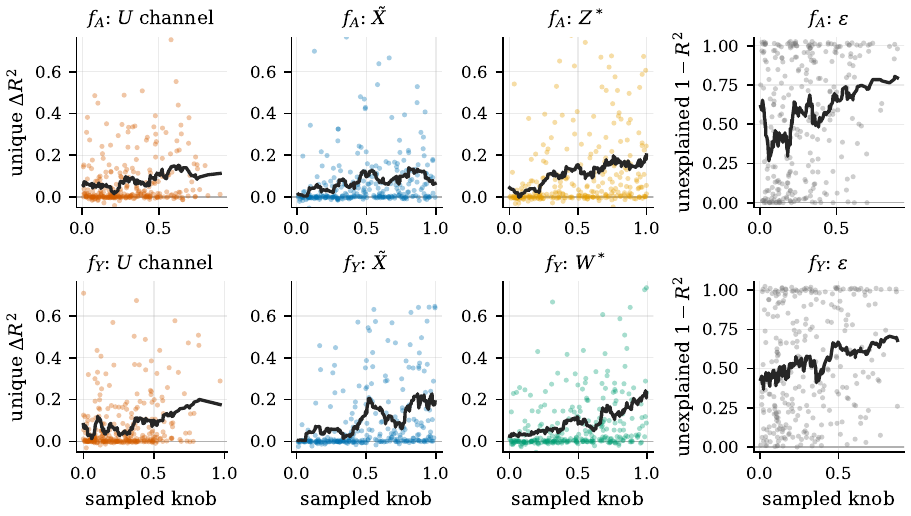}
    \caption{Output side of the composition: what each mechanism's output \emph{depends on}, measured by a cross-fitted TabICL regression on the delivered data. Columns 1--3 plot each parent channel's unique leave-one-out $\Delta R^2$ (drop in out-of-fold $R^2$ when that block is removed from the predictors) against its own knob, for $f_A$ (top, target $\mathrm{logit}(\pi)$) and $f_Y$ (bottom, target $Y_0$); column 4 plots the unexplained share $1-R^2$ against the noise knob. Solid line: moving average. Each channel's dependence rises with its knob, as at the input level. The $U$-channel reads modestly because leave-one-out credits only signal \emph{unique} to $U$, not the part shared with its child proxy.}
    \label{fig:prior_mechanism_output_dependence}
\end{figure}

\FloatBarrier
\subsubsection{Post-process: categorisation, outlier and scaling}
\label{app:prior_postprocess}

Each block passes through the $\mathrm{Corrupt}(\cdot)$ pipeline of TabICL~\citep{quTabICLTabularFoundation2025} before being consumed in \eqref{eq:uwz}: a random subset of columns is discretised into categories, outliers are clamped at $4$ standard deviations, and the result is standardised. Like TabICL, we apply this to each mechanism's output rather than \emph{within} its MLP layers, as \citet{stithCausalFoundationModels2026} instead suggest.

Categorisation interacts with the completeness assumptions of \Cref{app:proximal_prior}, which we preserve through two heuristics on the \emph{delivered} blocks. \emph{Numerically}, we keep $d_W,d_Z\ge d_U$. \emph{Categorically}, every column of $U$ must be dominated by a distinct proxy column: a numerical column of $U$ by a numerical proxy column, and a categorical column of $U$ by either a numerical proxy column or a categorical one with at least as many categories; as exemplified in \Cref{tab:prior_cat_heuristic_example}.

\begin{table}
    \centering
    \caption{Worked examples of the categorical-completeness heuristic for the fixed confounder profile $U=[\,\mathrm{num},\,5\text{-cat},\,3\text{-cat}\,]$ (one numerical column and two categorical columns, with $5$ and $3$ categories). A proxy block $P$ satisfies the heuristic when its columns can be matched one-to-one to those of $U$ so that each is at least as informative: a numerical column dominates anything, and a categorical column dominates a categorical requirement only if it has at least as many categories. A numerical proxy column may thus cover a categorical requirement, but a categorical proxy column can never cover a numerical one.}
    \label{tab:prior_cat_heuristic_example}
    \begin{tabularx}{\linewidth}{@{}l c >{\raggedright\arraybackslash}X@{}}
        \toprule
        Proxy profile $P$ & Heuristic? & Reason \\
        \midrule
        $[\,\mathrm{num},5\text{-cat},3\text{-cat}\,]$
            & \textcolor{green!50!black}{\textbf{ok}}
            & exact match, column for column \\
        $[\,\mathrm{num},8\text{-cat},4\text{-cat}\,]$
            & \textcolor{green!50!black}{\textbf{ok}}
            & categorical columns carry more categories than required \\
        $[\,\mathrm{num},\mathrm{num},3\text{-cat}\,]$
            & \textcolor{green!50!black}{\textbf{ok}}
            & a numerical column covers $U$'s $5$-cat column \\
        $[\,\mathrm{num},\mathrm{num},\mathrm{num}\,]$
            & \textcolor{green!50!black}{\textbf{ok}}
            & numerical columns dominate every requirement \\
        \midrule
        $[\,\mathrm{num},\mathrm{num}\,]$
            & \textcolor{red!70!black}{\textbf{no}}
            & too few columns ($d_P=3<d_U=4$) \\
        $[\,\mathrm{num},4\text{-cat},3\text{-cat}\,]$
            & \textcolor{red!70!black}{\textbf{no}}
            & the $4$-cat column cannot cover $U$'s $5$-cat column \\
        $[\,5\text{-cat},5\text{-cat},3\text{-cat}\,]$
            & \textcolor{red!70!black}{\textbf{no}}
            & only one numerical column for $U$'s two numerical columns \\ 
        $[\,\mathrm{num},\mathrm{num},2\text{-cat}\,]$
            & \textcolor{red!70!black}{\textbf{no}}
            & a numerical wildcard covers the $5$-cat column, leaving the
              $2$-cat column to cover $U$'s $3$-cat column, but $2<3$ \\
        \bottomrule
    \end{tabularx}
\end{table}

\paragraph{Numeric completeness.} The dimension heuristic is imposed at sampling time, but the constant-column filtering that follows (\Cref{alg:prior}, line~\ref{line:constant_column_filter}) can shrink a block afterwards, so we re-check it on the delivered dimensions. \Cref{fig:prior_dimension_joint_heatmaps} illustrates this heuristic enforcement.

\begin{figure}[t]
    \centering
    \includegraphics{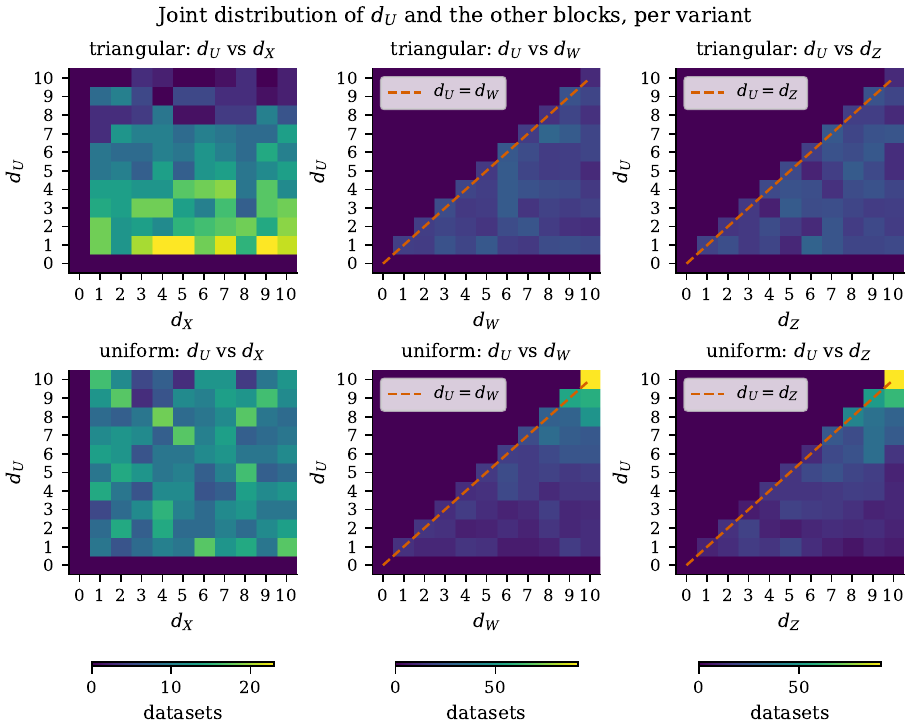}
    \caption{Joint distribution of the latent dimension $d_U$ against each feature block ($d_X$, $d_W$, $d_Z$), one row per prior variant (\emph{triangular}, \emph{uniform}). Colour encodes the number of delivered datasets falling in each $(d_U, \cdot)$ cell. The diagonal line on the $d_W$/$d_Z$ panels marks the completeness heuristic $d_W, d_Z \ge d_U$; both variants place their mass on or below it, i.e.\ constant-column filtering does not measurably break the heuristic imposed at sampling time. $d_X$ carries no such constraint and is shown for reference.}
    \label{fig:prior_dimension_joint_heatmaps}
\end{figure}

\paragraph{Categorical completeness.} We call $P$ \emph{categorically complete} w.r.t.\ $U$ when every column of $U$ matches a distinct, at-least-as-informative column of $P$ (``dominates''). Whenever $U$ is categorised, the prior builds each proxy to satisfy this by construction: it reserves one numerical column per numerical column of $U$, covers each categorical column of $U$ with either a numerical column or -- with probability \texttt{proxy\_cat\_cover\_prob} $=50\%$ -- a categorical column drawing strictly more categories, and fills any surplus proxy columns freely. \Cref{fig:prior_category_count_marginals} contrasts the block compositions this yields with and without this construction: \emph{default} (enforcement on) and \emph{no\_enforcement} ($U$ categorised, proxies left unconstrained) start from the same categorised $U$, but under \emph{no\_enforcement} the proxies end up no more numerical than $U$ itself, so nothing forces domination.

\begin{figure}[t]
    \centering
    \includegraphics{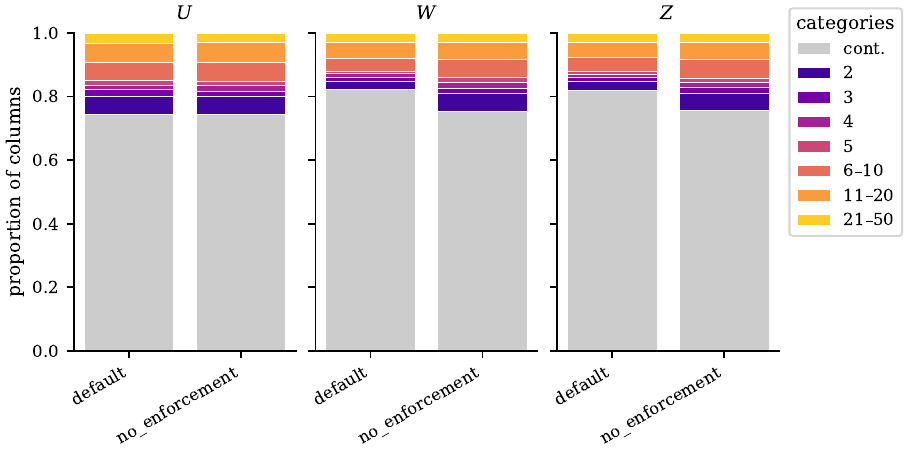}
    \caption{Column-type composition of each block ($U$, $W$, $Z$), one $100\%$-stacked bar per variant (\emph{default}: categorical completeness enforced; \emph{no\_enforcement}: proxies left unconstrained). Each bar partitions a block's columns into continuous (grey, ``cont.'') and categorical, the latter binned by category count ($2$--$5$ singletons, then coarser $6$--$10$, $11$--$20$, $21$--$d^{\max}_{\mathrm{cat}}$ tail bins). Continuous columns dominate every block; contrasting $W$/$Z$ across the two variants shows how much enforcement pushes proxy columns categorical relative to the unconstrained \emph{no\_enforcement} arm.}
    \label{fig:prior_category_count_marginals}
\end{figure}

\paragraph{Enforcement scales with $U$'s demand.} \Cref{fig:prior_proxy_enforcement_vs_u_demand} reads the covering rule off the delivered data from two angles. Conditioning on a single $U$ column's cardinality (panels a--b), the proxy block's numerical fraction is roughly flat in that cardinality and sits \emph{above} the unconstrained arm ($\approx0.70$ vs.\ $\approx0.60$), while the mean category count of the proxy's categorical columns grows with the $U$ column's cardinality under \emph{default} ($\approx12$ at $2$ categories to $\approx19$ in the $30{+}$ bin), exactly as the ``strictly more categories'' covering rule dictates, and stays flat at $\approx10$ under \emph{no\_enforcement}. Enforcement therefore acts on the \emph{depth} of the proxy's categorical columns, not on how many of them there are.

Panel (c) makes the same point negatively, at the dataset level. The categorical share of the proxies rises with $U$'s categorical demand $|\mathrm{cats}(U)|$ in \emph{both} arms, because the per-dataset categorisation propensity is shared across blocks: a dataset that discretises many columns of $U$ also discretises many columns of its proxies, enforcement or not. Enforcement in fact leaves the proxies \emph{less} categorical than the free \emph{no\_enforcement} arm at every demand level ($0.08\to0.43$ vs.\ $0.14\to0.58$), consistent with the reserved numerical columns of panel (a). So the categorical share is not the signature of enforcement; the per-column depth of panel (b), and the survival rate below, are.

\begin{figure}[t]
    \centering
    \includegraphics{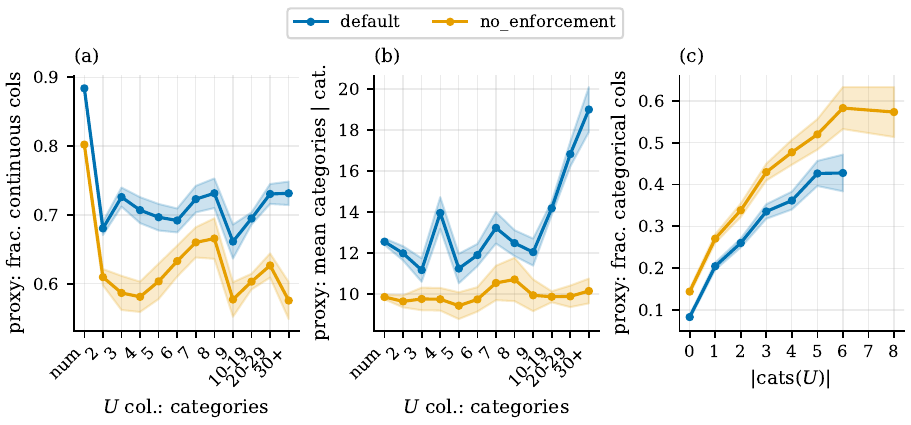}
    \caption{How enforcement makes proxy structure track $U$'s categorical demand, pooling $W$ and $Z$ across the \emph{default} and \emph{no\_enforcement} variants; shaded bands are $\pm$ one SEM. \textbf{(a, b)} condition on a single $U$ column's own cardinality: the $x$-axis bins a $U$ column by its category count (continuous ``\texttt{num}'', then $2,\dots,9$, and coarser tail bins up to ``$30{+}$''), and each point averages a statistic of the co-occurring proxy block over every dataset containing a $U$ column of that kind -- (a) the fraction of proxy columns that are continuous, essentially flat in $U$'s cardinality but higher under \emph{default}, and (b) the mean category count of the proxy's categorical columns, which rises with $U$'s cardinality under \emph{default} but stays low under \emph{no\_enforcement}. \textbf{(c)} moves to the dataset level: the mean fraction of categorical proxy columns against $U$'s total categorical demand $|\mathrm{cats}(U)|$, with bins of fewer than $5$ datasets dropped. Both arms rise (the per-dataset categorisation propensity is shared across blocks) and \emph{default} sits below \emph{no\_enforcement}: enforcement deepens the proxy's categorical columns rather than multiplying them. Because proxy statistics in (a, b) are block-level rather than tied to a matched column, those trends reflect association with ``the dataset contains a $U$ column of this kind,'' not a one-to-one mapping.}
    \label{fig:prior_proxy_enforcement_vs_u_demand}
\end{figure}

\FloatBarrier
\subsubsection{Treatment assignment}
\label{app:prior_treatment}

The treatment mechanism $f_A$ maps the confounding-controlled confounder input $U^{[A]}$, the covariate subset $\tilde X_A$, the direct proxy subset $\tilde{Z}_A$ and fresh noise $\varepsilon_A$ to a single scalar logit per row,
\begin{equation}
  \ell \;=\; f_A\big(U^{[A]},\,\tilde X_A,\,\tilde{Z}_A,\,\varepsilon_A\big)\;\in\;\R^{TK},
  \label{eq:A_logits}
\end{equation}
with $W$ excluded from the inputs so that the proxy exclusion restriction of \Cref{as:proxy_exclusion} holds by construction (Equation~\ref{eq:uwz}). The treatment $A$ is binary; the steps below turn $\ell$ into a propensity score $\pi=\Pr[A=1\mid\cdot]$ and draw $A$.

\paragraph{Logit normalisation.} A random MLP fixes neither the location nor the spread of its output, both of which drift with depth, width and input scale; left raw, the induced treatment prevalence and the amount of overlap would be an uncontrolled by-product of the sampled architecture. We therefore standardise the logits across the $T\cdot K$ rows of the dataset to zero mean and unit variance,
\begin{equation}
  \tilde\ell \;=\; \frac{\ell - \bar\ell}{\operatorname{sd}(\ell) + 10^{-8}},
  \label{eq:A_normalise}
\end{equation}
so that the assignment distribution is decoupled from the accidental scale of $f_A$ and every dataset starts from a common footing before the overlap knobs act.

\paragraph{Shift and scale.} Two per-dataset propensity knobs then set the assignment regime through an affine map of the standardised logits,
\begin{equation}
  \ell' \;=\; s\,\tilde\ell + b,\qquad
  b \sim \mathrm{Unif}(-2.5,\,2.5),\quad
  s \sim \text{log-scaled truncated normal},
  \label{eq:A_affine}
\end{equation}
both drawn once per dataset (see \Cref{tab:prior_sampled_hp}). The shift $b$ moves the marginal treatment prevalence away from the balanced split -- negative $b$ makes treatment rare, positive $b$ makes it common -- while the scale $s$ controls how sharply assignment depends on the covariates and confounder: small $s$ flattens the logits toward a coin flip (strong overlap, weak confounding channelled through $A$), whereas large $s$ sharpens them toward near-deterministic assignment (weaker overlap).

\paragraph{Propensity and sampling.} A sigmoid turns the shifted logits into a propensity score, from which the treatment is drawn Bernoulli,
\begin{equation}
  \pi \;=\; \sigma(\ell'),\qquad A \sim \mathrm{Bernoulli}(\pi).
  \label{eq:A_sigmoid}
\end{equation}

\paragraph{Overlap clamp.} Because a large $s$ can push $\pi$ arbitrarily close to $0$ or $1$, we clamp the propensity into $[\epsilon,1-\epsilon]$ with $\epsilon=10^{-3}$ before sampling,
\begin{equation}
  \pi \;\leftarrow\; \operatorname{clip}(\pi,\,\epsilon,\,1-\epsilon),
  \label{eq:A_clamp}
\end{equation}
guaranteeing a floor on the probability of each treatment arm and hence enforcing positivity/overlap ($0 < \Pr[A=1\mid\cdot] < 1$) for every row by construction (\Cref{as:positivity}). We audit the delivered overlap across the prior in \Cref{app:prior_overlap}.

\Cref{fig:prior_treatment_walkthrough} shows this five-step pipeline on a single representative dataset, following one unit (in red) from its raw logit through to its treatment draw.

\begin{figure}[t]
    \centering
    \includegraphics{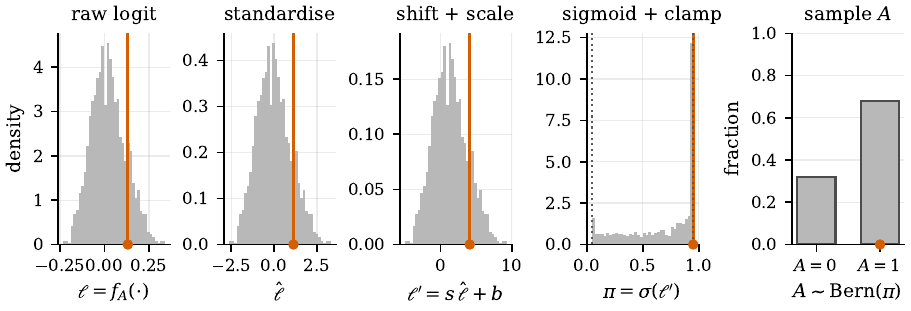}
    \caption{The treatment-assignment pipeline of \eqref{eq:A_logits}--\eqref{eq:A_clamp} on one representative dataset, tracking a single unit (red line) across all five stages. (a) The raw logit $\ell=f_A(\cdot)$, whose scale and offset are an uncontrolled by-product of the sampled architecture. (b) The standardised $\tilde\ell$ of \eqref{eq:A_normalise}, pinned to zero mean and unit variance. (c) The affine $\ell'=s\,\tilde\ell+b$ of \eqref{eq:A_affine}, shifted and sharpened by the two sampled overlap knobs. (d) The propensity $\pi=\sigma(\ell')$ of \eqref{eq:A_sigmoid}, confined to $[\epsilon,1-\epsilon]$ by the clamp of \eqref{eq:A_clamp} (dotted bounds, drawn at an enlarged $\epsilon$ for visibility; production $\epsilon=10^{-3}$). (e) The realised treatment $A\sim\mathrm{Bernoulli}(\pi)$ across the batch (fraction in each arm), with the tracked unit's realised arm marked in red.}
    \label{fig:prior_treatment_walkthrough}
\end{figure}

\FloatBarrier
\subsubsection{Potential outcomes \& heterogeneity remix}
\label{app:prior_outcome}

The outcome mechanism $f_Y$ emits both potential outcomes from a single forward pass, returning the pair $\big(Y_0^{\mathrm{raw}},Y_1^{\mathrm{raw}}\big)$ from a common input and exogenous draw $\varepsilon_Y$; the two arms differ only in their read-out, so the raw effect $\tau^{\mathrm{raw}}=Y_1^{\mathrm{raw}}-Y_0^{\mathrm{raw}}$ stays coupled through the shared latent state.

Left as is, a dataset's effect heterogeneity would be an uncontrolled by-product of the sampled MLP. To let the prior span the full range from strongly modulated effects to a single constant effect, we remix the raw outcomes before forming the targets. With a per-dataset dial $\gamma\sim\mathrm{Unif}(0,1)$, a per-row weight $\alpha\sim\mathrm{Unif}(0,1)$, and the raw average effect $\bar\tau=\tfrac{1}{T\cdot K}\sum(Y_1^{\mathrm{raw}}-Y_0^{\mathrm{raw}})$,
\begin{equation}
\begin{aligned}
  Y_1 &= \big(\alpha+(1-\alpha)\gamma\big)\,Y_1^{\mathrm{raw}}
        + (1-\gamma)(1-\alpha)\big(Y_0^{\mathrm{raw}}+\bar\tau\big),\\
  Y_0 &= \big((1-\alpha)+\alpha\gamma\big)\,Y_0^{\mathrm{raw}}
        + (1-\gamma)\alpha\big(Y_1^{\mathrm{raw}}-\bar\tau\big).
\end{aligned}
\label{eq:remix}
\end{equation}
This construction follows the heterogeneity knob of RealCause \citep{nealRealCauseRealisticCausal2021}, later re-used in the causal-foundation-model prior of CausalPFN \citep{balazadehCausalPFNAmortizedCausal2025}.

Substituting \eqref{eq:remix} collapses the per-row weights to $Y_1-Y_0=\gamma\,\tau^{\mathrm{raw}}+(1-\gamma)\bar\tau$, so $\gamma$ linearly interpolates each unit's effect between its raw value ($\gamma=1$, fully heterogeneous) and the dataset mean $\bar\tau$ ($\gamma=0$, fully homogeneous). Anchoring at $\bar\tau$ rather than zero preserves the ATE at every $\gamma$; the per-row $\alpha$ simply shares the required shift between $Y_0$ and $Y_1$ for each unit; averaged across rows, this allocation and the compensating $\pm\bar\tau$ terms leave the marginal means of $Y_0,Y_1$ (hence the outcome levels and the ATE) unchanged in expectation, so the dial reshapes only the \emph{spread} of the effect.

Because the collapse also gives $\mathrm{sd}(Y_1-Y_0)=\gamma\,\mathrm{sd}(\tau^{\mathrm{raw}})$, the dial should act on the delivered data as a pure, proportional spread control. \Cref{fig:prior_heterogeneity_spread} checks this across the prior: $\gamma$ is drawn independently per dataset and recorded, so the shipped prior is itself a $\gamma$ sweep, and the (outcome-scaled) effect dispersion tracks it closely, while the outcome coupling $\mathrm{corr}(Y_0,Y_1)$ approaches its exact $\gamma\to0$ value of $1$.

\begin{figure}
    \centering
    \includegraphics{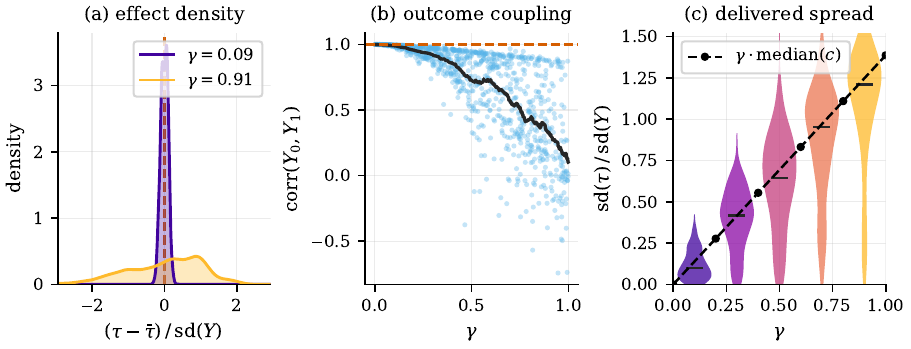}
    \caption{The heterogeneity dial on the delivered prior, over $1024$ datasets. All statistics are ratios, since each SCM's outcome carries its own arbitrary scale. (a) Two datasets' effect densities on a shared outcome-scaled axis, each centred on its own $\bar\tau$ (dashed line), one gathered ($\gamma=0.09$) and one spread ($\gamma=0.91$); the pair is chosen with \emph{matched} raw dispersion $c=\mathrm{sd}(\tau^{\mathrm{raw}})/\mathrm{sd}(Y)$ ($1.127$ vs.\ $1.125$), so the contrast is attributable to $\gamma$ rather than to SCM variation. (b) The outcome coupling $\mathrm{corr}(Y_0,Y_1)$ against $\gamma$ (solid line: moving average). At $\gamma=0$ the effect is the constant $\bar\tau$, so $Y_1=Y_0+\bar\tau$ and the coupling is \emph{exactly} $1$ (dashed reference). (c) Distribution of the delivered dispersion $\mathrm{sd}(\tau)/\mathrm{sd}(Y)$ per $\gamma$ bin (Spearman $0.78$ overall), against the proportionality $\gamma\cdot\mathrm{median}(c)$ implied by the collapse; within-bin spread is dominated by variation in $c$. Because the reference line is anchored at zero and uses the overall median $c$ for its slope, the panel verifies strict proportionality to $\gamma$—linear scaling starting from the origin—rather than predicting the absolute magnitude of dispersion.}
    \label{fig:prior_heterogeneity_spread}
\end{figure}

\FloatBarrier
\subsubsection{Monte Carlo CATE supervision}
\label{app:prior_mc_cate}
For a query covariate row \(x_i\), \texttt{mc\_cate} specifies the number
\(K\) of independent downstream SCM draws used to approximate its CATE.
The covariates $X$ remain fixed, while the latent confounder $U$, proxies, and
exogenous outcome noise are redrawn. Within draw \(k\), both potential
outcomes are evaluated on the same latent state, yielding the paired effect
\(\tau_{i,k}=Y_{i,k}(1)-Y_{i,k}(0)\). We supervise on
\(\widehat Q_K(x_i)=K^{-1}\sum_k\tau_{i,k}\), an unbiased estimate of
\(Q(x_i)\), with estimated sampling variance
\(S_i^2=\widehat{\mathrm{Var}}_k(\tau_{i,k})/K\). Thus \(K\) controls
target precision, not the SCM or estimand: its standard error decreases as
\(K^{-1/2}\), at generation cost proportional to \(K\).

\FloatBarrier
\subsubsection{Visualize the prior}
\label{app:prior_visualize}

We close \Cref{app:prior_model} with three galleries of the delivered data, each a fixed-seed cross-section of the prior's diversity: \Cref{fig:prior_feature_distributions} shows individual \emph{feature marginals}, one column per block and one sampled dataset per row (the propensity $\pi$ for $A$, both potential outcomes $Y_0,Y_1$ for $Y$); \Cref{fig:prior_feature_pairs} shows \emph{within-block} dependence, scattering two features of the same block ($\pi$ against the draw $A$, and $Y_0$ against $Y_1$ for the scalar blocks); and \Cref{fig:prior_feature_pairs_cross} shows \emph{cross-block} dependence for a chosen few block pairs -- the parent$\to$child edges $U$--$W$, $U$--$Z$, the proxy pair $W$--$Z$ (associated only through the shared latent $U$), and the confounder against the propensity $U$--$\pi$.

\begin{figure}[h]
    \centering
    \includegraphics{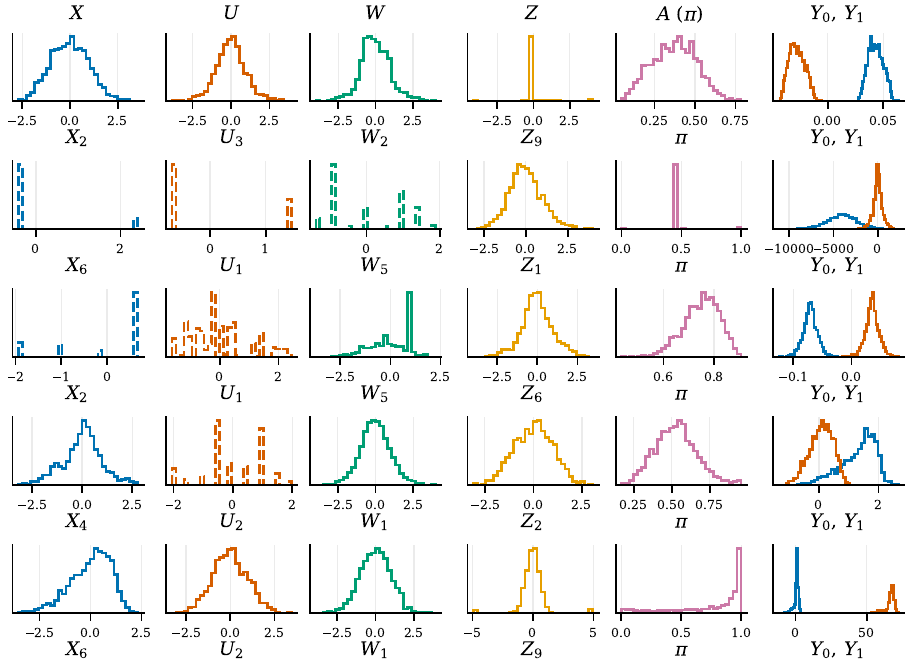}
    \caption{Gallery of feature marginals: one column per block ($X$, $U$, $W$, $Z$, the propensity $\pi=P(A{=}1\mid\cdot)$ for $A$, both potential outcomes for $Y$), one row per randomly sampled dataset. Each panel is one feature column's histogram, solid for continuous columns and dashed for categorical ones; the $Y$ column overlays $Y_0$ (blue) and $Y_1$ (red). Feature blocks share the standardised axis $\mathrm{Corrupt}$ leaves; $\pi\in[0,1]$ and $Y$
    keep their natural scale.}
    \label{fig:prior_feature_distributions}
\end{figure}

\begin{figure}
    \centering
    \includegraphics{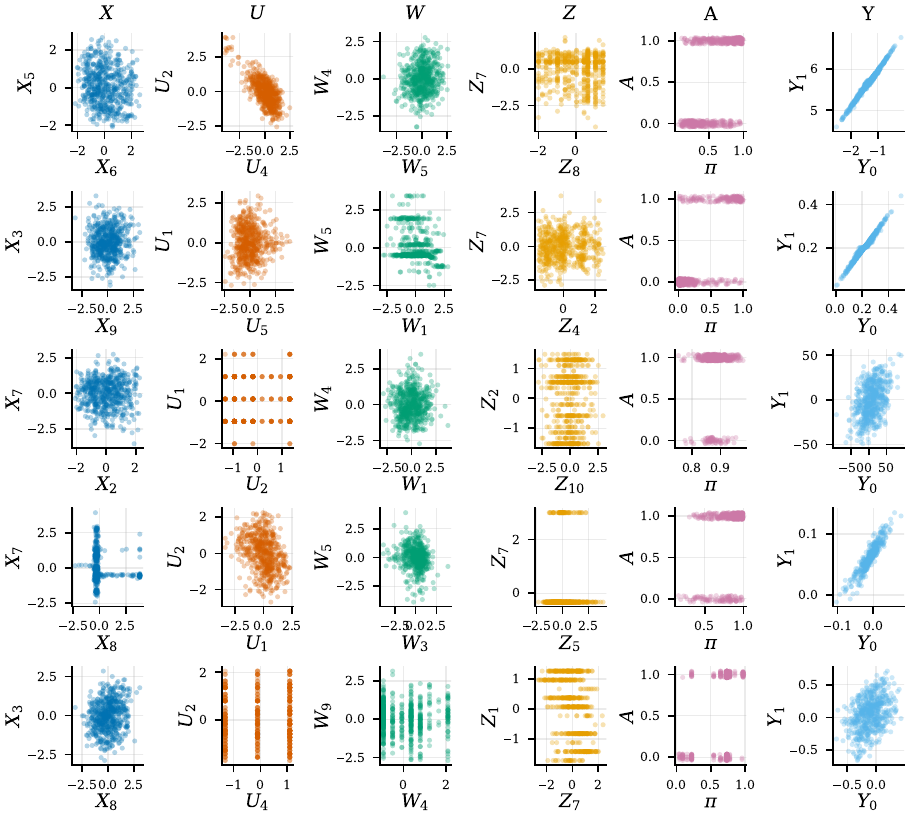}
    \caption{Within-block dependence: one column per block, one randomly sampled dataset per row, each panel a scatter of two features from that block. For $X$, $U$, $W$, $Z$ the two features are random columns of the block (sampled from datasets of width $\ge 2$); for $A$ the panel plots the propensity $\pi$ against the treatment draw $A$; for $Y$ the two potential outcomes $Y_0$ vs $Y_1$. Rows are subsampled for legibility; feature axes inherit the standardised scale.}
    \label{fig:prior_feature_pairs}
\end{figure}

\begin{figure}
    \centering
    \includegraphics{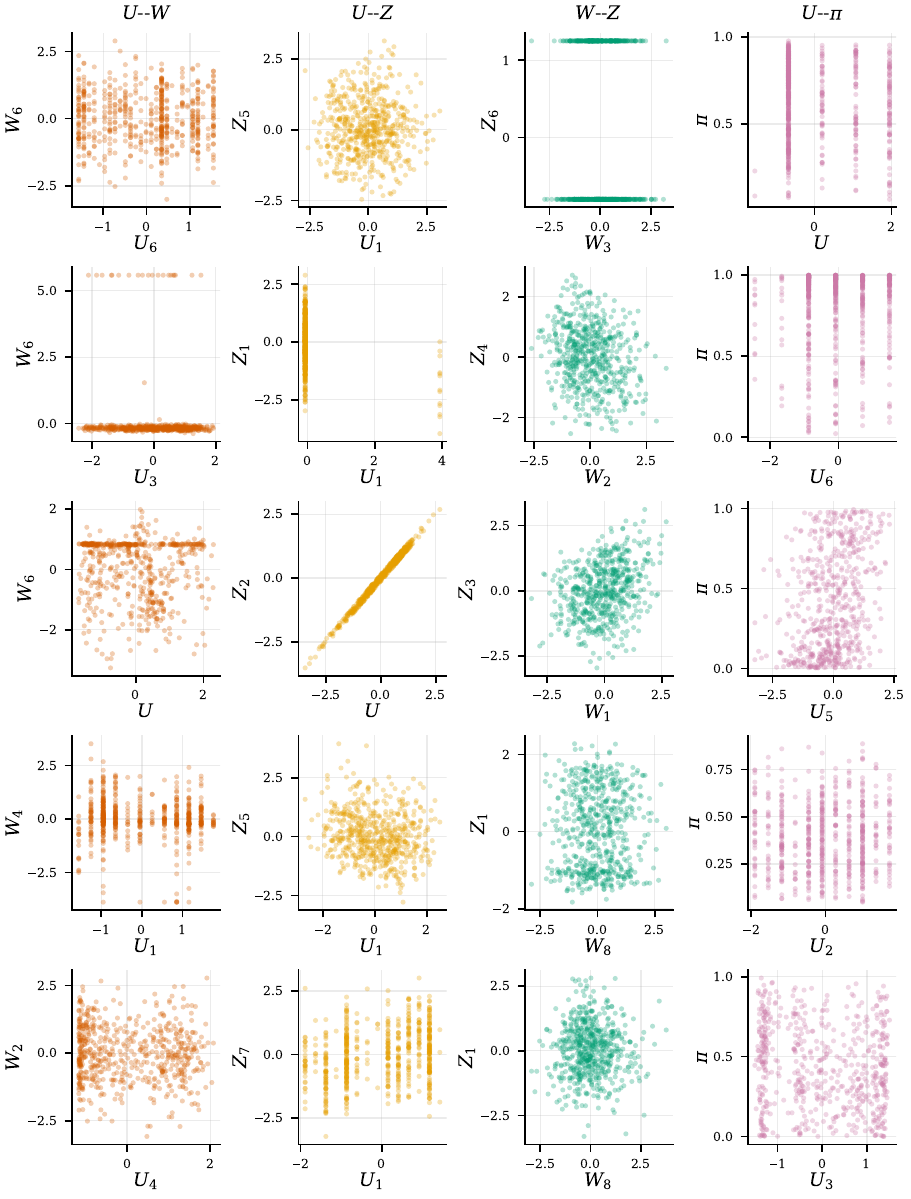}
    \caption{Cross-block dependence: one column per chosen block pair, one
    sampled dataset per row, each panel a scatter of one feature from each
    block. $U$--$W$ and $U$--$Z$ are the parent$\to$child edges; $W$--$Z$ is the
    proxy pair, associated only through the shared latent $U$ (no direct edge);
    $U$--$\pi$ is the confounder against the propensity $\pi=P(A{=}1\mid\cdot)$.
    Rows are subsampled for legibility; feature axes inherit the standardised
    scale.}
    \label{fig:prior_feature_pairs_cross}
\end{figure}

\clearpage
\subsection{Proximal assumption audit}
\label{app:prior_audit}

\FloatBarrier
\subsubsection{Overlap}
\label{app:prior_overlap}

Positivity (\Cref{as:positivity}) cannot fail in this prior: the clamp of \eqref{eq:A_clamp} confines $\pi$ to $[\varepsilon,1-\varepsilon]$ with $\varepsilon=10^{-3}$ chosen deliberately small (see \Cref{tab:prior_fixed_hp}), so $\pi$ is never exactly $0$ or $1$ and every row carries nonzero mass in both treatment arms by construction. Strictly, the stored $\pi$ is realised conditional on the unit's full set of mechanism inputs $(X,U,Z,\epsilon_A)$ rather than the marginal $\Pr[A=1\mid X,U]$ that \Cref{as:positivity} is stated over, since $A$'s logit also depends on the proxy $Z$ and the mechanism's own exogenous noise; but the marginal is the expectation of this finer-grained $\pi$ over $Z,\epsilon_A$, so keeping every realised $\pi$ away from $0,1$ is a sufficient, if anything conservative, condition for the marginal to be as well, and auditing the stored values directly is valid. The question worth auditing is therefore not whether positivity holds but how close the sampled overlap knobs push $\pi$ toward that clamp in practice, since a propensity pinned near $0$ or $1$ leaves little residual randomness in treatment assignment for any estimator to exploit even though positivity technically holds. The audit needs no additional estimation: \Cref{fig:prior_overlap} is a direct summary of these stored propensities.

Because $\varepsilon$ is small, the clamp only bites at the very extremes and the vast majority of rows sit well within it: across the prior, $84.0\%$ of all rows fall inside the conventional band $[0.05,0.95]$, and only $42.6\%$ of datasets contain even a single row on the clamp, with the median dataset containing none. Overlap is nonetheless deliberately stressed away from that safe interior: the median dataset still places $13.6\%$ of its rows outside $[0.05,0.95]$, since the sharpness $s$ of \eqref{eq:A_affine} is sampled precisely to span easy and hard assignment regimes.

\begin{figure}[h]
    \centering
    \includegraphics{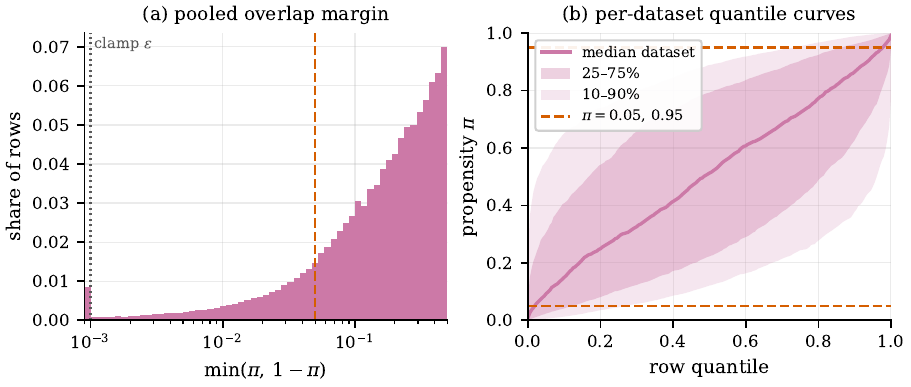}
    \caption{Delivered overlap across the prior, from the stored true propensity $\pi=\Pr[A=1\mid X,U]$ (no estimation involved). (a) Pooled distribution of the overlap margin $\min(\pi,1-\pi)$ over all rows of $1024$ datasets, on a logarithmic axis: the margin measures each unit's distance from a degenerate assignment, folding both tails into one axis, and the log scale is what makes the mass piled on the clamp $\varepsilon=10^{-3}$ (dotted) visible at all. Bars are per-bin shares of all rows on a logarithmic grid, so bin width grows to the right. (b) Per-dataset quantile function of $\pi$, summarised across datasets by the median curve with the $25$--$75\%$ and $10$--$90\%$ inter-dataset bands. Dashed lines mark the practical-overlap band $[0.05,0.95]$ in both panels.}
    \label{fig:prior_overlap}
\end{figure}

\FloatBarrier
\subsubsection{Exclusion restrictions}
\label{app:prior_exclusion}

The proxy exclusion restriction of \Cref{as:proxy_exclusion}, $Y\indep Z\mid(A,X,U)$ and $W\indep(A,Z)\mid(X,U)$, holds \emph{by construction}: \eqref{eq:uwz} never wires $Z$ into $f_W$ or $f_Y$, nor $W$ into $f_Z$ or $f_A$. We audit this with a cross-fitted regression: on the $Y$ side, the incremental $\Delta R^2_Z=R^2(Y_{\mathrm{obs}}\mid A,X,U,Z)-R^2(Y_{\mathrm{obs}}\mid A,X,U)$ of adding $Z$ to a conditioning set that already contains everything the clause names; on the $A$ side, the analogous $\Delta R^2_W$ for $W$ against the raw treatment logit $\mathrm{logit}(\pi)$ given $(X,U,W)$. Both are expected to be $\approx0$.

A near-zero $\Delta R^2$ is only informative if the estimator could have detected a leak had one been present. We therefore pair each check with a positive control that repeats the same regression with $U$ dropped from the conditioning set (Y-side: $\{A,X,Z\}$; A-side: $\{X,W\}$): since $U$ is the only path linking $Z$ to $Y$ and $W$ to $A$, removing it makes the excluded block genuinely predictive, and the control must light up clearly nonzero. Because the control's power scales with how much confounding a given draw actually carries, the figure also colours and ranks each point by \emph{u\_dependence} $=R^2(\mathrm{with}\ U)-R^2(\mathrm{without}\ U)$, the unique contribution of $U$ to the target: the control should fan upward with warmer colour, while the check stays flat and near zero regardless of colour.

\Cref{fig:prior_exclusion} confirms this reading. Both checks sit on zero (median $\Delta R^2_Z=-0.0019$, $\Delta R^2_W=-0.0032$), while both controls are slightly positive and rise further on the third of datasets with the largest $U$-dependence, with a positive Spearman correlation between $u\_dependence$ and the control ($0.375$ Y-side, $0.415$ A-side) visible as the colour gradient climbing alongside the control column. The zero-leak reading is therefore not an artefact of a blind test: the same regression detects a real, confounder-mediated dependence whenever $U$ is removed, and detects none when the exclusion restriction is left intact.

\begin{figure}
    \centering
    \includegraphics{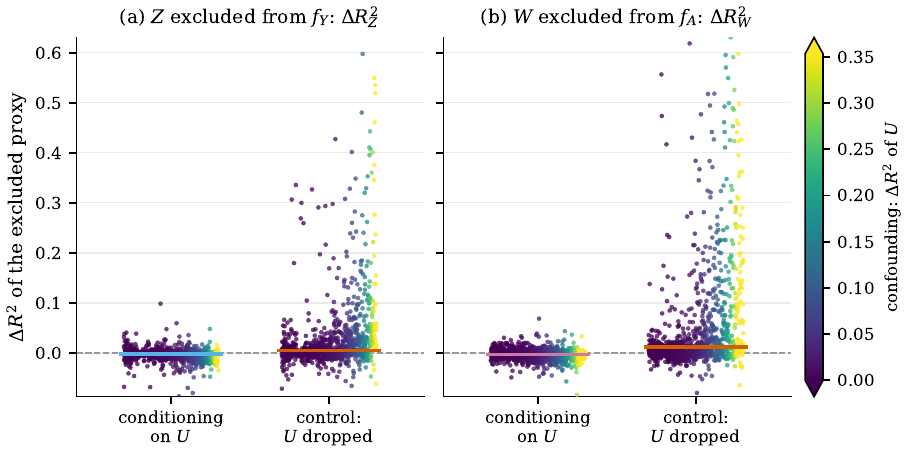}
    \caption{Exclusion-restriction audit (\Cref{as:proxy_exclusion}) over $1024$ datasets. Each panel is one clause -- (a) $Z$ excluded from $f_Y$, (b) $W$ excluded from $f_A$ -- and each column a cross-fitted incremental $\Delta R^2$ of the excluded block: \emph{conditioning on $U$} is the check (expected $\approx0$), \emph{control: $U$ dropped} the positive control (expected clearly positive). Colour and horizontal position both encode $u\_dependence=R^2(\mathrm{with}\ U)-R^2(\mathrm{without}\ U)$, i.e.\ how much confounding that draw carries (linear ramp, floored at $0$, capped at the $95$th percentile). Both checks stay on zero regardless of colour; both controls fan upward with warmer colour, most visibly in the top tercile of $u\_dependence$ (\Cref{tab:prior_exclusion}).}
    \label{fig:prior_exclusion}
\end{figure}

\begin{table}
\centering
\caption{Exclusion-restriction audit (\Cref{as:proxy_exclusion}): cross-fitted incremental $\Delta R^2$ of the excluded block over the stated conditioning set, for the check (conditions on $U$) and the positive control ($U$ dropped). \emph{Top confounded} is the third of datasets with the largest $u\_dependence$ ($R^2$ with $U$ minus $R^2$ without it).}
\label{tab:prior_exclusion}
\begin{tabular}{lllrrr}
\toprule
conditioning set & role & datasets & median $\Delta R^2$ & 90th pct & share $> 0.02$ \\
\midrule
\multicolumn{6}{l}{\textbf{$Z$ excluded from $f_Y$}} \\
$\{A, X, U, Z\}$ & conditions on $U$ & all & -0.002 & 0.007 & 0.021 \\
$\{A, X, U, Z\}$ & conditions on $U$ & top confounded & -0.005 & 0.004 & 0.012 \\
$\{A, X, Z\}$ & positive control & all & 0.004 & 0.099 & 0.313 \\
$\{A, X, Z\}$ & positive control & top confounded & 0.029 & 0.222 & 0.598 \\
\midrule
\multicolumn{6}{l}{\textbf{$W$ excluded from $f_A$}} \\
$\{X, U, W\}$ & conditions on $U$ & all & -0.003 & 0.009 & 0.020 \\
$\{X, U, W\}$ & conditions on $U$ & top confounded & -0.004 & 0.004 & 0.021 \\
$\{X, W\}$ & positive control & all & 0.012 & 0.218 & 0.415 \\
$\{X, W\}$ & positive control & top confounded & 0.060 & 0.395 & 0.677 \\
\bottomrule
\end{tabular}
\end{table}

\FloatBarrier
\subsubsection{Latent exchangeability and confounding strength}
\label{app:prior_confounding}

Latent exchangeability (\Cref{as:latent_ignorability}) holds by construction. Treatment is drawn from $f_A(X,U,Z,\varepsilon_A)$, while both potential outcomes are generated without treatment as an input and depend on $X,U,W$ and independent outcome noise (\Cref{eq:uwz}). Conditional on $(X,U)$, the exogenous draws generating $Z$ and treatment are independent of the ones that generate $W$ and the potential outcomes. So, $Y(a)\indep A\mid(X,U)$ for each $a$.This establishes the assumption; we investigate the strength of the confounding of $U$ below, but this does not test the conditional independence or prove finite-sample unbiasedness.

\Cref{fig:prior_confounding} compares five estimators (unadjusted; T-learners adjusting for $X$ or $(X,W,Z)$; and oracle T-learners adjusting for $(X,U)$ or $(X,U,W)$) against the generated oracle ATE $\mathrm{mean}(Y_1-Y_0)$, with absolute errors divided by $\mathrm{sd}(Y_{\mathrm{obs}})$ (\Cref{tab:prior_confounding}). Median errors are close (unadjusted $0.041$; oracle-$(X,U,W)$ $0.025$). Errors above $0.1\,\mathrm{sd}(Y_{\mathrm{obs}})$ occur in $24.1\%$ of datasets without adjustment but only $10.7$--$10.8\%$ with either oracle. This reduction is consistent with residual confounding in some draws, although finite-sample estimation error also contributes to each rate. \Cref{fig:prior_confounding_share} groups datasets by the delivered $U$ share of $f_A$ and $f_Y$: the backdoor-$X$ error is higher mainly when the treatment mechanism receives a larger share of $U$-derived inputs. The pattern is modest at this sample size; adjusting for $U$ removes this confounding pathway, explaining why the oracle error does not rise in the same way.

\begin{figure}
    \centering
    \includegraphics{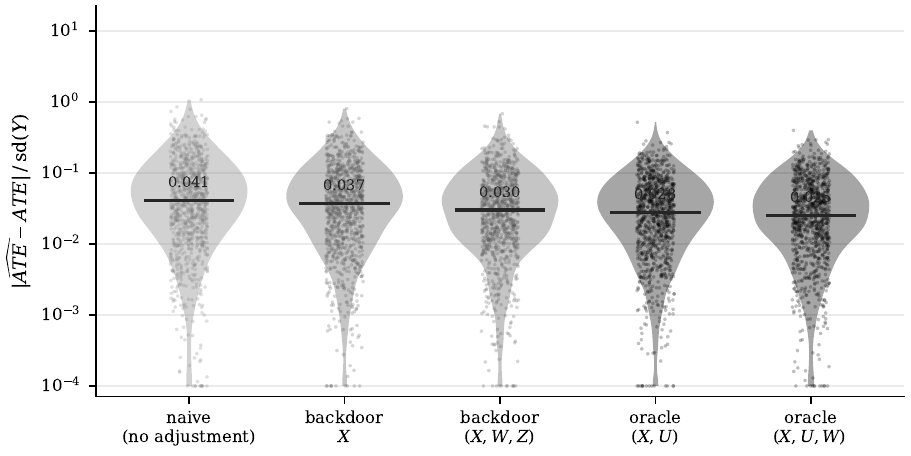}
    \caption{Confounding-strength comparison (\Cref{tab:prior_confounding}) over $1024$ datasets. Each violin shows normalised absolute ATE error $|\widehat{\mathrm{ATE}}-\mathrm{ATE}|/\mathrm{sd}(Y_{\mathrm{obs}})$ on a log axis, scored against $\mathrm{mean}(Y_1-Y_0)$: \emph{naive} (no adjustment), \emph{backdoor} $X$ and $(X,W,Z)$ (T-learners on measured covariates), and \emph{oracle} $(X,U)$ and $(X,U,W)$ (T-learners also given the latent confounder).}
    \label{fig:prior_confounding}
\end{figure}

\begin{figure}
    \centering
    \includegraphics{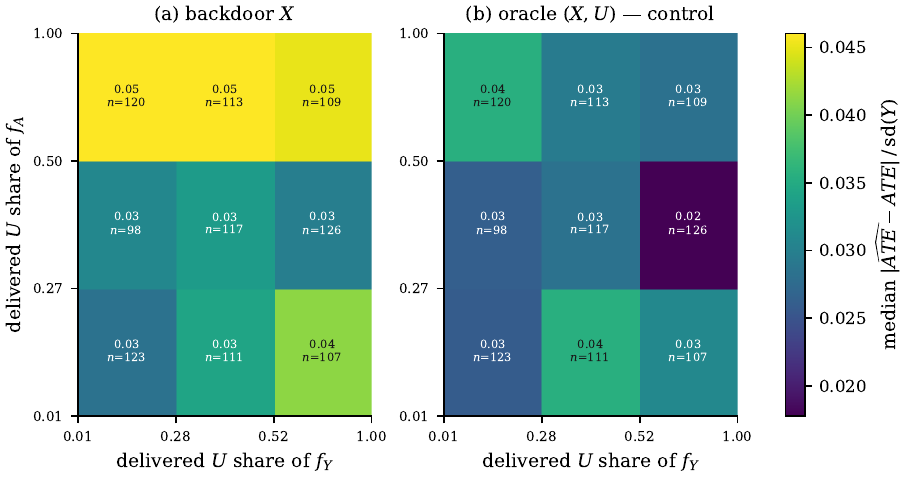}
    \caption{Backdoor-$X$ and oracle-$(X,U)$ error over the delivered confounding-share grid: $1024$ datasets binned $3\times3$ on the quantiles of the delivered $U$ share of $f_A$ (rows) and $f_Y$ (columns) (\Cref{fig:prior_mechanism_input_composition}). Cells report the median normalised error on a shared colour scale. The backdoor panel's top row (delivered $U$ share of $f_A$ above $0.495$) is elevated at $0.045$--$0.046$ against $0.028$--$0.041$ in the two rows below, and is nearly flat across $f_Y$'s share. The oracle panel shows no comparable pattern.}
    \label{fig:prior_confounding_share}
\end{figure}

\begin{table}
\centering
\caption{ATE error across adjustment sets. Errors are absolute deviations from the in-sample $\mathrm{mean}(Y_1-Y_0)$, divided by $\mathrm{sd}(Y_{\mathrm{obs}})$. All adjusted estimates use cross-fitted T-learners; the oracle rows include the latent $U$.}
\label{tab:prior_confounding}
\small
\begin{tabularx}{\textwidth}{ll>{\raggedright\arraybackslash}Xlll}
\toprule
rung & adjustment set & role & median error & 90th pct & share $> 0.1$ \\
\midrule
naive & none & unadjusted & 0.041 & 0.195 & 0.241 \\
backdoor X & $X$ & correct if no latent confounder & 0.037 & 0.163 & 0.211 \\
backdoor XWZ & $(X, W, Z)$ & DAG-blind: everything measured & 0.030 & 0.121 & 0.148 \\
oracle XU & $(X, U)$ & identification: minimal sufficient set & 0.028 & 0.104 & 0.108 \\
oracle XUW & $(X, U, W)$ & efficiency: adds $W$, a parent of $f_Y$ & 0.025 & 0.105 & 0.107 \\
\bottomrule
\end{tabularx}
\end{table}

\FloatBarrier
\subsubsection{Proxy relevance}
\label{app:prior_relevance}

Completeness (\Cref{as:completeness}) asks that $W$ and $Z$ carry \emph{enough variation} in $U$ for the bridge functions to be identified, not merely that they are wide enough. \Cref{app:prior_dimensions,app:prior_postprocess} enforce only a counting surrogate -- $d_W,d_Z\ge d_U$ and its categorical analogue $\mathrm{dominates}(P,U)$ -- which holds on $100\%$ of delivered datasets by construction and is silent on whether satisfying it buys \emph{functional} recoverability of $U$. We audit that directly with a cross-fitted regression: for each column $u_j$ of $U$ and each proxy $P\in\{W,Z\}$, the incremental $\Delta R^2_P(u_j)=R^2(u_j\mid X,P)-R^2(u_j\mid X)$, incremental over $X$ because $X$ is itself a parent of $U$ (Equation~\ref{eq:uwz}) and a proxy that merely re-encoded it would be useless for identification however high its raw $R^2$. We aggregate each proxy's $\Delta R^2$ over $U$'s columns two ways: the mean (treating every column as equally important) and the minimum, since completeness is a claim about \emph{all} of $U$ and one unrecoverable direction is enough to break it.

\Cref{fig:prior_relevance} reads the same quantity three ways. Panel (a) shows both proxies are relevant on the typical draw (median $\Delta R^2\approx0.28$, over $93\%$ of datasets positive) but the weakest $U$ column is a materially harder story (median $\Delta R^2\approx0.05$, roughly $60\%$ of datasets below the $0.1$ weak line). Panel (b) conditions on each proxy's own dimension slack $d_P-d_U$: relevance rises with slack, validating the heuristic's spirit. Panel (c) conditions on the proxy mechanism's sampled noise share, and relevance falls as that share grows, as the dilution story predicts.  \Cref{tab:prior_relevance} separates the two components of slack. Mean relevance has a modest positive correlation with proxy width ($\rho=0.166$ for both proxies). Weakest-column relevance rises with slack ($\rho\approx0.28$) and falls with $d_U$ ($\rho\approx-0.26$), but has almost no association with proxy width alone ($\rho=0.034$ for $W$, $0.017$ for $Z$). Because the prior samples proxy width conditional on $d_U$, wider proxies tend to accompany higher-dimensional $U$; recovering the weakest column can therefore remain difficult as proxy width increases, especially since taking a minimum over more $U$ columns tends to lower that statistic.

\begin{figure}[t]
    \centering
    \includegraphics{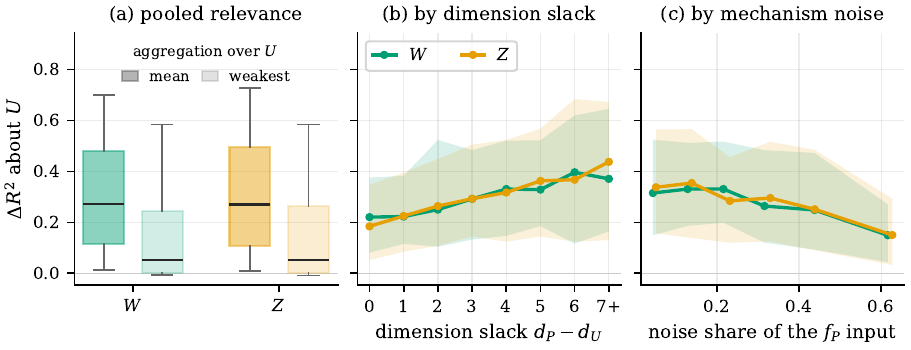}
    \caption{Proxy relevance measured by the cross-fitted increment $\Delta R^2_P(u_j)=R^2(u_j\mid X,P)-R^2(u_j\mid X)$, shown separately for $W$ and $Z$. (a) Distribution of the mean and weakest-column $\Delta R^2$ over $U$; boxes show the IQR and whiskers the $10$th--$90$th percentiles. (b) Median $\Delta R^2$ by proxy dimension slack $d_P-d_U$, with slack $\ge7$ pooled. (c) Median $\Delta R^2$ by sampled noise share of $f_P$, in six equal-frequency bins per proxy. Bands in (b) and (c) show the IQR.}
    \label{fig:prior_relevance}
\end{figure}

\begin{table}
\centering
\caption{Proxy-relevance audit (\Cref{as:completeness}): cross-fitted incremental $\Delta R^2 = R^2(u_j \mid X, P) - R^2(u_j \mid X)$ of each proxy $P$ for the latent columns, aggregated over $U$ by the mean or by the weakest column. The counting heuristic (\Cref{app:prior_postprocess}) passes for every dataset, so the \emph{share} column is the rate at which it and functional recoverability disagree. $\rho$ is Spearman's correlation with each candidate driver: the proxy's own width $d_P$, $d_U$, the slack $d_P - d_U$ that conflates the two, and the sampled noise share of the $f_P$ input.}
\label{tab:prior_relevance}
\small
\begin{tabular}{lrrrrrr}
\toprule
aggregation over $U$ & median $\Delta R^2$ & share $< 0.1$ & $\rho$ $d_P$ & $\rho$ $d_U$ & $\rho$ $d_P - d_U$ & $\rho$ noise \\
\midrule
\multicolumn{7}{l}{\textbf{$W$}} \\
mean & 0.273 & 0.226 & 0.166 & -0.002 & 0.208 & -0.202 \\
weakest & 0.053 & 0.593 & 0.034 & -0.249 & 0.277 & -0.140 \\
\midrule
\multicolumn{7}{l}{\textbf{$Z$}} \\
mean & 0.271 & 0.235 & 0.166 & -0.023 & 0.226 & -0.208 \\
weakest & 0.052 & 0.588 & 0.017 & -0.272 & 0.281 & -0.095 \\
\bottomrule
\end{tabular}
\end{table}

\clearpage
\FloatBarrier

\clearpage
\section{Model architecture, objective, training, and validation}
\label{app:model}

\subsection{Architecture}
\label{app:model_arch}

\modelname encodes a dataset in the three sequential stages of TabICLv2 \citep{quTabICLv2BetterFaster2026}, which we use unchanged. A \emph{column embedder} maps each scalar cell to an embedding through a shared inducing-point set transformer applied along the unit axis, so that a cell is represented relative to the empirical distribution of its own column. A \emph{row interactor} then treats the embedded cells of a unit as a sequence of tokens along the column axis, applies a transformer with rotary positional encoding, and concatenates the outputs of $n_{\mathrm{cls}}$ prepended learnable CLS tokens into a single vector per unit. Finally, an \emph{in-context learner} runs a transformer over the sequence of unit vectors in which query units attend to context units but not to one another; no positional encoding is applied along the unit axis, so the model is equivariant to the ordering of the dataset. All blocks are pre-norm and use the query-aware scalable softmax of \citet{quTabICLv2BetterFaster2026}. Our only removals are the classification-specific components, which a continuous causal estimand does not use; the remaining departures are described below.

\paragraph{A target that is never observed.}
Every departure below follows from one change of task. In tabular in-context learning the query is a context row with a missing label: the model predicts $y^\star$, the same variable it has just seen realized in the context, and the query differs from a context unit only in that one entry is withheld. \modelname predicts $Q(x^\star) = \E[Y(1) - Y(0) \mid X = x^\star]$, which is realized for no unit in the table. The observed columns are evidence about the mechanism that generates it, and supervision comes from the prior's oracle targets rather than from held-out labels.

Two context columns consequently have no query counterpart. The outcome $y$ is context-only in TabICL as well, but there it is withheld because it is the answer; here it is withheld because a query unit is a covariate profile, not a realized unit, and the same holds for the treatment $a$: the estimand averages over both arms, so no value of $a$ can sensibly be attached to the query vector $x^\star$. The proxies are a different case. A query unit can have a $w^\star$ and a $z^\star$, and the prior generates them, so withholding them is a design choice rather than a structural necessity: conditioning on them would target $\E[Y(1) - Y(0) \mid X, W, Z]$ instead. We follow the proximal literature in conditioning on the baseline covariates alone (\Cref{sec:bridge_functions}) and marginalizing over the proxies. At inference the model therefore sees the context table in full and a query reduced to $x^\star$ alone. The modifications below adapt the backbone to exactly this interface.

\paragraph{Structural role embeddings.}
The backbone from TabICL consumes a single undifferentiated feature matrix, whereas proximal identification treats the three observed blocks asymmetrically: the roles of $Z$, $W$ and $X$ are necessary to be known in order to infer the causal estimand of interest. Nothing in the values themselves reveals that role. Column order alone cannot supply it either: the block sizes $(d_X, d_Z, d_W)$ are resampled for every dataset, so no fixed column index marks where one block ends and the next begins, and the model would have to infer the two boundaries from the data before it could use them. We therefore tag the role explicitly. Writing $\mathrm{lin}(\cdot)$ for the shared cell projection into $\R^E$, a cell $v$ of block $g \in \{X, Z, W\}$ enters the column embedder as
\begin{equation}
    \mathrm{lin}(v) + r_g, \qquad r_X,\, r_Z,\, r_W \in \R^{E} \text{ learnable},
    \label{eq:app_role_embedding}
\end{equation}
after which the three blocks are concatenated along the column axis into a single sequence of $d_X + d_Z + d_W$ columns. Role identity is thus carried by the embedding rather than by position. The treatment $A$ is not passed as a column at all: it enters only through the target encoder described next.

\paragraph{Treatment-aware target encoder.}
The remaining two observed variables, $y$ and $a$, are not handled the way $X$, $Z$ and $W$ are. They are present for context units and absent for queries, so instead of being concatenated to the input row as columns, they are added to the tokens of context units. This also settles how $a$ is encoded: it qualifies $y$ rather than standing beside it, since an outcome means one thing under treatment and another under control. We therefore hold two projections and select between them with the observed treatment indicator,
\begin{equation}
    \mathrm{enc}(y_i, a_i) = (1 - a_i)\,\mathrm{lin}_{y_0}(y_i) + a_i\,\mathrm{lin}_{y_1}(y_i),
    \qquad \mathrm{lin}_{y_0},\, \mathrm{lin}_{y_1} : \R \to \R^{E}.
    \label{eq:app_treatment_aware_encoder}
\end{equation}
Outcome and arm are thus encoded jointly, and the treated and control response surfaces may occupy different directions in embedding space from the first layer onwards. The encoder is applied at two points, the column embedder and the in-context learner, with untied parameters.

\paragraph{Standardization and query-row proxy masking.}
Each of $X$, $Z$, $W$ and $y$ is standardized column-wise using the empirical mean and standard deviation of the \emph{context} units only, so that no query information reaches the model through the scale and predictions are invariant to affine rescaling of the inputs.

A query unit is then presented without its proxies. The $Z$ and $W$ positions of query rows are filled with a fixed value of $0$ and then masked out of the attention in the row interactor, so those tokens do not merely carry a neutral value but are invisible to the query representation altogether. The same protocol applies during training and at inference time.

\paragraph{Output head.}
For each query unit the in-context learner emits $B$ logits, passed through a softmax to give the predictive distribution $q_\theta[\,\cdot \mid x^\star, \mathcal{D}_N]$ over a fixed grid of $B$ equal-width bins spanning $[v_{\min}, v_{\max}]$, expressed in units of the context standard deviation of $y$. Discretizing the predictive distribution in this way is standard practice for PFNs \citep{hollmannTabPFNTransformerThat2023} and is what allows a single forward pass to return a full distribution without committing to a parametric family; amortized causal models adopt it as well \citep{balazadehCausalPFNAmortizedCausal2025}. A single head is placed directly over the CATE rather than one head per arm. Although $Q(x)=\mu_1(x)-\mu_0(x)$, the two CEPOs are functionals of the same unknown SCM and are therefore generally dependent under its posterior given $\mathcal{D}_N$. Their marginal posterior distributions alone do not determine the posterior of their difference: convolving them would implicitly impose posterior independence and omit their covariance, which can misstate the CATE uncertainty. Directly learning the distribution of $Q(x)$ avoids this unsupported joint-posterior assumption.

Two properties of the discretization matter for what follows. First, we read the bin probabilities as a piecewise-uniform \emph{density} rather than as point masses at the bin centers. This is what makes the noise convolution of \eqref{eq:app_blurred_histogram} exact, and it puts a floor of $w/\sqrt{12}$ --- the standard deviation of a single bin, of width $w = (v_{\max} - v_{\min})/B$ --- on any posterior standard deviation the head can report. Second, the grid is a hard support constraint: a target outside $[v_{\min}, v_{\max}]$ cannot be represented at all, so the bounds must be wide enough to cover the CATE distribution the prior produces. Chosen $v_{\max}$, $v_{\min}$ and $B$ are reported in \Cref{tab:model_hparams}.

\paragraph{Hyperparameters.}
\Cref{tab:model_hparams} lists the configuration of the three stages. Two entries are worth reading together. The column embedder and the row interactor operate at the embedding dimension $E$, but the in-context learner receives the concatenation of the $n_{\mathrm{cls}}$ CLS tokens and therefore runs at $E \cdot n_{\mathrm{cls}} = 512$; since it is also the deepest stage, it accounts for $28.3$M of the $29.6$M parameters, the column embedder and row interactor contributing $0.9$M and $0.4$M respectively. These parameters are kept identical to those of TabICLv2, so that its pre-trained weights can be used to initialize every component we did not modify.

\begin{table}[t]
    \centering
    \caption{Architectural hyperparameters of \modelname. The three stages are those of TabICLv2, kept at its values so that its pre-trained weights initialize every component we did not modify; the output head is ours. Parameter counts are per stage, the in-context learner's including the head.}
    \label{tab:model_hparams}
    \begin{tabular}{llr}
        \toprule
        Stage & Hyperparameter & Value \\
        \midrule
        \multirow{5}{*}{Shared}
            & Embedding dimension $E$ & 128 \\
            & Feedforward expansion factor & 2 \\
            & Activation & GELU \\
            & Normalization & pre-norm \\
            & Dropout & $0.0$ \\
        \midrule
        \multirow{4}{*}{Column embedder}
            & Blocks & 3 \\
            & Attention heads & 8 \\
            & Inducing points & 128 \\
            & Scalable softmax & query-aware, elementwise \\
        \midrule
        \multirow{4}{*}{Row interactor}
            & Blocks & 3 \\
            & Attention heads & 8 \\
            & CLS tokens $n_{\mathrm{cls}}$ & 4 \\
            & RoPE base & $10^{5}$ \\
        \midrule
        \multirow{3}{*}{In-context learner}
            & Blocks & 12 \\
            & Attention heads & 8 \\
            & Model dimension $E \cdot n_{\mathrm{cls}}$ & 512 \\
        \midrule
        \multirow{2}{*}{Output head}
            & Bins $B$ & 2\,000 \\
            & Grid $[v_{\min}, v_{\max}]$ & $[-5, 5]$ \\
        \midrule
        \multirow{4}{*}{Parameters}
            & Column embedder & $0.9$\,M \\
            & Row interactor & $0.4$\,M \\
            & In-context learner & $28.3$\,M \\
            & Total & $29.6$\,M \\
        \bottomrule
    \end{tabular}
\end{table}

\FloatBarrier
\subsection{The noise-aware histogram loss}
\label{app:noise_aware_loss}

This section gives the closed form of the training objective of \Cref{sec:model_training}, together with the motivation for scoring a \emph{noise-convolved} predictive distribution rather than fitting the Monte Carlo targets directly.

\paragraph{Notation and standardization.}
All quantities below live on the context-standardized scale. For a dataset with $N$ context rows, let $m_N$ and $s_N$ denote the empirical mean and standard deviation of the observed outcomes $\{y_i\}_{i=1}^N$. Outcomes are standardized as $(y - m_N)/s_N$, so the CATE target and its Monte Carlo variance enter as
\begin{equation}
    t_i \coloneqq \frac{\hat Q_K(x_i)}{s_N}, \qquad \varsigma_i^2 \coloneqq \frac{S_i^2}{s_N^2},
    \label{eq:app_standardized_target}
\end{equation}
the location shift $m_N$ cancelling in the difference $\hat\mu_1 - \hat\mu_0$. Because $m_N$ and $s_N$ are computed on context rows only, no query information leaks into the scale.

The head emits $B$ logits over a fixed uniform grid $e_1 < e_2 < \dots < e_{B+1}$ spanning $[v_{\min}, v_{\max}]$ with bin width $w = (v_{\max} - v_{\min})/B$, and $q_\theta[b \mid x_i, \mathcal{D}_N]$ denotes the predicted probability of bin $b = [e_b, e_{b+1})$. We write $\Phi$ for the standard normal CDF. The quantity $\sigma^2$ denotes the variance of an auxiliary Gaussian \emph{smoothing kernel}, expressed on this standardized scale; it is distinct from the row-specific Monte Carlo variance $\varsigma_i^2$ and is annealed during training according to the schedule described in \Cref{app:model_training}.

\paragraph{Closed form.}
Whereas $q_\theta$ is the model's predictive distribution over the estimand $Q(x_i)$, the quantity actually observed during training is the noisy measurement $t_i$; $\tilde{q}_\theta$ is the predictive density that $q_\theta$ induces over that measurement. Write the per-row variance as $\kappa_i^2 \coloneqq \varsigma_i^2 + \sigma^2$, combining the known target noise with the annealed smoothing scale (also known). Conditioning on which bin contains the estimand, the law of total probability gives
\begin{align}
    \tilde{q}_\theta\bigl(t \mid x_i, \mathcal{D}_N\bigr)
    &= \sum_{b=1}^{B}
        \vphantom{\Bigl[}\prob\bigl(Q \in [e_b, e_{b+1}) \mid x_i, \mathcal{D}_N\bigr) \;\cdot\;
        \vphantom{\Bigl[}p\bigl(t \mid Q \in [e_b, e_{b+1})\bigr)
    \nonumber \\[0.6em]
    &= \sum_{b=1}^{B} q_\theta[b \mid x_i, \mathcal{D}_N] \; \frac{1}{w}\left[ \Phi\!\left(\frac{e_{b+1} - t}{\kappa_i}\right) - \Phi\!\left(\frac{e_{b} - t}{\kappa_i}\right) \right].
    \label{eq:app_blurred_histogram}
\end{align}
The first factor is read straight off the head. The second follows from the piecewise-uniform continuous reading of the histogram: given that $Q$ lies in bin $b$, the head places it uniformly within that bin, so
\begin{equation}
    p\bigl(t \mid Q \in [e_b, e_{b+1})\bigr) = \int_{e_b}^{e_{b+1}} \frac{1}{w} \, \mathcal{N}\bigl(t;\, u,\, \kappa_i^2\bigr) \diff u
    = \frac{1}{w}\left[ \Phi\!\left(\frac{e_{b+1} - t}{\kappa_i}\right) - \Phi\!\left(\frac{e_{b} - t}{\kappa_i}\right) \right],
    \label{eq:app_bin_kernel}
\end{equation}
where the integral runs over the \emph{mean} of the Gaussian. The training objective is the negative log-likelihood of the observed target under \eqref{eq:app_blurred_histogram}, evaluated at $t = t_i$:
\begin{equation}
    \mathcal{L}_{\mathrm{noise\_aware}} = -\frac{1}{n_q} \sum_{i=1}^{n_q} \log \left( \sum_{b=1}^{B} q_\theta[b \mid x_i, \mathcal{D}_N] \; \frac{1}{w} \left[ \Phi\!\left(\frac{e_{b+1} - t_i}{\kappa_i}\right) - \Phi\!\left(\frac{e_{b} - t_i}{\kappa_i}\right) \right] \right).
    \label{eq:app_noise_aware_loss}
\end{equation}

\paragraph{Why convolve the model rather than fit the target.}
Consider the population risk of \eqref{eq:app_noise_aware_loss} at fixed context. The target is distributed as $t \sim p(\,\cdot \mid \mathcal{D}_N) \ast \mathcal{N}(0, \varsigma^2)$, the posterior over the estimand blurred by Monte Carlo error. A negative log-likelihood is minimized when the scored density matches the density of the data, i.e. when
\begin{equation}
    q_\theta \ast \mathcal{N}(0, \varsigma^2 + \sigma^2) = p(\,\cdot \mid \mathcal{D}_N) \ast \mathcal{N}(0, \varsigma^2)
    \quad \Longleftrightarrow \quad
    q_\theta \ast \mathcal{N}(0, \sigma^2) = p(\,\cdot \mid \mathcal{D}_N).
    \label{eq:app_fixed_point}
\end{equation}
The Monte Carlo variance cancels on both sides: the correction is exact for \emph{any} value of $\sigma$, and the recovered $q_\theta$ converges to the posterior as $\sigma$ is annealed. Fitting the same head directly to $t_i$ instead---the noise-unaware objective---has population optimum $q_\theta = p(\,\cdot \mid \mathcal{D}_N) \ast \mathcal{N}(0, \varsigma^2 + \sigma^2)$, over-dispersed by exactly the target noise. The distortion is not a constant offset that could be calibrated away post hoc: $\varsigma_i^2 = \Var(\tau \mid X = x_i)/K$ varies across query points, and is largest precisely where the individual treatment effect is most heterogeneous, so the inflation is heaviest in the regions where posterior width is most informative.

\paragraph{Full objective.}
To align the point prediction with the estimand, a squared-error penalty on the posterior mean is added,
\begin{equation}
    \mathcal{L}_{\mathrm{mean}} = \frac{1}{n_q} \sum_{i=1}^{n_q} \Bigl( \E_{q_\theta}\bigl[Q \mid x_i, \mathcal{D}_N\bigr] - t_i \Bigr)^2,
    \qquad
    \E_{q_\theta}\bigl[Q \mid x_i, \mathcal{D}_N\bigr] = \sum_{b=1}^{B} q_\theta[b \mid x_i, \mathcal{D}_N]\, c_b,
    \label{eq:app_mean_penalty}
\end{equation}
with $c_b = (e_b + e_{b+1})/2$ the bin centers, giving $\mathcal{L} = \mathcal{L}_{\mathrm{noise\_aware}} + \lambda_{\mathrm{mean}} \mathcal{L}_{\mathrm{mean}}$. The penalty does not compete with the distributional objective: because the Monte Carlo error is zero-mean, the minimizer of \eqref{eq:app_noise_aware_loss} already carries the posterior mean and hence minimizes \eqref{eq:app_mean_penalty} as well, leaving the population optimum of $\mathcal{L}$ unchanged for any $\lambda_{\mathrm{mean}} \ge 0$. At inference time the predictive distribution is reported \emph{un-blurred}, as $q_\theta(\cdot \mid x^\star, \mathcal{D}_N)$.

\FloatBarrier
\subsection{Training the model}
\label{app:model_training}

All results in this paper come from a single training run, and every number below is a property of that one run.

\paragraph{Configuration.}
\Cref{tab:training_hparams} reports the configuration in full: the prior the datasets are drawn from, the objective and its smoothing schedule, the optimizer, the pre-trained backbone the run is initialized from, and the validation sets used for checkpoint selection. Training runs for $75\,000$ steps at an effective batch of $64$ datasets, accumulated in micro-batches of $16$, for $4.8$M datasets consumed in total --- each drawn fresh from the prior and seen exactly once, so there is no notion of an epoch and no train/validation split on the prior itself.

\begin{table}[t]
    \centering
    \caption{Training configuration of \modelname: the prior the model is trained on, the objective and its smoothing schedule, the optimizer, the pre-trained backbone it is initialized from, and the validation sets used for checkpoint selection.}
    \label{tab:training_hparams}
    \begin{tabular}{llr}
        \toprule
        Stage & Hyperparameter & Value \\
        \midrule
        \multirow{8}{*}{Prior}
            & Confounder dimension $d_U$ & $1$--$10$ \\
            & Treatment proxy dimension $d_Z$ & $1$--$10$ \\
            & Outcome proxy dimension $d_W$ & $1$--$10$ \\
            & Covariate dimension $d_X$ & $0$--$10$ \\
            & Context units $N$ & $64$--$2\,048$ (uniform) \\
            & Query units $n_q$ & 128 \\
            & Monte Carlo worlds $K$ & 250 \\
            & MC targets on context units & no \\
        \midrule
        \multirow{9}{*}{Objective}
            & Loss & \texttt{noise\_aware} \\
            & Mean penalty $\lambda_{\mathrm{mean}}$ & $0.5$ \\
            & Bins $B$ & 2\,000 \\
            & Grid $[v_{\min}, v_{\max}]$ & $[-5, 5]$ \\
            & Bin width $w$ & $0.005$ \\
            & Initial smoothing $\sigma_0$ & $0.5$ \\
            & $\sigma$ held constant for & 500 steps \\
            & $\sigma$ linear decay over & 3\,000 steps \\
            & Final smoothing $\sigma$ & $0.01$ ($2w$) \\
        \midrule
        \multirow{10}{*}{Optimization}
            & Optimizer & AdamW \\
            & Learning rate & $1 \times 10^{-4}$ \\
            & Schedule & \texttt{cosine\_warmup} \\
            & Warmup & 2\,500 steps \\
            & Weight decay & $0$ \\
            & Gradient clipping & $1$ \\
            & Training steps & 75\,000 \\
            & Datasets per step & 64 (16 $\times$ 4 accumulated) \\
            & Precision & \texttt{bfloat16} (AMP) \\
            & Seed (numpy / torch) & 1234 / 1234 \\
        \midrule
        \multirow{3}{*}{Initialization}
            & Init. checkpoint & \texttt{ckpt\_tabicl/tabicl-regressor-v2-20260212.ckpt} \\
            & Transfer rules & \texttt{ckpt\_tabicl/transfer\_rules.yaml} \\
            & Weights only (fresh optimizer) & yes \\
        \midrule
        \multirow{5}{*}{Validation}
            & Prior set & \texttt{prior\_val\_v1} \\
            & Prior set: episodes / interval & 500 / 200 steps \\
            & Realistic set & \texttt{synt\_val\_v1} \\
            & Realistic set: episodes / interval & 480 / 100 steps \\
            & Selection metric & \texttt{synt\_dist/crps} (min) \\
        \bottomrule
    \end{tabular}
\end{table}

\paragraph{Schedules and training curves.}
\Cref{fig:training_curves} shows the objective and the two schedules. The learning rate follows a linear warmup over $2\,500$ steps into a cosine decay; the smoothing scale $\sigma$ of \eqref{eq:app_noise_aware_loss} is held at its initial value for $500$ steps and then annealed linearly over $3\,000$ steps to two bin widths, its final value for the remaining of the training (panel~c). The anneal is the reason the loss level is not comparable across the shaded window: changing $\sigma$ changes the training objective, so movement during the anneal cannot be attributed solely to learning.

Panels (a) and (b) show that, after the initial phase, the total objective becomes dominated by the distributional term, the mean term staying a small regularizer term. Once the $\sigma$ anneal is complete, both the distributional term and the total objective continue to decrease through step $75\,000$, but at a diminishing rate. Panel~(d) zooms on the mean term of the loss. We observe that it follows the behaviour of the distributional term closely.

\begin{figure}
    \centering
    \includegraphics{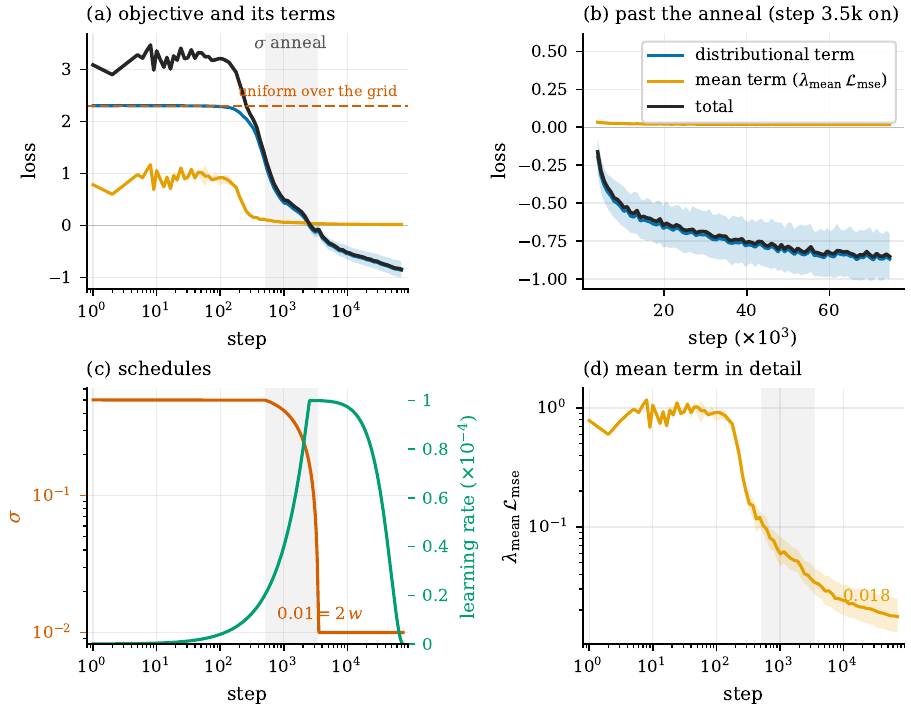}
    \caption{Training curves for \modelname. \textbf{(a)} The objective and its two terms against log step, with the $\sigma$-annealing window shaded; the dashed line is the loss of a uniform density over the CATE grid, matching the value at initialisation. \textbf{(b)} The same curves on a linear step axis after the anneal. \textbf{(c)} The $\sigma$ and learning-rate schedules. \textbf{(d)} A magnified view of the weighted mean term, $\lambda_{\mathrm{mean}}\mathcal{L}_{\mathrm{mse}}$, with the $\sigma$-annealing window again shaded. Stochastic curves show the median and interquartile band within log-spaced step bins in panels (a) and (d), and linearly spaced bins in panel (b); the deterministic schedules in panel (c) are shown without binning.}    \label{fig:training_curves}
\end{figure}

\paragraph{Validity of the objective.}
Three logged diagnostics establish that the loss the curves above report was well posed throughout. First, the grid of the output head is a hard support constraint: a target outside $[v_{\min}, v_{\max}]$ cannot be represented by any forecast on the grid, so a non-negligible out-of-grid fraction would invalidate every number scored on it. Averaged over the CATE targets in the on-the-fly synthetic training prior, the share of targets falling outside the grid is $2.0 \times 10^{-4}$, i.e.\ $0.02\%$, exceeding $1\%$ on $275$ of $75\,000$ steps. 

Second, the Monte Carlo targets carried their full replicate budget. Each query row's CATE target is an average over $K$ replicate draws, and the realized $K$ can be capped below the requested value by the generator's memory budget. The mean realized $K$ was $250$ --- the requested value --- at every logged step, so the cap never bound.

Third, sampling noise remained material at $K=250$ Monte Carlo replicates. The ratio of the mean target-noise standard deviation $\varsigma_i$ to the mean predicted posterior standard deviation averaged $0.20$, rising from $0.15$ shortly after the anneal to $0.22$ over the final third as the posterior sharpened while $\varsigma_i \propto K^{-1/2}$ remained fixed. Ignoring this noise would therefore add the heterogeneous per-row variance $\varsigma_i^2$ to the learned posterior variance, motivating the noise-aware correction.

\paragraph{Cost.}
\Cref{tab:training_cost} summarizes the computational cost of the run. Interestingly, model optimization was not the dominant expense: generating the synthetic prior consumed $43.46$ GPU-hours ($65.7\%$), approximately $2.4\times$ the cost of the forward and backward passes ($27.4\%$). Validation accounted for a further $5.4\%$, while checkpointing, logging, and startup contributed only $1.5\%$. In total, training consumed $66.15$ H100 GPU-hours over $75\,000$ steps and $4.8$ million synthetic datasets, averaging $3.17$ seconds per step.

\begin{table}
    \centering
    \caption{Cost of the \modelname training run, as measured GPU-hours. Validation is the two frozen evaluation datasets, scored at their logging intervals throughout training.}
    \label{tab:training_cost}
    \begin{tabular}{llrr}
        \toprule
        & & Hours & \% \\
        \midrule
        \multirow{5}{*}{Compute}
            & Prior generation & $43.46$ & $65.7$ \\
            & Forward / backward & $18.12$ & $27.4$ \\
            & Validation, prior set (376 passes) & $1.27$ & $1.9$ \\
            & Validation, realistic set (751 passes) & $2.34$ & $3.5$ \\
            & Checkpointing, logging, startup & $0.96$ & $1.5$ \\
        \midrule
        Total
            & GPU-hours ($1 \times$ H100 NVL) & $66.15$ & $100.0$ \\
        \midrule
        \multirow{3}{*}{Scale}
            & Training steps & 75\,000 &  \\
            & Synthetic datasets consumed & 4\,800\,000 &  \\
            & Seconds per step & $3.17$ &  \\
        \bottomrule
    \end{tabular}
\end{table}

\subsubsection{Post-training on larger context sizes}
\label{app:model_posttraining}

The main training run represents contexts of at most $2\,048$ observations.  We therefore perform a post-training experiment to test whether the limited support at larger context sizes contributes to the plateau observed in \Cref{sec:results_performance}.  Starting from the completed main-training checkpoint, we optimize for a total of $30\,000$ steps on the same causal prior and noise-aware objective, but draw context sizes uniformly from $64$ to $5\,000$ observations (with $128$ query rows).  Uniform, rather than log-uniform, sampling ensures that the adaptation gives substantial weight to the newly introduced large contexts while retaining smaller contexts for rehearsal.  We retain the effective meta-batch size of $64$ datasets, use a lower learning rate of $2\times10^{-5}$ with a $500$-step warmup, and enable activation recomputation to fit the longer sequences.

To make this adaptation inexpensive and limit changes to the learned representation, we load the complete main-training checkpoint and freeze all parameters except the in-context predictor.  Within that predictor, its first six transformer blocks are frozen as well, so only its later layers and unfrozen predictor parameters are optimized.  We evaluate the resulting checkpoints on separate frozen prior and realistic validation suites constructed at large context sizes.

\FloatBarrier
\subsection{Validation and posterior diagnostics}
\label{app:model_validation}

\subsubsection{Validation datasets}
\label{app:model_val_datasets}

We monitor the training of \modelname on a fixed suite of $480$ datasets, balanced across $12$ hand-written realistic proximal mechanisms. These mechanisms cover clinical severity and frailty, latent patient subtypes, threshold-based treatment assignment, utilization counts, censored survival, marketing uplift, educational selection, wage returns, hierarchical site effects, nonlinear ordinal proxies, interaction-driven confounding, and near-null effects under strong confounding. These settings span a diverse range of datasets: continuous, binary, ordinal and categorical variables, the treatment is binary, the outcomes may be continuous, skewed, count-valued, zero-inflated, or censored.

For each dataset, the dimensions of $U$, $X$, $W$ and $Z$ are sampled with the same procedure as in the prior (\Cref{app:prior_dimensions}). Each episode contains exactly $2048$ context rows and $128$ query points. The query target CATEs are computed by averaging $10\,000$ Monte Carlo replicates of ITE draws. At this replicate count, the Monte Carlo sampling noise is approximately $0.025$ times the predictive posterior standard deviation (of the last model checkpoint). The same fixed validation set is evaluated at every training checkpoint.

\subsubsection{Point-estimate validation}
\label{app:model_val_point}

We measure the point-estimate performance at both ATE and CATE levels. The ATE prediction is estimated as the average of the CATE predictions from the query set,
$$
\widehat{\mathrm{ATE}} = \frac{1}{n_q}\sum_{i=1}^{n_q}\hat{\tau}_i,
$$
and so is the target ATE also computed from the references CATEs. To evaluate the ATE estimation, we use the root mean squared error (RMSE) and the bias of the ATE normalized by the standard deviation of the episode's CATEs.

Regarding CATE evaluation, we use normalized precision in estimation of heterogeneous effect (nPEHE), 
\begin{equation}
    \mathrm{nPEHE}=\sqrt{\frac{\frac{1}{n_q}\sum_{i=1}^{n_q}(\hat{\tau}_i-\tau_i)^2}{\Var_i(\tau_i)}}.
\label{eq:npehe}
\end{equation}
We also measure the shape-$R^2$, which isolates the recovery of treatment-effect heterogeneity from
constant level error,
\begin{equation}
    R^2_{\mathrm{shape}} = 1- \frac{\Var_i(\hat{\tau}_i-\tau_i)} {\Var_i(\tau_i)}.
    \label{eq:r2_shape}
\end{equation}
A constant-ATE prediction gives $\mathrm{nPEHE}=1$ and $R^2_{\mathrm{shape}}=0$.

\Cref{fig:point_validation} shows the evolution of point-estimate performance on the fixed realistic-mechanism validation suite. In the initial stage of training, the model performs worse than the constant-ATE baseline (CATE nPEHE above $1$ and the shape $R^2$ is close to or below zero). After approximately $3\,500$ training steps, the median nPEHE falls below the constant-ATE baseline and the shape $R^2$ becomes positive. Performance then continues to improve throughout training: ATE RMSE decreases, normalized ATE bias moves towards zero, nPEHE falls further, and the shape $R^2$ increases. The narrowing interquartile bands indicate that performance also becomes more consistent across validation episodes.

\begin{figure}
    \centering
    \includegraphics{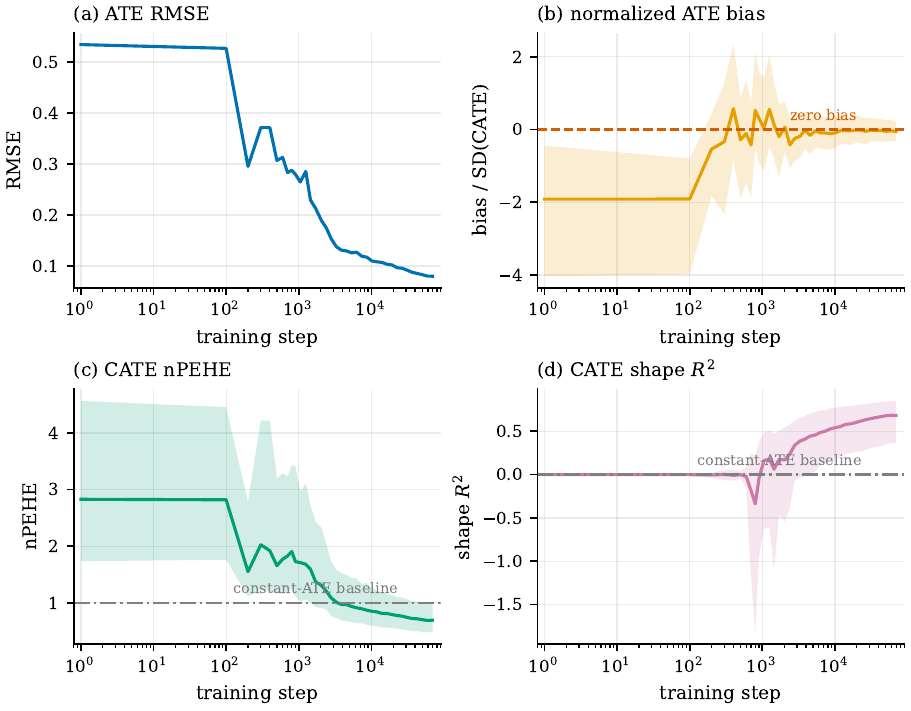}
    \caption{Evolution of point-estimate performance during training on the fixed realistic-mechanism validation suite (\Cref{app:model_val_datasets}). Curves show the validation metric at each evaluation step. The shaded regions in panels (b)--(d) indicate the interquartile range across validation episodes. \textbf{(a)} ATE RMSE, a pooled metric and therefore shown without an episode-level band. \textbf{(b)} normalized ATE bias, where zero denotes no bias. \textbf{(c)} CATE nPEHE, with the dashed line marking the constant-ATE baseline ($\mathrm{nPEHE}=1$). \textbf{(d)} CATE shape $R^2$, with the dashed line marking the corresponding constant-ATE baseline ($R^2_{\mathrm{shape}}=0$). Lower values are preferred in panels (a) and (c), while panel (b) is centered at zero and higher values are preferred in panel (d). For readability, curves and shaded bands are lightly smoothed by taking medians within log-spaced training-step bins.}    
    \label{fig:point_validation}
\end{figure}

\subsubsection{Distributional validation}
\label{app:model_val_distri}

We evaluate the predictive posterior using the continuous ranked probability score (CRPS), the predictive standard deviation relative to the prediction error, and empirical coverage of central credible intervals. For query row $i$, let $q_i$ denote the predicted distribution over $Q(x_i)$, with CDF $F_i$, mean $\mu_i$, and variance $v_i^2$. Let $Q_{K,i}$ denote the Monte Carlo CATE target and let $\varsigma_i^2$ denote its estimated sampling variance.

The CRPS evaluates the full predictive distribution against the target:
\begin{equation}
    \operatorname{CRPS}(q_i, Q_{K,i})
    =
    \int
    \left[
        F_i(t) - \1\{t \geq Q_{K,i}\}
    \right]^2 \,\mathrm{d}t .
    \label{eq:validation_crps}
\end{equation}

We also report the ratio of predictive standard deviation to prediction error,
\begin{equation}
    \mathrm{SD/RMSE}
    =
    \sqrt{
        \frac{
            \frac{1}{M}\sum_{i=1}^{n_q} v_i^2
        }{
            \frac{1}{n_q}\sum_{i=1}^{n_q}
            \left(\mu_i-Q_{K,i}\right)^2
            -
            \frac{1}{M}\sum_{i=1}^{n_q}\varsigma_i^2
        }
    },
    \label{eq:validation_sd_over_rmse}
\end{equation}
where $n_q$ is the total number of validation query rows. The subtraction of $\varsigma_i^2$ removes the Monte Carlo noise from the observed prediction error. This denominator equals the average predictive variance under calibration, in expectation. Values below one indicate overconfident predictions, whereas values above one indicate underconfidence.

\Cref{fig:distri_validation} displays the evolution of these distributional metrics during training. CRPS decreases as the model learns. Both the ratio SD/RMSE and the empirical coverages converge to overconfident states.

\begin{figure}
    \centering
    \includegraphics{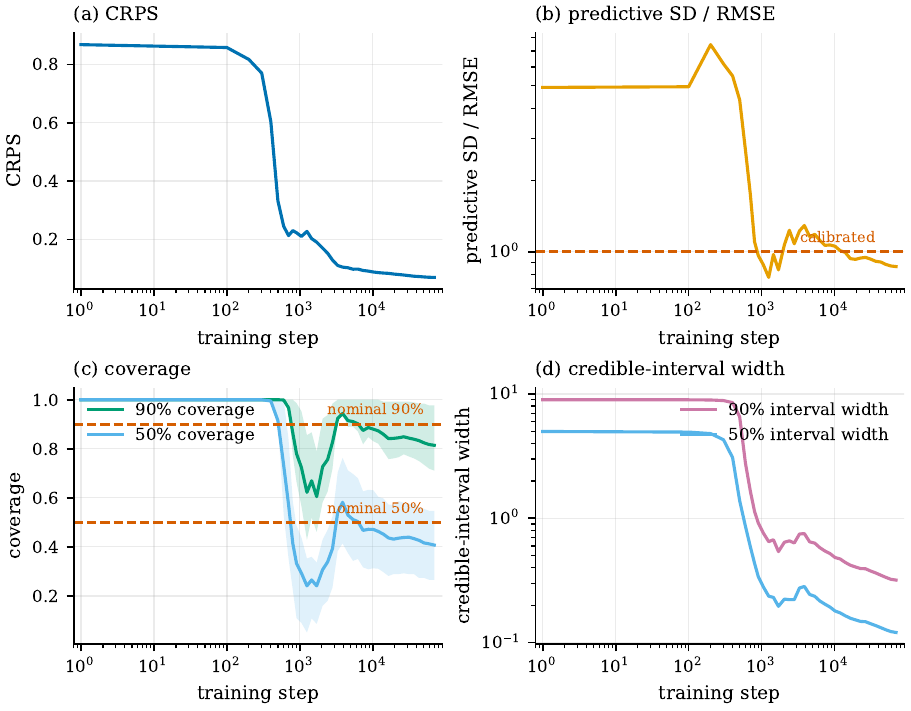}
    \caption{Evolution of distributional predictive performance during training on the fixed realistic-mechanism validation suite (\Cref{app:model_val_datasets}). \textbf{(a)} CRPS, for which lower values indicate better probabilistic predictions. \textbf{(b)} predictive standard deviation divided by RMSE, with the dashed line at one denoting calibrated predictive spread. \textbf{(c)} empirical 90\% and 50\% coverages, with the dashed lines marking the nominal coverage levels and the shaded regions showing the interquartile ranges across validation episodes. \textbf{(d)} widths of the central 90\% and 50\% credible intervals, pooled over validation rows. Panels (b) and (d) use logarithmic y-axes. For readability, curves are lightly smoothed by taking medians within log-spaced training-step bins.}    
    \label{fig:distri_validation}
\end{figure}

\subsubsection{Posterior representation \& smoothing}
\label{app:model_val_smoothing}

The histogram representation of the predictive posterior can produce visually spiky densities. Small changes in the allocation of probability across neighbouring bins may have little effect on the training objective while producing noticeably different density shapes. We therefore use the CDF as an intermediate representation for visualization. A Savitzky--Golay filter is applied to the raw CDF, and the corresponding density is obtained by numerical differentiation. As illustrated in \Cref{fig:posterior_spikes}, the smoothed CDF stays very close to the original CDF, but its implied density is easier to interpret. This postprocessing is applied only for visual inspection and does not alter the predictive distribution used to compute validation metrics. Future work may investigate regularization techniques during training to encourage smoother posterior estimates.

\begin{figure}
    \centering
    \includegraphics{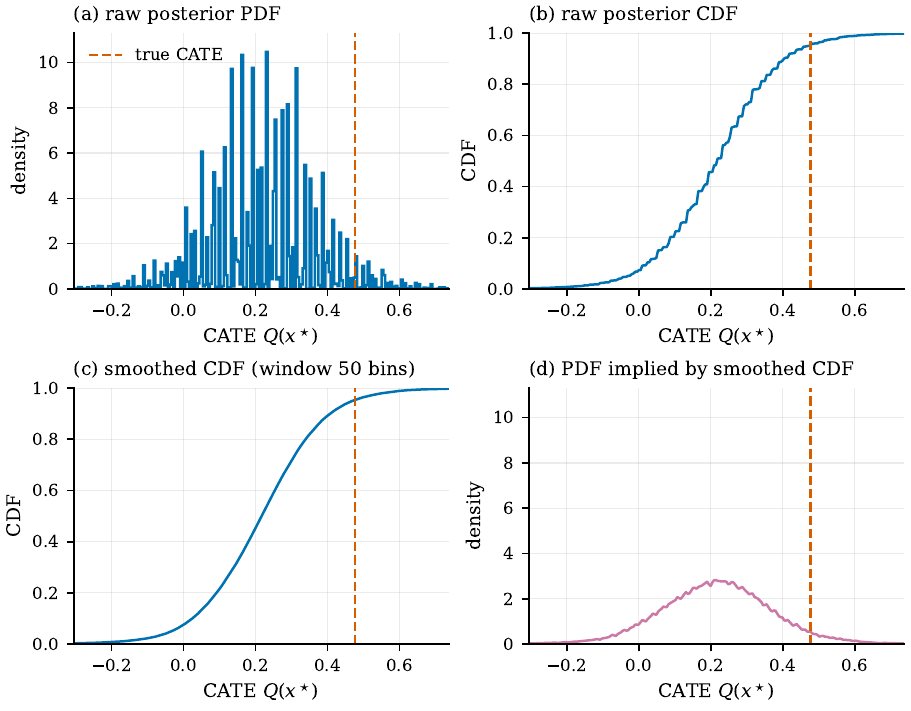}
    \caption{Illustration of the spiky posterior-density representation for one selected query point from the validation suite. \textbf{(a)} Raw posterior density induced by the model's histogram output. \textbf{(b)} Corresponding raw CDF. \textbf{(c)} CDF after Savitzky--Golay smoothing with a 50-point window and polynomial degree three. \textbf{(d)} Density obtained by differentiating the smoothed CDF. The smoothed CDF closely follows the raw CDF while yielding a more visually interpretable density. The dashed line marks the true CATE. Smoothing is used for visualization only; all validation metrics are computed from the original posterior representation.}
    \label{fig:posterior_spikes}
\end{figure}

\subsubsection{Posterior dynamics}
\label{app:model_val_posterior_dynamics}

\Cref{fig:posterior_contraction} shows that posterior width is initially independent of context size: at the first checkpoints, all context sizes produce similarly wide predictions. Over the first thousands of training steps, the width stays independent of the context size, but becomes narrower. Later in training, context-dependent contraction becomes evident, with larger context sizes producing narrower posteriors while smaller contexts remain wider. At the final checkpoint, the mean predictive standard deviation decreases from approximately $0.54$ at $n_{\mathrm{ctx}}=16$ to $0.10$ at $n_{\mathrm{ctx}}=2\,048$.

\begin{figure}
    \centering
    \includegraphics{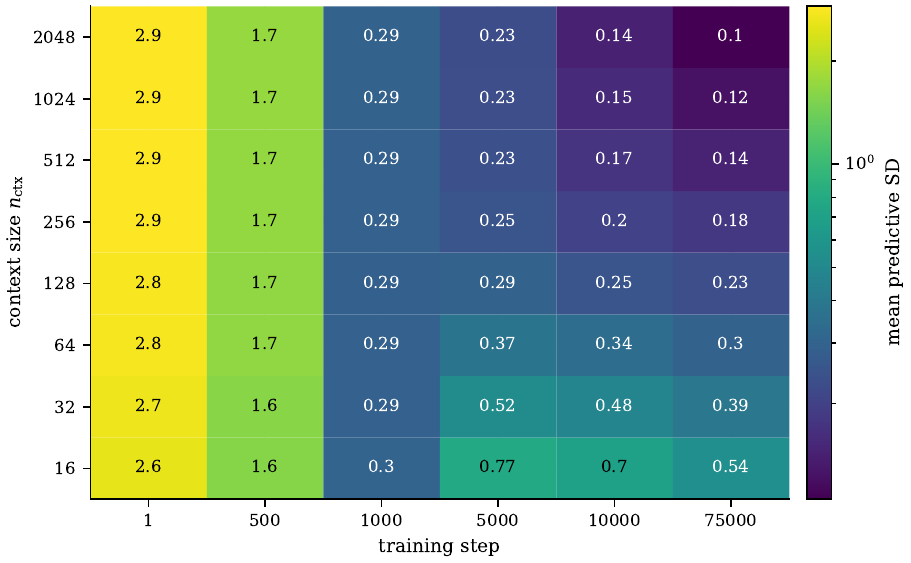}
    \caption{Posterior-width contraction during training on the fixed realistic-mechanism validation suite (\Cref{app:model_val_datasets}). Each cell reports the mean predictive posterior standard deviation averaged over all validation query rows, for a given training checkpoint and context size. Widths are expressed in standardized units and use the piecewise-uniform posterior variance. The shared colour scale is logarithmic.}
    \label{fig:posterior_contraction}
\end{figure}

\Cref{fig:posterior_evolution} illustrates the evolution of the predictive posterior for a fixed query point. Early in training, the posterior is diffuse and weakly dependent on context size. As training progresses, it concentrates around the true CATE, with the larger context producing a noticeably sharper final posterior.

\begin{figure}
    \centering
    \includegraphics{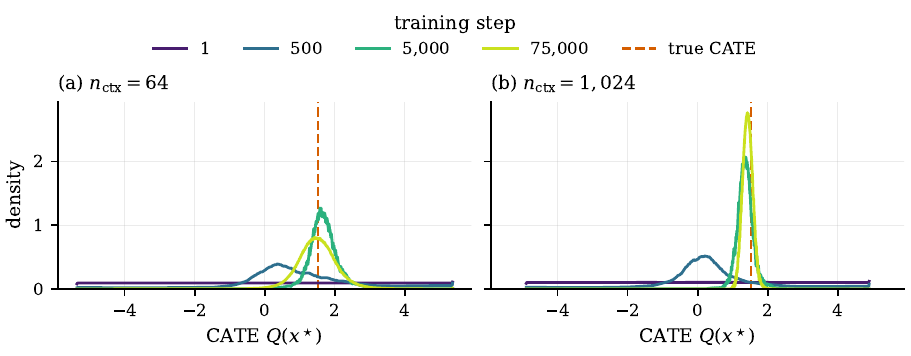}
    \caption{Evolution of the predictive CATE posterior for one fixed query point from the realistic-mechanism validation suite (\Cref{app:model_val_datasets}). Curves show the density obtained from the posterior CDF at four training checkpoints, with training steps indicated by colour. The same query point is evaluated with \textbf{(a)} $n_{\mathrm{ctx}}=64$ and \textbf{(b)} $n_{\mathrm{ctx}}=1\,024$. The dashed line marks the true CATE. Densities are shown in standardized units. The CDFs are smoothed for visualization, following \Cref{app:model_val_smoothing}.}
    \label{fig:posterior_evolution}
\end{figure}

\clearpage
\section{Details on results}
\label{app:results}

\subsection{Baseline methods}
\label{app:results_baselineMethods}

The baseline methods are either newly implemented or adjusted from existing codebases. All relevant code and configurations are available in our codebase.

\setlength{\LTleft}{0pt}
\setlength{\LTright}{\fill}
\setlength{\LTcapwidth}{\textwidth}
\begin{longtable}{@{}p{0.25\textwidth}p{0.16\textwidth}p{0.35\textwidth}p{0.15\textwidth}@{}}
  \caption{\raggedright Catalogue of CATE predictors selected in the main benchmark.}
  \label{tab:results_method_catalogue}\\
    \toprule
    Predictor & Family & Fitted estimator and prediction rule & Details \\
    \midrule
    \endfirsthead
    \caption[]{\raggedright Catalogue of CATE predictors selected in the main benchmark (continued).}\\
    \toprule
    Predictor & Family & Fitted estimator and prediction rule & Details \\
    \midrule
    \endhead
    \midrule
    \multicolumn{4}{r}{Continued on next page} \\
    \endfoot
    \bottomrule
    \endlastfoot
    S-learner (ridge) & Backdoor meta-learner & S-learner with ridge regression. & \Cref{app:results_backdoor} \\
    S-learner (RF) & Backdoor meta-learner & S-learner with random forest. & \Cref{app:results_backdoor} \\
    S-learner (TabICL) & Backdoor meta-learner & S-learner with zero-shot TabICL. & \Cref{app:results_backdoor} \\
    S-learner (TabICL) + kNN marg. & Backdoor meta-learner & S-TabICL fit on $(X,W,Z)$, then 16-neighbour proxy marginalisation. & \Cref{app:results_backdoor} \\
    S-learner (TabICL) + kernel marg. & Backdoor meta-learner & S-TabICL fit on $(X,W,Z)$, then 16-neighbour kernel proxy marginalisation (scale 1). & \Cref{app:results_backdoor} \\
    T-learner (ridge) & Backdoor meta-learner & T-learner with ridge regression. & \Cref{app:results_backdoor} \\
    T-learner (RF) & Backdoor meta-learner & T-learner with random forest. & \Cref{app:results_backdoor} \\
    T-learner (TabICL) & Backdoor meta-learner & T-learner with zero-shot TabICL. & \Cref{app:results_backdoor} \\
    T-learner (ridge) + kNN marg. & Backdoor meta-learner & T-OLS fit on $(X,W,Z)$, then 16-neighbour proxy marginalisation. & \Cref{app:results_backdoor} \\
    T-learner (ridge) + kernel marg. & Backdoor meta-learner & T-OLS fit on $(X,W,Z)$, then 16-neighbour kernel proxy marginalisation (scale 1). & \Cref{app:results_backdoor} \\
    X-learner (RF) & Backdoor meta-learner & X-learner with random forest. & \Cref{app:results_backdoor} \\
    X-learner (TabICL) & Backdoor meta-learner & X-learner with zero-shot TabICL. & \Cref{app:results_backdoor} \\
    P-learner (ridge) & Proximal & Kernel-bridge P-learner with ridge final-stage CATE regression. & \Cref{app:results_plearner} \\
    P-learner (RF) & Proximal & Kernel-bridge P-learner with random-forest final-stage CATE regression. & \Cref{app:results_plearner} \\
    P-learner (TabICL) & Proximal & Kernel-bridge P-learner with TabICL final-stage CATE regression. & \Cref{app:results_plearner} \\

    KPV (ind. $W$) & Proximal & KPV bridge with $W$ integrated over the context empirical marginal. & \Cref{app:results_kpv} \\
    KPV (cond.\ $W$) & Proximal & KPV bridge with locally conditional $W\mid X=x$ integration. & \Cref{app:results_kpv} \\
    PMMR (ind. $W$) & Proximal & PMMR bridge with $W$ integrated over the context empirical marginal. & \Cref{app:results_pmmr} \\
    PMMR (cond.\ $W$) & Proximal & PMMR bridge with locally conditional $W\mid X=x$ integration. & \Cref{app:results_pmmr} \\
    NMMR-V (fixed; ind. $W$) & Proximal & V-statistic NMMR, fixed RBF width, with marginal $W$ integration. & \Cref{app:results_nmmr} \\
    NMMR-V (fixed; cond.\ $W$) & Proximal & V-statistic NMMR, fixed RBF width, with conditional $W\mid X=x$ integration. & \Cref{app:results_nmmr} \\
    NMMR-U (fixed; ind. $W$) & Proximal & U-statistic NMMR, fixed RBF width, with marginal $W$ integration. & \Cref{app:results_nmmr} \\
    NMMR-U (fixed; cond.\ $W$) & Proximal & U-statistic NMMR, fixed RBF width, with conditional $W\mid X=x$ integration. & \Cref{app:results_nmmr} \\
    NMMR-V (median; ind. $W$) & Proximal & V-statistic NMMR, median-heuristic RBF width, with marginal $W$ integration. & \Cref{app:results_nmmr} \\
    NMMR-V (median; cond.\ $W$) & Proximal & V-statistic NMMR, median-heuristic RBF width, with conditional $W\mid X=x$ integration. & \Cref{app:results_nmmr} \\
    NMMR-U (median; ind. $W$) & Proximal & U-statistic NMMR, median-heuristic RBF width, with marginal $W$ integration. & \Cref{app:results_nmmr} \\
    NMMR-U (median; cond.\ $W$) & Proximal & U-statistic NMMR, median-heuristic RBF width, with conditional $W\mid X=x$ integration. & \Cref{app:results_nmmr} \\
    CausalFM & Causal ICL & Pre-trained causal foundation model, $X$ only. & \Cref{app:results_backdoor} \\
    Do-PFN & Causal ICL & Pre-trained causal foundation model, $X$ only. & \Cref{app:results_backdoor} \\
    CausalPFN & Causal ICL & Pre-trained causal foundation model, $X$ only. & \Cref{app:results_backdoor} \\
    CausalPFN + kNN marg. & Causal ICL & CausalPFN fit on $(X,W,Z)$, then 16-neighbour proxy marginalisation. & \Cref{app:results_backdoor} \\
    CausalPFN + kernel marg. & Causal ICL & CausalPFN fit on $(X,W,Z)$, then 16-neighbour kernel proxy marginalisation (scale 1). & \Cref{app:results_backdoor} \\
    \modelname & Proximal foundation model & Amortized proximal transformer; query proxies are masked. & \Cref{sec:proximalFM} \\
\end{longtable}

\subsubsection{Backdoor meta-learners, in-context learners and proxy marginalization}
\label{app:results_backdoor}

We evaluate some methods estimating the CATE under the backdoor adjustment assumptions. First, we evaluat standard causal meta-learners from the backdoor adjustment framework \citep{kunzelMetalearnersEstimatingHeterogeneous2019}, specifically the S-learner, T-learner, and X-learner. We also implement some in-context learners: CausalPFN \citep{balazadehCausalPFNAmortizedCausal2025}, CausalFM \citep{maFoundationModelsCausal2026} and Do-PFN \citep{robertsonDoPFNInContextLearning2025}. In our proximal causal setting, the presence of the unobserved confounder $U$ directly violates the conditional ignorability assumption these methods rely on ($Y(a) \indep A \mid X$). Consequently, all backdoor methods are misspecified. They serve as empirical baselines.

Because the causal estimand of interest is the covariate-conditional effect $Q(x) = \E[Y(1) - Y(0) \mid X=x]$, yet observational data contain proxy variables $(W, Z)$ that carry information about $U$, we evaluate backdoor methods under two distinct options:
\begin{enumerate}
    \item \textbf{Discard proxies ($X$-only):} The learners are trained exclusively on baseline covariates $X$, discarding $(W, Z)$ entirely. This reflects the standard practice of ignoring unobserved confounding and assuming $Y(a) \indep A \mid X$.
    
    \item \textbf{Proxy conditioning with marginalization ($(X, W, Z) \to X$):} To avoid disadvantaging backdoor methods by withholding proxy features, we also fit learners on the full feature set $(X, W, Z)$. Because this yields a proxy-conditional effect $\hat\tau(x, w, z)$ rather than $Q(x)$, we post-process the predictions by integrating out $(W, Z)$ with respect to a local empirical estimate of $P(W, Z \mid X=x)$. 
\end{enumerate}
Note that even with proxy features and marginalization, Option 2 does not identify $Q(x)$: naive regression is insufficient under the proximal assumptions (\Cref{sec:problem_setup}).

\paragraph{Meta-learner formulations.}

We evaluate three meta-learning architectures \citep{kunzelMetalearnersEstimatingHeterogeneous2019}:
\begin{itemize}
    \item \textbf{S-Learner:} Fits a single regressor $\hat\mu$ on the pooled data predicting $Y$ from features and treatment $A$. The effect is estimated as $\hat\tau_{\mathrm{S}}(x) = \hat\mu(x, 1) - \hat\mu(x, 0)$. When proxies are included, it estimates $\hat\tau_{\mathrm{S}}(x, w, z) = \hat\mu(x, 1, w, z) - \hat\mu(x, 0, w, z)$.
    
    \item \textbf{T-Learner:} Fits separate regressors $\hat\mu_0$ on the control cohort ($A=0$) and $\hat\mu_1$ on the treated cohort ($A=1$), predicting $\hat\tau_{\mathrm{T}} = \hat\mu_1 - \hat\mu_0$. This allows separate response surfaces for each arm, at the cost of not sharing information across arms.
    
    \item \textbf{X-Learner:} A four-stage crossover architecture designed for imbalanced treatments. After fitting base models $\hat\mu_0, \hat\mu_1$, it imputes counterfactual pseudo-effects $D_1 = Y_1 - \hat\mu_0(X_1)$ and $D_0 = \hat\mu_1(X_0) - Y_0$, trains second-stage models $\hat\tau_1$ and $\hat\tau_0$ to predict them, and combines them using an estimated propensity score $\hat{e}$:
    \begin{equation}
        \hat\tau_{\mathrm{X}} = \hat{e}\,\hat\tau_0 + (1 - \hat{e})\,\hat\tau_1.
        \label{eq:x_learner}
    \end{equation}
    In our implementation, $\hat{e}$ is estimated via logistic regression.
\end{itemize}

Each meta-learner accepts three distinct base model classes:
\begin{itemize}
    \item \textbf{Ridge regression:} Standardized linear model with penalty $\alpha \in \{0.1, 1, 10, 100, 1000\}$ selected via 4-fold cross-validation.
    \item \textbf{Random forest:} 100 trees with maximum depth $\in \{1, 3, 5\}$ and minimum leaf size $\in \{1, 5, 10\}$ chosen via 4-fold cross-validation.
    \item \textbf{TabICLv2 \citep{quTabICLv2BetterFaster2026}:} In-context tabular foundation model evaluated zero-shot.
\end{itemize}

\paragraph{In-context learners.}

We evaluate CausalPFN, CausalFM and Do-PFN in their pretrained, zero-shot form: each model uses the episode's observed data as context, with no episode-specific parameter updates. CausalPFN and CausalFM directly return a CATE prediction for each query. Do-PFN instead predicts outcomes under each treatment arm, and we take their difference to form the CATE.

\paragraph{Proxy marginalization.}

When trained on $(X, W, Z)$, a meta-learner predicts a proxy-conditional contrast $\hat\tau(x, w, z)$. Because the evaluation target is the covariate-conditional effect $Q(x) = \E[Y(1) - Y(0) \mid X=x]$, we recover a function of $x$ alone by integrating out the proxies against a local empirical estimate of $P(W, Z \mid X=x)$:
\begin{equation}
    \hat\tau(x) = \sum_{i \in \mathcal{N}_k(x)} q_i(x)\,\hat\tau(x, W_i, Z_i).
    \label{eq:proxy_marginalization}
\end{equation}
Crucially, the integration uses the aligned context pairs $(W_i, Z_i)$ rather than a Cartesian product of marginals, preserving the empirical joint dependence between the two proxies. The neighborhood $\mathcal{N}_k(x)$ comprises the $k$ context points closest to $x$ under a feature-standardized Euclidean distance:
\begin{equation}
    d(x, X_i) = \sqrt{\sum_{d=1}^{d_X} \frac{(x_d - X_{i,d})^2}{\sigma_d^2}}, \qquad \sigma_d = \sqrt{2}\,\mathrm{median}_{j,l}\big(|X_{j,d} - X_{l,d}|\big).
\end{equation}
We evaluate two local weighting schemes for $q_i(x)$:
\begin{itemize}
    \item \textbf{$k$-NN weighting:} Uniform weights over the $k$ nearest context atoms, $q_i(x) = 1/k$.
    \item \textbf{Gaussian kernel weighting:} Nadaraya-Watson exponential weights with bandwidth scale $s > 0$:
    \begin{equation}
        q_i(x) = \frac{\exp\big(-\big(d(x, X_i)^2 - \min_{j \in \mathcal{N}_k(x)} d(x, X_j)^2\big) / s\big)}{\sum_{l \in \mathcal{N}_k(x)} \exp\big(-\big(d(x, X_l)^2 - \min_{j \in \mathcal{N}_k(x)} d(x, X_j)^2\big) / s\big)}.
    \end{equation}
\end{itemize}
In our benchmarks, we use $k =16$ and fixed kernel scale $s = 1.0$.

\subsubsection{P-learner}
\label{app:results_plearner}

We implement the P-learner of \citet{sverdrupProximalCausalLearning2023}. The cross-fitted score and final-stage regression follow their procedure. For bridge estimation, we use a Nystr\"om-approximation of the minimax RKHS formulation, described below. 

For each data point, we compute the doubly robust proximal score:

\begin{equation}
  \Gamma_i = (-1)^{1-A_i}\, q(Z_i,A_i,X_i)\big(Y_i - h(W_i,A_i,X_i)\big)
             + h(W_i,1,X_i) - h(W_i,0,X_i),
  \label{eq:plearner_score}
\end{equation}

The outcome and treatment bridges are characterized, for $a\in\{0,1\}$, by the conditional moments
\begin{equation}
  \begin{split}
  \E\!\left[\1\{A=a\}\{Y-h(W,a,X)\}\mid Z,X\right]&=0,\\
  \E\!\left[\1\{A=a\}q(Z,a,X)-1\mid W,X\right]&=0.
  \end{split}
  \label{eq:plearner_bridge_moments}
\end{equation}

We obtain the CATE estimate by regressing the cross-fitted scores on $X$, $\hat Q(x)=\hat\E[\hat\Gamma\mid X=x]$. The mean of the same scores gives the corresponding proximal ATE estimate.

\paragraph{Kernel bridge estimation.} We estimate both bridges using the regularised minimax RKHS formulation of \citet{dikkalaMinimaxEstimationConditional2020}, following \citet{sverdrupProximalCausalLearning2023}. The outcome bridge $h$ is represented by RBF features $\Phi_H$ of $(W,X)$, and the treatment bridge $q$ by $\Phi_Q$ of $(Z,X)$. Because of computational and memory constraints, we use the Nystr\"om method with $r$ landmark points sampled from the training fold. The kernel bridge estimators have closed-form solutions with penalties $\lambda_h$ and $\lambda_q$ on the two bridges.

\paragraph{Hyperparameter tuning.} Several components of the method is subject to a choice of parameter. First of all, the CATE predictor $\hat{Q}(.)$ can be made using any general predictor. We use a ridge regularized linear regression, a random forest or TabICL \citep{quTabICLv2BetterFaster2026}. The parameters of the random forest and ridge regressor are chosen by cross-validation, TabICL does not require parameter tuning.

Then, regarding the bridge functions $h$ and $q$, the regularizer $\lambda_h$ and $\lambda_q$ are chosen by cross-validation to minimize their moment loss. The Nystr\"om rank $r$ is fixed. The kernel bandwidth is chosen by the median heuristic, and not searched. \Cref{tab:methods_plearner} summarizes the hyperparameters.

\paragraph{Score stabilisation.} The estimated treatment-bridge weight can have large magnitude: we cap it at $q_{\max}$ before
forming \eqref{eq:plearner_score}.

\begin{table}[h]
  \centering
  \caption{P-learner settings. Bridge regularization parameters are selected
           separately within each cross-fitting fold using held-out moment loss.
           Final-stage hyperparameters are selected per episode using
           four-fold cross-validated prediction MSE.}
  \label{tab:methods_plearner}
  \begin{tabular}{lll}
    \toprule
    Component & Setting & Value \\
    \midrule
    Cross-fitting
      & Folds (stratified on $A$)
      & 4 \\

    Bridge class
      & Kernel
      & RBF, blockwise median heuristic \\
      & Nystr\"om rank $r$
      & 256 \\

    Bridge regularization
      & $\lambda_h,\lambda_q$
      & \emph{sweep}:
        $\{10^{-6},10^{-5},10^{-4},10^{-3},10^{-2},10^{-1}\}$ \\
      & Selection criterion
      & Held-out proximal moment loss \\
      & Validation fraction
      & $0.25$ \\
      & $\lambda_g$ (critic loss)
      & $10^{-3}$, fixed \\

    Proximal score
      & Weight clipping $q_{\max}$
      & 20 \\

    Final-stage CV
      & Folds
      & 4 \\
      & Selection criterion
      & Mean squared error \\

    Ridge final stage
      & Model
      & Standardized ridge regression \\
      & Ridge penalty $\alpha$
      & \emph{searched}: $\{0.1,1,10,100,1000\}$ \\

    Random-forest final stage
      & Number of trees
      & 100, fixed \\
      & Maximum depth
      & \emph{searched}: $\{1,3,5\}$ \\
      & Minimum leaf size
      & \emph{searched}: $\{1,5,10\}$ \\
      & Maximum feature fraction
      & 1, fixed \\

    TabICL final stage
      & Model
      & Zero-shot prediction, no tuning\\
    \bottomrule
  \end{tabular}
\end{table}

\subsubsection{Neural moment matching regression}
\label{app:results_nmmr}

We adapted the implementation of the neural moment matching regression (NMMR) method of \citet{kompaDeepLearningMethods2022}\footnote{Original implementation available at \url{https://github.com/beamlab-hsph/Neural-Moment-Matching-Regression}.} to our setting. Their code estimates one population-level quantity per dataset, $\E[Y \mid \mathrm{do}(A=a)]$, evaluated on a grid of treatment values, which for a binary treatment is simply the two arms whose difference is the ATE. Our benchmark instead asks for the CATE $\hat Q(x)$ at each query row of every episode.

\paragraph{The method.} NMMR replaces the outcome bridge $h_0$ of \eqref{eq:bridge} by a neural network $g_\phi(x,w,a) \in \R$ and fits it directly, with no two-stage regression. This neural network is trained on an observational dataset, with simple gradient descent. It uses the following loss function, determined by an RBF kernel matrix on the conditioning variables $A$, $Z$ and $X$:

\begin{equation}
  r_i = Y_i - g_\phi(X_i, W_i, A_i),
  \qquad
  K_{ij} = \exp\!\Big(-\tfrac{\norm{\kappa_i-\kappa_j}^{2}}{2\ell^{2}}\Big),
  \quad
  \kappa_i = \big(A_i,\; Z_i/\hat\sigma_Z,\; X_i/\hat\sigma_X\big),
  \label{eq:nmmr_resid}
\end{equation}
which determines the quadratic forms to minimize
\begin{equation}
  \hat R_V = \frac{r^\T K\, r}{n^{2}}
  \qquad\text{or}\qquad
  \hat R_U = \frac{r^\T \big(K - \mathrm{diag}(K)\big)\, r}{n(n-1)},
  \label{eq:nmmr_uv}
\end{equation}
the V- and U-statistics. The U-statistic drops the diagonal: it is unbiased but noisier, while the V-statistic is biased but lower variance. We report both as separate entries, as in the original paper.

\paragraph{Deviations at prediction time.} The reference code never produces a conditional ATE, even though the bridge it fits supports one. It returns a single scalar per arm, $\E_{W,X}[g_\phi(a,W,X)]$, whose difference is an ATE. Recovering $Q(x)$ of \eqref{eq:proximal_g_formula} instead means marginalising $W$ \emph{at each query point}; we consider two possibilities.

\begin{enumerate}
  \item \textbf{We assume $W \indep X$} (``ind. W''). The direct transposition of the reference code: average the fitted bridge over the empirical marginal of $W$ in the context, $\hat Q(x) = \frac{1}{n}\sum_i g_\phi(x, W_i, 1) - g_\phi(x, W_i, 0)$. This is a valid strategy only if one assumes that $W \indep X$.
  
  \item \textbf{We do not assume $W \indep X$} (``cond. W''). We approximate $\E[g_\phi(x,W,a) \mid X = x]$ by averaging over the marginal of $W$ implied by the context, reweighted by a Gaussian kernel on $X$. This computes the proximal g-formula, without independence assumption, but at a cost of an extra smoothing bandwidth.
\end{enumerate}

\paragraph{Implementation corrections \& computational notes.} We do the following changes in the code compared to original code. First, in the MLP used to model the bridge function, the original implementation applied a ReLU transformation on the last layer. This made the prediction of negative outcomes impossible, and was removed in our implementation. Second, for computational optimization, we avoid recomputing the kernel matrix at each iteration mini-batch, but compute the complete matrix once and sub-sample from it.

\paragraph{Kernel bandwidth.} The original implementation uses a fixed bandwidth of $l = 1.0$ in the \Cref{eq:nmmr_resid}. This has obvious drawbacks as the average distance between data points grow with the dimensionality of these points. In non-trivial dimensions, the kernel becomes the identity matrix, and all information from the proxy $Z$ is lost: the bridge function becomes a direct estimator of $Y$ in the V-statistic case, and very unstable in the U-statistic one. Therefore, we implement an alternative bandwidth which is defined by the median heuristic and does not suffer from this degeneration in higher dimension. Both alternatives are evaluated.

\paragraph{Hyperparameter selection.} We follow the tuning from \citet{kompaDeepLearningMethods2022} and choose hyperparameters of the MLP based on the V- and U-statistics on a held-out validation split. The model is then refitted on the whole context using the final parameters. The parameter space searched is described in \Cref{tab:methods_nmmr}. Note that we do not evaluate every configuration, but sample without repetitions $10$ of them. 

\begin{table}
  \centering
  \caption{NMMR settings. Values marked \emph{searched} are selected per episode
           by the procedure described below; the remainder are fixed across all
           episodes. The U- and V-statistic variants are reported as separate
           methods, so the loss is fixed within a run rather than searched.}
  \label{tab:methods_nmmr}
  \begin{tabular}{lll}
    \toprule
    Component & Setting & Value \\
    \midrule
    Bridge class    & MLP, ReLU, linear output        & \\
                    & Hidden width                    & \emph{searched}: $\{16, 32, 64\}$ \\
                    & Depth (hidden layers)           & \emph{searched}: $\{2, 4, 6\}$ \\
    Moment kernel   & RBF over $(A, Z/\hat\sigma_Z, X/\hat\sigma_X)$ & \\
                    & Length scale $\ell$             & fixed at $1.0$ or median heuristic \\
    Loss            & U- or V-statistic \eqref{eq:nmmr_uv} & one method each \\
    Optimisation    & Adam, learning rate             & \emph{searched}: $\{10^{-3}, 5\cdot10^{-3}\}$ \\
                    & Weight decay ($L_2$)            & $10^{-4}$ \\
                    & Epochs                          & \emph{sweep}: $\{25, 50, 75, 100\}$ \\
                    & Minibatch size                  & $256$ \\
    Prediction      & $W$ strategy                    & ind. W / cond. W \\
                    & $X$ smoothing length scale (\texttt{Wcond}) & median heuristic \\
                    & Neighbours $k$ (\texttt{Wcond})  & $512$ \\
    Tuning          & Validation fraction             & $0.2$ \\
                    & Search budget                   & $10$ of $18$ configurations \\
    \bottomrule
  \end{tabular}
\end{table}

\subsubsection{Proxy maximum moment restriction}
\label{app:results_pmmr}

We reimplemented the proxy maximum moment restriction (PMMR) method of \citet{mastouriProximalCausalLearning2021}, instead of reusing the authors' experimental code\footnote{Original implementation available at \url{https://github.com/yuchen-zhu/kernel_proxies}.}. This allowed us to adjust the code for our usecases (prediction of CATE, with binary treatments). We preserved the PMMR's RKHS bridge class and objective, but adapted treatment kernel to binary setting, and changed the prediction heads to the same two cases as described for NMMR (\Cref{app:results_nmmr}).

\paragraph{The method.} PMMR fits the same outcome bridge $h_0$ as NMMR, but takes it in an RKHS rather than in a neural network class, which makes the minimiser available in closed form. We use column-wise Gaussian kernels for $W$, $Z$ and $X$, and a binary indicator kernel for $A$. With their products kernels over $(A,X,W)$ and $(A,Z,X)$, respectively, write the two matrices:
\begin{equation}
  L = K_A \odot K_W \odot K_X,
  \qquad
  \Phi = K_A \odot K_Z \odot K_X,
  \label{eq:pmmr_grams}
\end{equation}
so that $h(\cdot) = \sum_i \alpha_i k_L(\cdot, (A_i, W_i, X_i))$ and the V-statistic \eqref{eq:nmmr_uv} of the residuals $Y - L\alpha$ under $\Phi$ is minimised by
\begin{equation}
  \alpha = \big(L \Phi L + \lambda n^2 L + \epsilon n^2 I\big)^{-1} L \Phi Y .
  \label{eq:pmmr_alpha}
\end{equation}
Here $\lambda$ penalises $\norm{h}_{\mathcal H_L}$ and is the regulariser of the original method; $\epsilon$ is added for numerical stability.

\paragraph{Deviations at prediction time.} As for NMMR, the reference code returns only $\E[Y \mid \mathrm{do}(A=a)]$, which here is $\hat y(a) = \alpha^\T [\,k_A(A, a) \odot \overline{wx}\,]$ with $\overline{wx}_i = \frac 1n \sum_j k_W(W_i, W_j)\, k_X(X_i, X_j)$. We obtain $\hat Q(x)$ through the same two strategies as NMMR (``ind. W'', ``cond. W''), with the same caveats. Unlike NMMR, \texttt{Wcond} does not need extra hyperparameters: the weights used to approximate $W\mid X=x$ are determined by the $k_X(X_j,x)$ from the model.

\paragraph{Hyperparameter selection.} We select per episode on the V-statistic of a held-out fraction of the context, with the scoring kernel fixed at the median heuristic of the held-out rows, so that all candidates are compared on one metric rather than each on its own. The parameter space is described in \Cref{tab:methods_pmmr}. It splits in two: the bandwidth multipliers enter the matrices $L$ and $\Phi$, while $\lambda$ enters only the matrix assembled in \eqref{eq:pmmr_alpha}. We therefore sample the former and enumerate the latter exhaustively for each draw, re-solving \eqref{eq:pmmr_alpha} at every $\lambda$ off a single pair of $L$, $\Phi$.

\paragraph{Computational notes.} For binary $A$ the kernel $k_A$ is an indicator, so \eqref{eq:pmmr_alpha} is block diagonal by arm and we assemble and solve it one arm at a time; this is exact and reduces computational and memory costs.

\begin{table}[h]
  \centering
  \caption{PMMR settings. Bandwidth multipliers are \emph{sampled}; the
           regularisers are \emph{swept}, i.e.\ every combination is evaluated
           for each bandwidth draw. Both are selected per episode on the
           held-out V-statistic.}
  \label{tab:methods_pmmr}
  \begin{tabular}{lll}
    \toprule
    Component & Setting & Value \\
    \midrule
    Kernels         & Bandwidths baseline                           & per-column median heuristic \\
    & Kernel bandwidth multiplier (W,X) & \emph{searched}: $\{0.8,1,1.2\}$\\
    & Kernel bandwidth multiplier (Z) & \emph{searched}: $\{0.8,1,1.2\}$\\
    Regularisation  & $\lambda$ (RKHS norm)               & \emph{swept}: $\{10^{-5}, 10^{-4},10^{-3}, 10^{-2}, 10^{-1}\}$ \\
                    & Cholesky jitter                       & $10^{-8}$\\
    Prediction      & $W$ strategy                          & ``ind. W'', ``cond. W'' \\
    Tuning          & Validation fraction                   & $0.2$ \\
                    & Search budget                         & all $9$ bandwidth multipliers $\times$ $5$ regularisers \\
    \bottomrule
  \end{tabular}
\end{table}

\subsubsection{Kernel proxy variable}
\label{app:results_kpv}

We reimplemented the kernel proxy variable (KPV) method of \citet{mastouriProximalCausalLearning2021}, presented in the same paper as PMMR and sharing its released code. Again, the reimplementation allowed us to adjust the code for our usage: binary kernel and CATE predictions.

\paragraph{The method.} KPV fits the same outcome bridge $h_0$ as PMMR and in the same RKHS, but reaches it through two ridge regressions rather than one moment restriction. The theoretical formulation uses two samples, $D_1$ of size $m_1$ and $D_2$ of size $m_2$. Following the released implementation, we reuse the complete context in both stages, so $D_1=D_2$ and $m_1=m_2=n_{\mathrm{context}}$. The first stage regresses the feature map of $W$ on $(A, X, Z)$, giving the conditional mean embedding $\hat\mu_{W\mid a,x,z}$; writing $\mathcal{K} = K_A \odot K_X \odot K_Z$ for the product kernel over the conditioning set, its solution is
\begin{equation}
  \Gamma = \big(\mathcal{K}_{D_1 D_1} + \lambda_1 m_1 I\big)^{-1} \mathcal{K}_{D_1 D_2}.
  \label{eq:kpv_gamma}
\end{equation}
The second stage regresses $Y$ on that embedding, which yields the bridge coefficients in closed form,
\begin{equation}
  \Sigma = \big(\Gamma^\T K_W \Gamma\big) \odot K_A \odot K_X,
  \qquad
  \alpha_{ij} = \Gamma_{ij} \big[(\Sigma + \lambda_2 m_2 I)^{-1} Y_{D_2}\big]_j ,
  \label{eq:kpv_alpha}
\end{equation}
with $K_W$ taken on $D_1$ and $K_A, K_X$ on $D_2$. As done in PMMR, we use columnwise Gaussian kernels for $K_X, K_W$ and $K_Z$, and a binary indicator one for $K_A$. Unlike PMMR, the conditional distribution of $W$ given the conditioning set is estimated explicitly, in the first stage, rather than left implicit in a moment condition.

\paragraph{Hyperparameter selection.} We use the authors' closed-form leave-one-out criterion for both regularisers, run inside each fit. On the other hand, we modify the hyperparameter tuning of the kernel bandwidth. As suggested by the authors, we initialize the search by the median heuristic. We then search around this initialization by minimizing the stage-2 leave-one-out criterion (instead of the stage-1 one, used in their implementation). We prefer this as it compares bandwidths using prediction error against the same outcome $Y$, whereas stage-1 LOOCV changes both the input kernel and the $W$-kernel defining the loss, making scores across bandwidths less directly comparable. \Cref{tab:methods_kpv} summarises the tuning parameters.

\paragraph{Deviations at prediction time.} As for NMMR and PMMR, the reference returns only $\E[Y \mid \mathrm{do}(A=a)]$, here $\hat\beta(a) = \frac 1n \sum_{i,j,k} \hat\alpha_{ij}\, k_A(\tilde a_j, a)\, k_X(\tilde x_j, x_k)\, k_W(w_i, w_k)$. We obtain $\hat Q(x)$ through the same two strategies (``ind. W'', ``cond. W''), with the same caveats. As in PMMR, the marginalisation is over the \emph{joint} $(X, W)$, whereas the released code averages the two grams separately, which agrees only if $X \indep W$ in sample. The reference estimates the population effect by averaging the fitted bridge over a separate test sample; we instead use the context rows, which also serve as the fitting sample.

\paragraph{Implementation corrections \& computational notes.} In the original implementation, the median bandwidth is recomputed at every kernel call, and prediction uses the default scale rather than using the training one. Therefore, training and prediction do not use the same fitted kernel. We instead estimate each bandwidth once from the context and reuse it consistently. The released driver also fits a separate input scaler to the test data; we reuse the transformation from the context. Finally, we leverage the binary aspect of the treatment kernel to solve the closed-form formulas arm by arm, reducing computational and memory costs without changing the estimator.

\begin{table}[h]
  \centering
  \caption{KPV settings. Both stages use all context rows. The regularizers
           $\lambda_1$ and $\lambda_2$ are selected within each fit by their
           closed-form leave-one-out criteria. The shared kernel-scale
           multiplier is selected per episode using the stage-2 LOOCV loss.}
  \label{tab:methods_kpv}
  \begin{tabular}{lll}
    \toprule
    Component & Setting & Value \\
    \midrule
    Data split
      & $m_1$ (stage-1 rows)
      & $n_{\mathrm{context}}$ \\
      & $m_2$ (stage-2 rows)
      & $n_{\mathrm{context}}$ \\

    Kernels
      & Kernel family
      & Column-wise Gaussian RBF for $X,W,Z$; binary for $A$ \\
      & Base bandwidths
      & Per-column median heuristic, estimated on the context \\
      & Shared multiplier $c$ on $\sigma_d^2$
      & Shared across $X,W,Z$; \emph{searched:} $\{0.8,1.0,1.2\}$ \\

    Regularization
      & $\lambda_1,\lambda_2$
      & \emph{searched} by LOOCV; 10-point log grid in $[10^{-5},10]$ \\

    Prediction
      & $W$ strategy
      & ``ind.\ $W$'', ``cond.\ $W$'' \\

    Tuning
      & Selection score
      & Stage-2 leave-one-out loss \\
      & Bandwidth candidates
      & 3 per episode; no separate validation split \\
    \bottomrule
  \end{tabular}
\end{table}

\FloatBarrier
\subsection{Semi-synthetic benchmarks from real covariates}
\label{app:results_semisynt}

We construct proximal benchmarks from twelve numerical datasets in the INRIA-SODA tabular benchmark collection \citep{grinsztajn2022tabular}.\footnote{\url{https://huggingface.co/datasets/inria-soda/tabular-benchmark}} Real source rows supply the joint distribution of observed and hidden variables. We generate treatment, proxies, and outcomes with known causal mechanisms. Crossing linear/nonlinear mechanisms, weak/strong confounding, and weak/strong proxy information gives $12\times2\times2\times2=96$ configurations. Three seeded realizations per configuration give $288$ realizations. Figure~\ref{fig:benchmark_construction}(a) shows how the real covariate distribution enters the construction.

\begin{figure}[t]
\centering
\includegraphics[width=\linewidth]{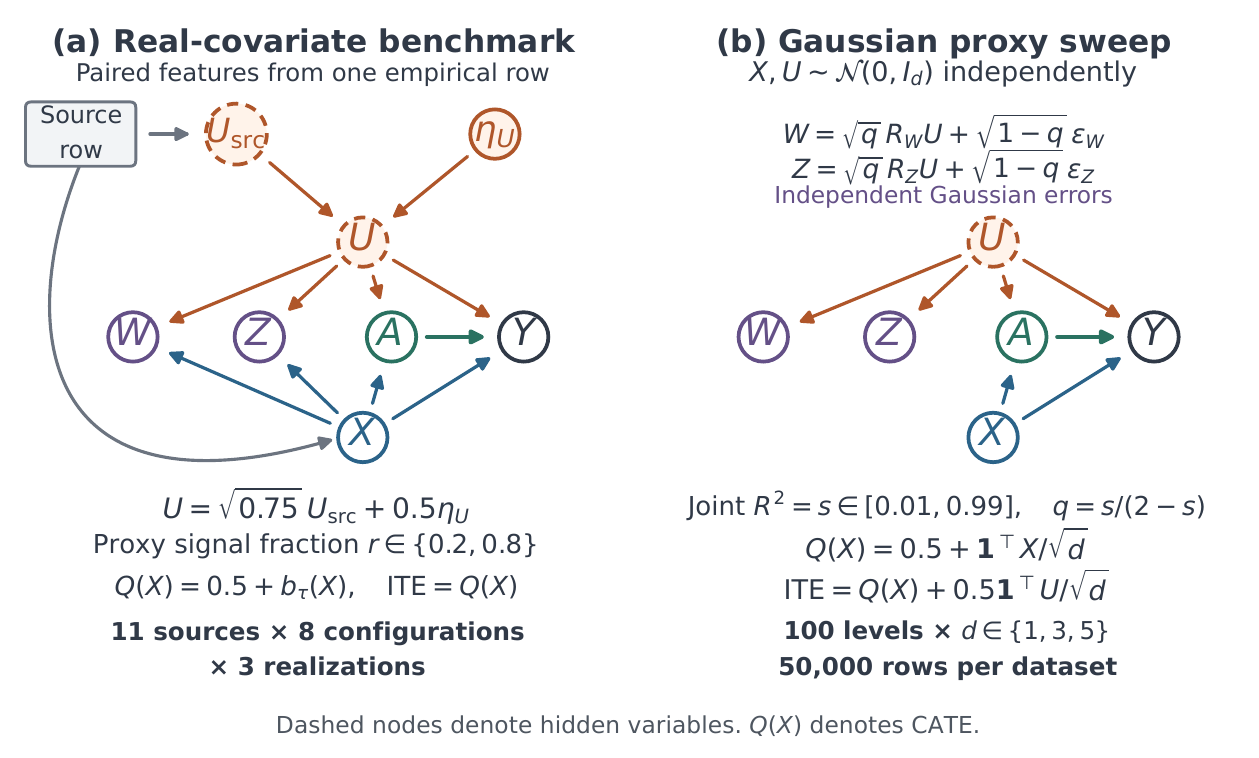}
\caption{Construction of the generated proximal benchmarks. (a) A shared real source row supplies dependent observed covariates and hidden source features. Independent Gaussian innovation augments the hidden features. Both $X$ and $U$ enter the proxy measurements and the treatment and outcome mechanisms. (b) Independent Gaussian $X$ and $U$ define the controlled synthetic study. The joint proxy information parameter changes the measurement channels while preserving the causal variables within each dimension. The diagrams display the generated causal structure, with independent disturbances suppressed where indicated.}
\label{fig:benchmark_construction}
\end{figure}

\paragraph{Sources and calibration.}

We select twelve datasets from the INRIA-SODA tabular benchmark collection with at least $20$ features and $10\,000$ rows. For each source, we reserve a fixed set of rows for calibration and use the remaining rows to construct evaluation episodes. Calibration rows determine missing-value handling, clipping, and feature standardisation; the resulting transformations remain fixed across episodes. We exclude features that cannot be reliably calibrated.

We assign the retained source columns, which may include the dataset’s original prediction target, to observed covariates $X$ and hidden source variables $U_{\mathrm{src}}$ using a deterministic ordering. The two groups contain the same number of features; we use $d=5$ for our experiments. Each realization samples context and query rows from the evaluation pool without replacement. \Cref{tab:real_sources} reports the original sizes of the selected datasets.

\begin{table}[t]
\centering
\small
\caption{Original sizes of the twelve selected datasets. Feature counts exclude the original prediction target, although that column remains eligible for selection as a source variable in our construction.}
\begin{tabular}{lrr}
\toprule
Source & Rows & Features \\
\midrule
\texttt{pol}                            & $10\,082$  & 26  \\
\texttt{MiniBooNE}                      & $72\,998$  & 50  \\
\texttt{default-of-credit-card-clients} & $13\,272$  & 20  \\
\texttt{Higgs}                          & $940\,160$ & 24  \\
\texttt{jannis}                         & $57\,580$  & 54  \\
\texttt{heloc}                          & $10\,000$  & 22  \\
\texttt{covertype}                      & $423\,680$ & 54  \\
\texttt{albert}                         & $58\,252$  & 31  \\
\texttt{road-safety}                    & $111\,762$ & 32  \\
\texttt{Ailerons}                       & $13\,750$  & 33  \\
\texttt{superconduct}                   & $21\,263$  & 79  \\
\texttt{Allstate\_Claims\_Severity}     & $188\,318$ & 124 \\
\bottomrule
\end{tabular}
\label{tab:real_sources}
\end{table}

\paragraph{Latent and observed mechanisms.} For each row, we draw $\eta_U\sim\mathcal N(0,I_{d_U})$ independently and set
$$
U=\sqrt{0.75}\,U_{\mathrm{src}}+0.5\eta_U.
$$
Because $X$ and $U_{\mathrm{src}}$ come from the same source row, $U$ retains some of their dependence. The independent noise term adds variation in $U$ even when $X$ is fixed. Let $g$ be identity in the linear family and componentwise $\operatorname{asinh}$ in the nonlinear family. A separate calibration innovation stream determines
\[
m=\operatorname{mean}_{\mathrm{cal}}g(U),\qquad
C=\operatorname{Cov}_{\mathrm{cal}}g(U),\qquad
V=(C+0.05I)^{-1/2}\{g(U)-m\}.
\]
The covariance uses the population denominator and the inverse square root is symmetric. We draw $\widetilde c\sim\mathcal N(0,I_{d_U})$ and rescale it to $c$ so that $L=c^\top V$ has unit calibration standard deviation. Treatment and baseline outcome share this latent score.

The linear observed basis contains the coordinates of $X$. The nonlinear basis applies one uniformly selected transformation from $t$, $\operatorname{asinh}(t)$, $\sin(t)$, and $t^2$ to each coordinate, then adds products for up to eight distinct coordinate pairs selected uniformly without replacement. After standardising these basis columns on calibration rows, let $K$ denote their number. For each role $r\in\{a,y,\tau,w,z\}$, we draw independent coefficients from $\mathcal N(0,1/K)$ and standardise the resulting linear combinations on calibration rows to obtain $b_r(X)$. The functions $b_a,b_y,b_\tau$ are scalar and $b_w,b_z$ each have $d_U$ coordinates. Standard deviations used to normalise the score, basis, and functions have a floor of $10^{-8}$. We set
\[
f_A(X)=a_0+0.5b_a(X),\quad f_Y(X)=b_y(X),\quad
f_W(X)=0.5b_w(X),\quad f_Z(X)=0.5b_z(X),
\]
where $a_0\sim\operatorname{Unif}[-0.25,0.25]$, and define $Q(X)=0.5+b_\tau(X)$. The CATE function therefore has calibration mean $0.5$ and standard deviation one.

\paragraph{Proxy channels.} We obtain orthogonal $B_W,B_Z$ from independent Gaussian matrices using QR decomposition. We choose signs for a nonnegative diagonal in the triangular factor. For $J\in\{W,Z\}$, define
$$
\sigma_{J,j}=\max\{\operatorname{sd}_{\mathrm{cal}}[(B_JV)_j],10^{-8}\},
\qquad
D_J(r)=\sqrt{\frac{1-r}{r}}\operatorname{diag}(\sigma_{J,1},\ldots,\sigma_{J,d_U}).
$$
With independent standard-normal errors, the proxy measurements are
$$
W=\phi\{B_WV+f_W(X)+D_W(r)\epsilon_W\},\qquad
Z=\phi\{B_ZV+f_Z(X)+D_Z(r)\epsilon_Z\}.
$$
The map $\phi$ is identity in the linear family and componentwise $\operatorname{asinh}$ in the nonlinear family. Weak and strong proxy settings use $r=0.2$ and $r=0.8$. This parameter is the calibration latent-signal fraction in each measurement coordinate before the observed offset and observation map. Joint predictive information about $U$ also depends on the source distribution and both proxy channels.

\paragraph{Treatment, outcomes, and bridge.} Using independent treatment and outcome disturbances, we generate
\begin{align}
e(X,U)&=\operatorname{clip}\{\operatorname{sigmoid}(f_A(X)+\alpha L),0.02,0.98\},\\
A&\sim\operatorname{Bernoulli}\{e(X,U)\},\\
Y(0)&=f_Y(X)+\gamma L+\epsilon_Y,\qquad \epsilon_Y\sim\mathcal N(0,1),\\
Y(1)&=Y(0)+Q(X),\qquad Y=Y(0)+AQ(X).
\end{align}
Weak confounding uses $(\alpha,\gamma)=(0.35,0.5)$ and strong confounding uses $(1.25,1.5)$. The potential outcomes share their disturbance, giving the exact heterogeneous effect
\[
Y(1)-Y(0)=Q(X)=\E[Y(1)-Y(0)\mid X].
\]
An explicit outcome bridge is
\[
h(w,a,x)=f_Y(x)+\gamma c^\top B_W^{-1}
\{\phi^{-1}(w)-f_W(x)\}+aQ(x).
\]
Here $\phi^{-1}$ is identity or componentwise $\sinh$. Independent, zero-mean measurement error gives $\E[h(W,a,X)\mid X,U]=\E[Y(a)\mid X,U]$. The graph in Figure~\ref{fig:benchmark_construction}(a) also gives $W\perp(A,Z)\mid(X,U)$ and $Y\perp Z\mid(A,X,U)$, establishing
\[
\E[Y\mid A,Z,X]=\E[h(W,A,X)\mid A,Z,X].
\]
The nonlinear family consequently retains an explicit bridge through invertible measurement transformations.

\paragraph{Paired configurations.}
For each source and dimension, feature assignments are fixed across all configurations and realizations. Within a realization, all configurations use the same sampled rows; configurations in the same mechanism family also share fitted functions and random disturbances. Different realizations independently sample rows from the same evaluation pool and redraw the mechanisms and disturbances. Their sampled rows may overlap.

\FloatBarrier
\subsection{Results on semi-synthetic datasets}
\label{app:results_semisynthetic}

\begin{figure}
    \centering
    \includegraphics[width=\textwidth]{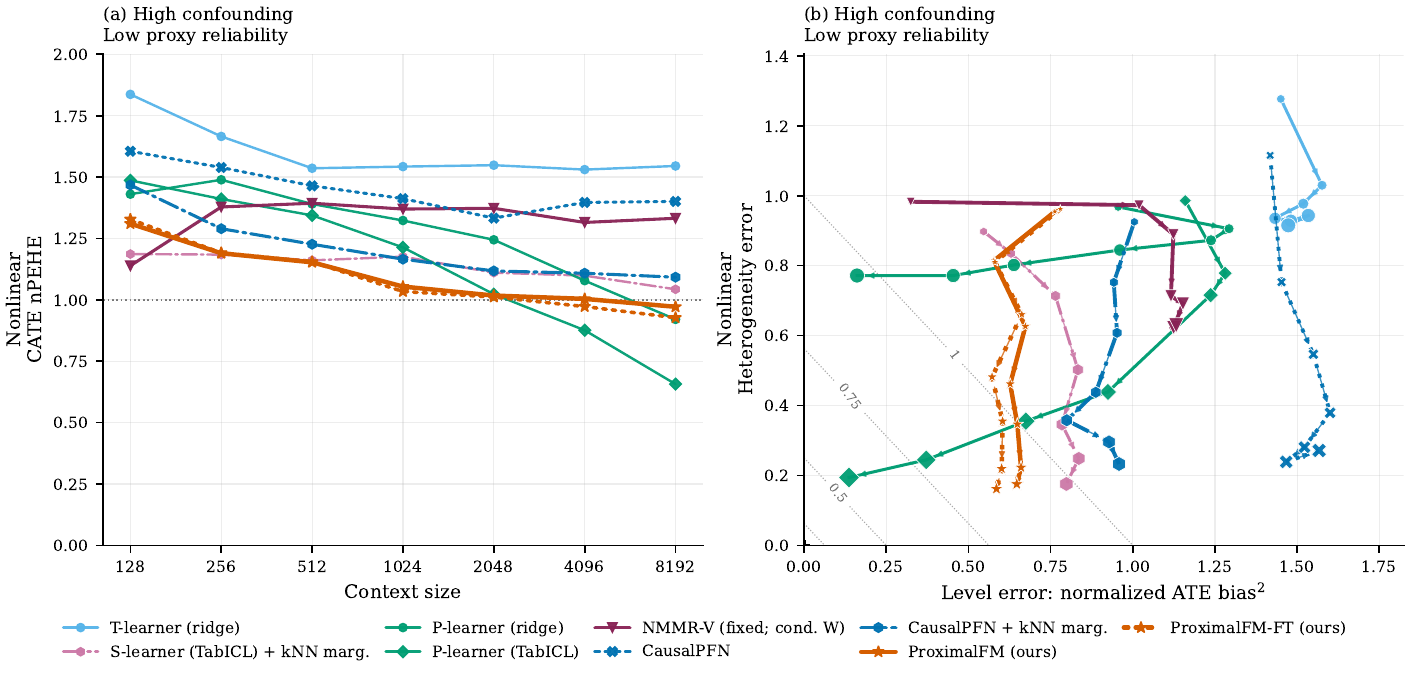}
    \caption{Learning curves and error decomposition for the nonlinear, high-confounding, low-proxy-reliability setting (36 episodes: 12 sources with three realizations each). \textbf{(a)} Median CATE nPEHE versus context size; the dotted line marks the constant-ATE benchmark ($\mathrm{nPEHE}=1$). \texttt{ProximalFM-FT} continues training the main checkpoint on the same prior and objective with contexts up to $5\,000$ rows, updating only part of the in-context predictor (\Cref{app:model_posttraining}). \textbf{(b)} For query-row errors $e_i=\hat\tau_i-\tau_i$, their mean $\bar e$, and true CATE variance $s_\tau^2=\operatorname{Var}_i(\tau_i)$, the episode-level decomposition is $\mathrm{nPEHE}^2=\bar e^2/s_\tau^2+\operatorname{Var}_i(e_i)/s_\tau^2$. Panel (b) plots the first term horizontally (squared bias in the query-average treatment effect) and the second vertically (heterogeneity error after removing that bias). Dotted contours show constant $\sqrt{x+y}$; plotted coordinates are separate component medians, so they need not reproduce panel (a)'s median nPEHE. Arrows and larger markers indicate increasing context size ($128$--$8\,192$). \texttt{ProximalFM-FT} moves mainly leftward: post-training improves the average effect estimate, with little improvement in CATE shape.}    \label{fig:semisynt_learning_curves_selection}
\end{figure}

\begin{figure}
    \centering
    \includegraphics[width=\textwidth]{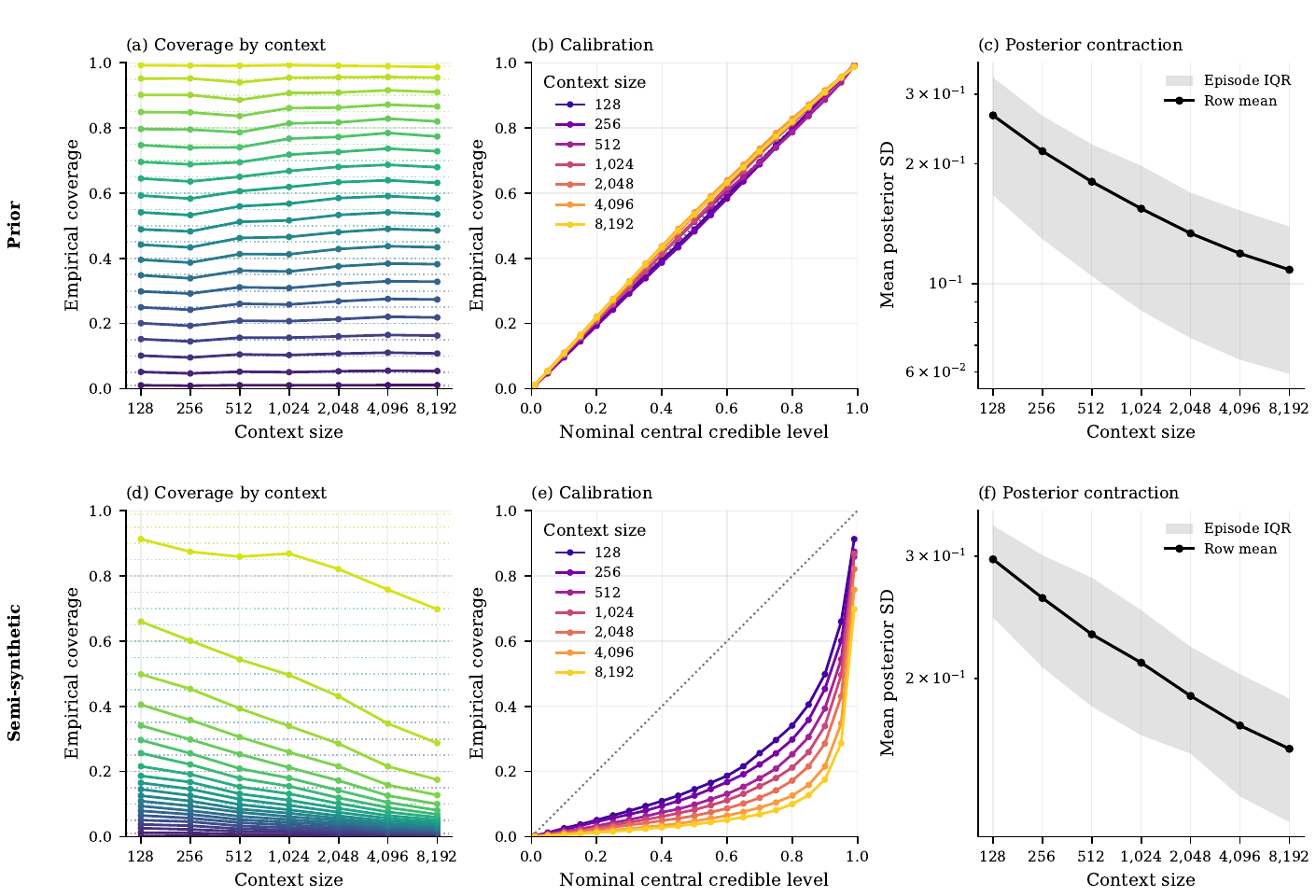}
\caption{Posterior calibration and contraction. The top row evaluates 400 datasets drawn from the pre-training prior; the bottom row evaluates 120 semi-synthetic datasets with nonlinear mechanisms, high confounding, and low proxy reliability. In every episode, the same $256$ query points are evaluated as the context grows through nested prefixes from $128$ to $8\,192$ observations. \textbf{(a,d) Coverage by context:} each solid curve gives empirical coverage of a central CATE credible interval at one nominal level (1\%, 5\%, 10\%, \ldots, 95\%, 99\%); the faint dotted line of the same colour marks that level's nominal coverage. \textbf{(b,e) Calibration:} each curve fixes a context size, coloured from purple ($128$) to yellow ($8\,192$), and plots empirical against nominal coverage. The dotted diagonal denotes perfect calibration; curves below it indicate undercoverage. \textbf{(c,f) Posterior contraction:} the black curve is the mean predicted CATE posterior standard deviation across query rows. The grey band is the interquartile range of episode-level mean standard deviations, showing variation between datasets. These panels use logarithmic vertical axes. Coverage stays close to nominal on the prior, whereas the semi-synthetic intervals increasingly undercover as context grows, even though their predicted uncertainty contracts.}    \label{fig:posterior_calibration}
\end{figure}

\begin{table*}[p]
    \centering
    \scriptsize
    \setlength{\tabcolsep}{3.2pt}
    \renewcommand{\arraystretch}{0.92}
    \caption{Exhaustive semisynthetic CATE nPEHE results at $n_{\mathrm{ctx}}=128$. Each entry first averages the three replicates of each source dataset, then averages equally over the 12 sources; lower is better. The lowest mean in each column is bold. Other underlined methods are not significantly worse than that empirical best under one-sided exact paired sign-flip tests across source-level means, with Holm correction within the column ($\alpha=0.05$).}
    \label{tab:semisynt_exhaustive_npehe_128}
    \resizebox{\textwidth}{!}{%
    \begin{tabular}{@{}lrrrrrrrr@{}}
        \toprule
        & \multicolumn{4}{c}{Linear} & \multicolumn{4}{c}{Nonlinear} \\
        \cmidrule(lr){2-5} \cmidrule(l){6-9}
        & \multicolumn{2}{c}{Low confounding} & \multicolumn{2}{c}{High confounding} & \multicolumn{2}{c}{Low confounding} & \multicolumn{2}{c}{High confounding} \\
        \cmidrule(lr){2-3} \cmidrule(lr){4-5} \cmidrule(lr){6-7} \cmidrule(l){8-9}
        Method & Low proxy & High proxy & Low proxy & High proxy & Low proxy & High proxy & Low proxy & High proxy \\
        \midrule
        S-learner (ridge) & 1.027 & 1.027 & 1.458 & 1.458 & 1.059 & 1.059 & 1.459 & 1.459 \\
        S-learner (RF) & 0.868 & 0.868 & 1.229 & 1.229 & 0.986 & \underline{0.986} & 1.542 & 1.542 \\
        S-learner (TabICL) & 0.782 & 0.782 & 1.336 & 1.336 & \textbf{0.937} & \textbf{0.937} & 1.457 & 1.457 \\
        S-learner (TabICL) + kNN marg. & 0.885 & 0.895 & \underline{1.070} & 0.921 & \underline{0.973} & \underline{0.982} & 1.259 & \underline{0.986} \\
        S-learner (TabICL) + kernel marg. & 0.888 & 0.888 & \underline{1.064} & 0.914 & \underline{0.966} & \underline{0.966} & 1.245 & \textbf{0.966} \\
        \addlinespace[2pt]
        T-learner (ridge) & \underline{0.603} & \underline{0.603} & 1.466 & 1.466 & 1.122 & 1.122 & 1.837 & 1.837 \\
        T-learner (RF) & 0.846 & 0.846 & 1.580 & 1.580 & 1.157 & 1.157 & 1.751 & 1.751 \\
        T-learner (TabICL) & 0.865 & 0.865 & 1.637 & 1.637 & 1.232 & 1.232 & 1.810 & 1.810 \\
        T-learner (ridge) + kNN marg. & \underline{0.639} & \underline{0.603} & 1.255 & \underline{0.694} & 1.063 & \underline{1.103} & 1.631 & 1.398 \\
        T-learner (ridge) + kernel marg. & \underline{0.665} & \underline{0.610} & 1.276 & \underline{0.704} & 1.064 & \underline{1.074} & 1.634 & 1.395 \\
        \addlinespace[2pt]
        X-learner (RF) & \underline{0.718} & 0.718 & 1.436 & 1.436 & \underline{1.025} & \underline{1.025} & 1.605 & 1.605 \\
        X-learner (TabICL) & 0.737 & 0.737 & 1.509 & 1.509 & 1.065 & 1.065 & 1.669 & 1.669 \\
        \addlinespace[2pt]
        CausalFM & 0.738 & 0.738 & 1.306 & 1.306 & \underline{1.048} & \underline{1.048} & 1.566 & 1.566 \\
        Do-PFN & 0.806 & 0.806 & 1.294 & 1.294 & \underline{1.606} & \underline{1.606} & 1.604 & 1.604 \\
        CausalPFN & \underline{0.641} & 0.641 & 1.408 & 1.408 & \underline{1.055} & \underline{1.055} & 1.687 & 1.687 \\
        CausalPFN + kNN marg. & \underline{0.577} & \textbf{0.565} & 1.112 & \textbf{0.654} & \underline{0.971} & \underline{0.945} & 1.460 & 1.048 \\
        CausalPFN + kernel marg. & \underline{0.597} & \underline{0.574} & 1.133 & \underline{0.668} & \underline{0.979} & \underline{0.956} & 1.466 & \underline{1.072} \\
        \addlinespace[2pt]
        P-learner (RF) & 0.731 & 0.708 & 1.267 & 0.916 & \underline{0.993} & \underline{0.980} & 1.514 & 1.222 \\
        P-learner (ridge) & \textbf{0.571} & \underline{0.581} & 1.181 & 0.771 & \underline{1.001} & \underline{0.995} & 1.511 & 1.221 \\
        P-learner (TabICL) & \underline{0.652} & 0.664 & 1.289 & 0.891 & \underline{0.984} & \underline{0.972} & 1.535 & 1.210 \\
        \addlinespace[2pt]
        KPV (cond. W) & 1.325 & 1.021 & 1.905 & 1.192 & 1.300 & \underline{1.014} & 2.054 & 1.204 \\
        KPV (independent W) & 0.955 & 0.953 & 1.395 & 0.970 & \underline{1.040} & \underline{1.003} & 1.551 & \underline{1.028} \\
        PMMR (cond. W) & 0.957 & 0.922 & \underline{1.011} & 0.953 & \underline{1.006} & \underline{1.002} & 1.105 & \underline{1.072} \\
        PMMR (independent W) & 0.993 & 0.989 & \textbf{0.982} & 0.953 & \underline{1.017} & \underline{1.021} & \textbf{1.039} & \underline{1.006} \\
        \addlinespace[2pt]
        NMMR-V (fixed; cond. W) & 0.954 & 0.910 & 1.090 & 0.917 & 1.012 & \underline{1.013} & 1.290 & 1.006 \\
        NMMR-V (fixed; ind. W) & 0.957 & 0.915 & 1.094 & 0.926 & 1.016 & \underline{1.017} & 1.292 & 1.006 \\
        NMMR-U (fixed; cond. W) & $4.98\!\times\!10^{3}$ & 430.450 & $1.13\!\times\!10^{5}$ & 155.343 & $6.16\!\times\!10^{3}$ & 55.108 & $6.11\!\times\!10^{5}$ & 937.488 \\
        NMMR-U (fixed; ind. W) & $4.90\!\times\!10^{3}$ & 408.021 & $1.14\!\times\!10^{5}$ & 148.985 & $5.94\!\times\!10^{3}$ & 55.162 & $5.97\!\times\!10^{5}$ & $1.36\!\times\!10^{3}$ \\
        NMMR-V (median; cond. W) & 1.060 & 1.051 & 1.045 & 0.983 & 1.077 & 1.072 & 1.115 & 1.028 \\
        NMMR-V (median; ind. W) & 1.061 & 1.051 & 1.044 & 0.983 & 1.077 & 1.073 & 1.115 & 1.029 \\
        NMMR-U (median; cond. W) & 6.500 & 1.479 & 5.725 & 1.768 & 51.236 & 1.439 & 6.794 & 1.341 \\
        NMMR-U (median; ind. W) & 6.940 & 1.460 & 5.725 & 1.763 & 49.715 & 1.420 & 7.024 & 1.324 \\
        \addlinespace[2pt]
        ProximalFM (ours) & \underline{0.621} & 0.608 & \underline{1.058} & \underline{0.733} & \underline{0.953} & \underline{0.947} & 1.329 & \underline{0.968} \\
        \bottomrule
    \end{tabular}%
    }
\end{table*}

\begin{table*}[p]
    \centering
    \scriptsize
    \setlength{\tabcolsep}{3.2pt}
    \renewcommand{\arraystretch}{0.92}
    \caption{Exhaustive semisynthetic CATE nPEHE results at $n_{\mathrm{ctx}}=512$. Each entry first averages the three replicates of each source dataset, then averages equally over the 12 sources; lower is better. The lowest mean in each column is bold. Other underlined methods are not significantly worse than that empirical best under one-sided exact paired sign-flip tests across source-level means, with Holm correction within the column ($\alpha=0.05$).}
    \label{tab:semisynt_exhaustive_npehe_512}
    \resizebox{\textwidth}{!}{%
    \begin{tabular}{@{}lrrrrrrrr@{}}
        \toprule
        & \multicolumn{4}{c}{Linear} & \multicolumn{4}{c}{Nonlinear} \\
        \cmidrule(lr){2-5} \cmidrule(l){6-9}
        & \multicolumn{2}{c}{Low confounding} & \multicolumn{2}{c}{High confounding} & \multicolumn{2}{c}{Low confounding} & \multicolumn{2}{c}{High confounding} \\
        \cmidrule(lr){2-3} \cmidrule(lr){4-5} \cmidrule(lr){6-7} \cmidrule(l){8-9}
        Method & Low proxy & High proxy & Low proxy & High proxy & Low proxy & High proxy & Low proxy & High proxy \\
        \midrule
        S-learner (ridge) & 1.013 & 1.013 & 1.564 & 1.564 & 1.023 & 1.023 & 1.580 & 1.580 \\
        S-learner (RF) & 0.669 & 0.669 & 1.423 & 1.423 & 0.849 & 0.849 & 1.546 & 1.546 \\
        S-learner (TabICL) & 0.499 & 0.499 & 1.294 & 1.294 & \underline{0.762} & \underline{0.762} & 1.432 & 1.432 \\
        S-learner (TabICL) + kNN marg. & 0.527 & 0.581 & 1.053 & 0.683 & 0.793 & 0.822 & \underline{1.206} & \underline{0.865} \\
        S-learner (TabICL) + kernel marg. & 0.526 & 0.574 & 1.053 & 0.678 & \underline{0.787} & 0.811 & \underline{1.196} & \underline{0.846} \\
        \addlinespace[2pt]
        T-learner (ridge) & 0.315 & 0.315 & 1.338 & 1.338 & 1.015 & 1.015 & 1.665 & 1.665 \\
        T-learner (RF) & 0.690 & 0.690 & 1.506 & 1.506 & 0.921 & 0.921 & 1.605 & 1.605 \\
        T-learner (TabICL) & 0.498 & 0.498 & 1.451 & 1.451 & \underline{0.785} & \underline{0.785} & 1.531 & 1.531 \\
        T-learner (ridge) + kNN marg. & \textbf{0.302} & \underline{0.282} & 1.041 & \textbf{0.408} & 0.978 & 0.926 & 1.441 & 1.008 \\
        T-learner (ridge) + kernel marg. & 0.307 & \textbf{0.279} & 1.051 & \underline{0.413} & 0.975 & 0.897 & 1.437 & 0.981 \\
        \addlinespace[2pt]
        X-learner (RF) & 0.530 & 0.530 & 1.381 & 1.381 & 0.814 & 0.814 & 1.508 & 1.508 \\
        X-learner (TabICL) & 0.424 & 0.424 & 1.379 & 1.379 & \textbf{0.726} & \underline{0.726} & 1.462 & 1.462 \\
        \addlinespace[2pt]
        CausalFM & 0.638 & 0.638 & 1.288 & 1.288 & 0.945 & 0.945 & 1.467 & 1.467 \\
        Do-PFN & 0.609 & 0.609 & 1.362 & 1.362 & 1.043 & 1.043 & 1.516 & 1.516 \\
        CausalPFN & 0.372 & 0.372 & 1.330 & 1.330 & \underline{0.730} & \underline{0.730} & 1.488 & 1.488 \\
        CausalPFN + kNN marg. & 0.356 & 0.310 & 1.035 & \underline{0.424} & \underline{0.760} & \underline{0.740} & \underline{1.264} & 0.826 \\
        CausalPFN + kernel marg. & 0.360 & 0.314 & 1.043 & \underline{0.436} & \underline{0.752} & \textbf{0.713} & \underline{1.251} & \textbf{0.783} \\
        \addlinespace[2pt]
        P-learner (RF) & 0.606 & 0.569 & 1.273 & 0.741 & 0.830 & 0.808 & 1.406 & 0.896 \\
        P-learner (ridge) & 0.410 & 0.379 & 1.164 & 0.567 & 0.922 & 0.910 & 1.443 & 0.991 \\
        P-learner (TabICL) & 0.498 & 0.465 & 1.244 & 0.661 & \underline{0.773} & \underline{0.747} & 1.368 & \underline{0.838} \\
        \addlinespace[2pt]
        KPV (cond. W) & 1.936 & 0.909 & 3.028 & 1.202 & 2.039 & 0.996 & 2.573 & 1.162 \\
        KPV (independent W) & 1.222 & 0.844 & 2.172 & 0.898 & 1.228 & 0.903 & 2.013 & 0.956 \\
        PMMR (cond. W) & 0.878 & 0.856 & \underline{1.034} & 0.910 & 0.914 & 0.898 & \underline{1.155} & 1.009 \\
        PMMR (independent W) & 0.911 & 0.924 & \underline{1.009} & 0.912 & 0.962 & 0.966 & \underline{1.121} & 0.993 \\
        \addlinespace[2pt]
        NMMR-V (fixed; cond. W) & 0.727 & 0.695 & 1.187 & 0.781 & 0.883 & 0.892 & 1.368 & 0.973 \\
        NMMR-V (fixed; ind. W) & 0.731 & 0.706 & 1.190 & 0.793 & 0.887 & 0.901 & 1.371 & 0.982 \\
        NMMR-U (fixed; cond. W) & 14.122 & 0.929 & 3.879 & 0.955 & 2.371 & 0.998 & 33.761 & 1.009 \\
        NMMR-U (fixed; ind. W) & 15.140 & 0.935 & 3.891 & 0.959 & 2.385 & 1.002 & 33.990 & 1.011 \\
        NMMR-V (median; cond. W) & 1.044 & 1.036 & \underline{1.015} & 0.977 & 1.040 & 1.045 & \textbf{1.109} & 0.997 \\
        NMMR-V (median; ind. W) & 1.044 & 1.037 & \underline{1.019} & 0.980 & 1.042 & 1.047 & \underline{1.111} & 1.001 \\
        NMMR-U (median; cond. W) & 1.079 & 1.081 & 1.100 & 1.023 & 1.072 & 1.067 & \underline{1.121} & 1.014 \\
        NMMR-U (median; ind. W) & 1.079 & 1.081 & 1.101 & 1.023 & 1.073 & 1.068 & \underline{1.121} & 1.014 \\
        \addlinespace[2pt]
        ProximalFM (ours) & 0.392 & 0.370 & \textbf{0.953} & 0.468 & \underline{0.780} & \underline{0.758} & \underline{1.151} & \underline{0.789} \\
        \bottomrule
    \end{tabular}%
    }
\end{table*}

\begin{table*}[p]
    \centering
    \scriptsize
    \setlength{\tabcolsep}{3.2pt}
    \renewcommand{\arraystretch}{0.92}
    \caption{Exhaustive semisynthetic CATE nPEHE results at $n_{\mathrm{ctx}}=1,024$. Each entry first averages the three replicates of each source dataset, then averages equally over the 12 sources; lower is better. The lowest mean in each column is bold. Other underlined methods are not significantly worse than that empirical best under one-sided exact paired sign-flip tests across source-level means, with Holm correction within the column ($\alpha=0.05$).}
    \label{tab:semisynt_exhaustive_npehe_1024}
    \resizebox{\textwidth}{!}{%
    \begin{tabular}{@{}lrrrrrrrr@{}}
        \toprule
        & \multicolumn{4}{c}{Linear} & \multicolumn{4}{c}{Nonlinear} \\
        \cmidrule(lr){2-5} \cmidrule(l){6-9}
        & \multicolumn{2}{c}{Low confounding} & \multicolumn{2}{c}{High confounding} & \multicolumn{2}{c}{Low confounding} & \multicolumn{2}{c}{High confounding} \\
        \cmidrule(lr){2-3} \cmidrule(lr){4-5} \cmidrule(lr){6-7} \cmidrule(l){8-9}
        Method & Low proxy & High proxy & Low proxy & High proxy & Low proxy & High proxy & Low proxy & High proxy \\
        \midrule
        S-learner (ridge) & 1.011 & 1.011 & 1.567 & 1.567 & 1.024 & 1.024 & 1.593 & 1.593 \\
        S-learner (RF) & 0.568 & 0.568 & 1.384 & 1.384 & 0.793 & 0.793 & 1.497 & 1.497 \\
        S-learner (TabICL) & 0.379 & 0.379 & 1.256 & 1.256 & \underline{0.653} & \underline{0.653} & 1.393 & 1.393 \\
        S-learner (TabICL) + kNN marg. & 0.393 & 0.426 & 0.994 & 0.552 & 0.694 & 0.732 & 1.169 & 0.783 \\
        S-learner (TabICL) + kernel marg. & 0.396 & 0.425 & 0.995 & 0.550 & 0.686 & 0.715 & 1.159 & 0.765 \\
        \addlinespace[2pt]
        T-learner (ridge) & 0.242 & 0.242 & 1.284 & 1.284 & 0.998 & 0.998 & 1.627 & 1.627 \\
        T-learner (RF) & 0.561 & 0.561 & 1.413 & 1.413 & 0.786 & 0.786 & 1.502 & 1.502 \\
        T-learner (TabICL) & 0.382 & 0.382 & 1.357 & 1.357 & \underline{0.670} & \underline{0.670} & 1.425 & 1.425 \\
        T-learner (ridge) + kNN marg. & \textbf{0.220} & \textbf{0.194} & 0.990 & \textbf{0.312} & 0.967 & 0.874 & 1.390 & 0.934 \\
        T-learner (ridge) + kernel marg. & 0.225 & \underline{0.197} & 0.992 & \underline{0.318} & 0.962 & 0.855 & 1.386 & 0.913 \\
        \addlinespace[2pt]
        X-learner (RF) & 0.453 & 0.453 & 1.326 & 1.326 & 0.734 & 0.734 & 1.459 & 1.459 \\
        X-learner (TabICL) & 0.348 & 0.348 & 1.303 & 1.303 & \textbf{0.609} & \underline{0.609} & 1.396 & 1.396 \\
        \addlinespace[2pt]
        CausalFM & 0.616 & 0.616 & 1.272 & 1.272 & 0.927 & 0.927 & 1.453 & 1.453 \\
        Do-PFN & 0.544 & 0.544 & 1.314 & 1.314 & 0.993 & 0.993 & 1.537 & 1.537 \\
        CausalPFN & 0.297 & 0.297 & 1.277 & 1.277 & \underline{0.629} & \underline{0.629} & 1.412 & 1.412 \\
        CausalPFN + kNN marg. & 0.273 & 0.224 & 0.977 & 0.333 & \underline{0.646} & \underline{0.621} & 1.175 & 0.675 \\
        CausalPFN + kernel marg. & 0.275 & 0.225 & 0.979 & 0.338 & \underline{0.637} & \textbf{0.599} & 1.168 & \textbf{0.643} \\
        \addlinespace[2pt]
        P-learner (RF) & 0.515 & 0.482 & 1.147 & 0.572 & 0.769 & 0.727 & 1.280 & 0.761 \\
        P-learner (ridge) & 0.322 & 0.290 & 1.033 & \underline{0.372} & 0.891 & 0.879 & 1.348 & 0.893 \\
        P-learner (TabICL) & 0.416 & 0.387 & 1.094 & 0.481 & \underline{0.664} & \underline{0.636} & 1.218 & \underline{0.669} \\
        \addlinespace[2pt]
        KPV (cond. W) & 1.499 & 0.926 & 2.598 & 1.208 & 1.433 & 0.825 & 2.208 & 0.960 \\
        KPV (independent W) & 0.993 & 0.775 & 1.944 & 0.840 & 1.006 & 0.858 & 1.829 & 0.896 \\
        PMMR (cond. W) & 0.878 & 0.846 & 1.062 & 0.903 & 0.918 & 0.897 & 1.188 & 0.995 \\
        PMMR (independent W) & 0.903 & 0.910 & 1.033 & 0.905 & 0.956 & 0.961 & 1.156 & 0.991 \\
        \addlinespace[2pt]
        NMMR-V (fixed; cond. W) & 0.604 & 0.580 & 1.132 & 0.720 & 0.834 & 0.841 & 1.370 & 0.925 \\
        NMMR-V (fixed; ind. W) & 0.609 & 0.602 & 1.135 & 0.745 & 0.840 & 0.859 & 1.375 & 0.937 \\
        NMMR-U (fixed; cond. W) & 0.919 & 0.894 & 1.071 & 0.905 & 0.992 & 0.984 & 1.178 & 0.990 \\
        NMMR-U (fixed; ind. W) & 0.920 & 0.903 & 1.076 & 0.920 & 0.994 & 0.989 & 1.180 & 0.995 \\
        NMMR-V (median; cond. W) & 1.006 & 1.005 & 1.055 & 0.951 & 1.038 & 1.022 & 1.175 & 1.003 \\
        NMMR-V (median; ind. W) & 1.006 & 1.009 & 1.058 & 0.956 & 1.041 & 1.027 & 1.178 & 1.006 \\
        NMMR-U (median; cond. W) & 1.063 & 1.049 & 1.124 & 1.002 & 1.060 & 1.050 & \underline{1.129} & 1.011 \\
        NMMR-U (median; ind. W) & 1.063 & 1.050 & 1.122 & 1.007 & 1.060 & 1.051 & \underline{1.132} & 1.010 \\
        \addlinespace[2pt]
        ProximalFM (ours) & 0.300 & 0.279 & \textbf{0.884} & 0.365 & \underline{0.656} & \underline{0.636} & \textbf{1.061} & \underline{0.688} \\
        \bottomrule
    \end{tabular}%
    }
\end{table*}

\begin{table*}[p]
    \centering
    \scriptsize
    \setlength{\tabcolsep}{3.2pt}
    \renewcommand{\arraystretch}{0.92}
    \caption{Exhaustive semisynthetic CATE nPEHE results at $n_{\mathrm{ctx}}=2,048$. Each entry first averages the three replicates of each source dataset, then averages equally over the 12 sources; lower is better. The lowest mean in each column is bold. Other underlined methods are not significantly worse than that empirical best under one-sided exact paired sign-flip tests across source-level means, with Holm correction within the column ($\alpha=0.05$).}
    \label{tab:semisynt_exhaustive_npehe_2048}
    \resizebox{\textwidth}{!}{%
    \begin{tabular}{@{}lrrrrrrrr@{}}
        \toprule
        & \multicolumn{4}{c}{Linear} & \multicolumn{4}{c}{Nonlinear} \\
        \cmidrule(lr){2-5} \cmidrule(l){6-9}
        & \multicolumn{2}{c}{Low confounding} & \multicolumn{2}{c}{High confounding} & \multicolumn{2}{c}{Low confounding} & \multicolumn{2}{c}{High confounding} \\
        \cmidrule(lr){2-3} \cmidrule(lr){4-5} \cmidrule(lr){6-7} \cmidrule(l){8-9}
        Method & Low proxy & High proxy & Low proxy & High proxy & Low proxy & High proxy & Low proxy & High proxy \\
        \midrule
        S-learner (ridge) & 1.011 & 1.011 & 1.560 & 1.560 & 1.022 & 1.022 & 1.606 & 1.606 \\
        S-learner (RF) & 0.513 & 0.513 & 1.356 & 1.356 & 0.740 & 0.740 & 1.476 & 1.476 \\
        S-learner (TabICL) & 0.310 & 0.310 & 1.239 & 1.239 & 0.564 & 0.564 & 1.371 & 1.371 \\
        S-learner (TabICL) + kNN marg. & 0.291 & 0.298 & 0.953 & 0.441 & 0.580 & 0.624 & 1.111 & 0.695 \\
        S-learner (TabICL) + kernel marg. & 0.293 & 0.297 & 0.953 & 0.437 & 0.585 & 0.608 & 1.111 & 0.679 \\
        \addlinespace[2pt]
        T-learner (ridge) & 0.205 & 0.205 & 1.264 & 1.264 & 1.112 & 1.112 & 1.754 & 1.754 \\
        T-learner (RF) & 0.482 & 0.482 & 1.363 & 1.363 & 0.751 & 0.751 & 1.482 & 1.482 \\
        T-learner (TabICL) & 0.306 & 0.306 & 1.303 & 1.303 & \underline{0.604} & \underline{0.604} & 1.420 & 1.420 \\
        T-learner (ridge) + kNN marg. & \textbf{0.179} & \textbf{0.132} & 0.960 & \textbf{0.271} & 1.052 & 0.936 & 1.491 & 0.984 \\
        T-learner (ridge) + kernel marg. & 0.181 & 0.134 & 0.962 & \underline{0.272} & 1.046 & 0.913 & 1.486 & 0.961 \\
        \addlinespace[2pt]
        X-learner (RF) & 0.382 & 0.382 & 1.286 & 1.286 & 0.678 & 0.678 & 1.429 & 1.429 \\
        X-learner (TabICL) & 0.299 & 0.299 & 1.262 & 1.262 & \textbf{0.504} & \textbf{0.504} & 1.352 & 1.352 \\
        \addlinespace[2pt]
        CausalFM & 0.600 & 0.600 & 1.262 & 1.262 & 0.917 & 0.917 & 1.460 & 1.460 \\
        Do-PFN & 0.454 & 0.454 & 1.279 & 1.279 & 0.961 & 0.961 & 1.547 & 1.547 \\
        CausalPFN & 0.246 & 0.246 & 1.255 & 1.255 & 0.545 & 0.545 & 1.372 & 1.372 \\
        CausalPFN + kNN marg. & 0.217 & 0.167 & 0.946 & \underline{0.285} & 0.571 & \underline{0.540} & 1.127 & \underline{0.595} \\
        CausalPFN + kernel marg. & 0.218 & 0.168 & 0.948 & \underline{0.288} & 0.567 & \underline{0.521} & 1.121 & \textbf{0.571} \\
        \addlinespace[2pt]
        P-learner (RF) & 0.420 & 0.401 & 0.985 & 0.493 & 0.675 & 0.642 & 1.112 & 0.675 \\
        P-learner (ridge) & 0.237 & 0.203 & \underline{0.882} & \underline{0.311} & 0.870 & 0.859 & 1.219 & 0.873 \\
        P-learner (TabICL) & 0.334 & 0.308 & 0.939 & 0.420 & 0.575 & \underline{0.553} & \underline{1.037} & \underline{0.589} \\
        \addlinespace[2pt]
        KPV (cond. W) & 1.745 & 0.715 & 2.778 & 0.934 & 1.752 & 0.781 & 2.591 & 0.948 \\
        KPV (independent W) & 1.163 & 0.707 & 2.081 & 0.764 & 1.110 & 0.797 & 1.982 & 0.855 \\
        PMMR (cond. W) & 0.881 & 0.843 & 1.084 & 0.905 & 0.924 & 0.892 & 1.217 & 1.005 \\
        PMMR (independent W) & 0.897 & 0.905 & 1.050 & 0.902 & 0.954 & 0.957 & 1.182 & 0.994 \\
        \addlinespace[2pt]
        NMMR-V (fixed; cond. W) & 0.527 & 0.535 & 1.104 & 0.663 & 0.803 & 0.805 & 1.369 & 0.901 \\
        NMMR-V (fixed; ind. W) & 0.531 & 0.557 & 1.109 & 0.694 & 0.807 & 0.818 & 1.372 & 0.910 \\
        NMMR-U (fixed; cond. W) & 0.893 & 0.843 & 1.061 & 0.881 & 0.991 & 0.970 & 1.173 & 0.978 \\
        NMMR-U (fixed; ind. W) & 0.897 & 0.858 & 1.063 & 0.902 & 0.992 & 0.979 & 1.175 & 0.984 \\
        NMMR-V (median; cond. W) & 0.964 & 0.966 & 1.123 & 0.939 & 1.008 & 1.007 & 1.268 & 0.990 \\
        NMMR-V (median; ind. W) & 0.969 & 0.972 & 1.127 & 0.950 & 1.008 & 1.009 & 1.271 & 0.997 \\
        NMMR-U (median; cond. W) & 1.035 & 1.032 & 1.127 & 1.002 & 1.054 & 1.047 & 1.232 & 1.005 \\
        NMMR-U (median; ind. W) & 1.035 & 1.032 & 1.134 & 1.003 & 1.054 & 1.048 & 1.228 & 1.006 \\
        \addlinespace[2pt]
        ProximalFM (ours) & 0.241 & 0.217 & \textbf{0.834} & \underline{0.302} & 0.572 & \underline{0.546} & \textbf{1.029} & \underline{0.620} \\
        \bottomrule
    \end{tabular}%
    }
\end{table*}

\begin{table*}[p]
    \centering
    \scriptsize
    \setlength{\tabcolsep}{4.0pt}
    \renewcommand{\arraystretch}{0.92}
    \caption{Extended-context semisynthetic CATE nPEHE results for nonlinear mechanisms with high confounding and low proxy reliability. Each entry first averages the 3 replicates of each source dataset, then averages equally over the 12 sources; lower is better. The lowest mean in each column is bold. Other underlined methods are not significantly worse than that empirical best under one-sided exact paired sign-flip tests across source-level means, with Holm correction within the column ($\alpha=0.05$).}
    \label{tab:semisynt_extended_context_npehe}
    \resizebox{\textwidth}{!}{%
    \begin{tabular}{@{}lrrrrrrr@{}}
        \toprule
        Method & \multicolumn{7}{c}{Context size $n_{\mathrm{ctx}}$} \\
        \cmidrule(l){2-8}
        & $128$ & $256$ & $512$ & $1,024$ & $2,048$ & $4,096$ & $8,192$ \\
        \midrule
        S-learner (ridge) & 1.459 & 1.499 & 1.580 & 1.593 & 1.606 & 1.597 & 1.608 \\
        S-learner (RF) & 1.542 & 1.536 & 1.546 & 1.497 & 1.476 & 1.602 & 1.473 \\
        S-learner (TabICL) & 1.457 & 1.449 & 1.432 & 1.393 & 1.371 & 1.340 & 1.317 \\
        S-learner (TabICL) + kNN marg. & 1.259 & \underline{1.235} & \underline{1.206} & 1.169 & 1.111 & 1.065 & 1.020 \\
        S-learner (TabICL) + kernel marg. & 1.245 & \underline{1.231} & \underline{1.196} & 1.159 & 1.111 & 1.064 & 1.016 \\
        \addlinespace[2pt]
        T-learner (ridge) & 1.837 & 1.699 & 1.665 & 1.627 & 1.754 & 1.711 & 1.647 \\
        \addlinespace[2pt]
        X-learner (RF) & 1.605 & 1.559 & 1.508 & 1.459 & 1.429 & 1.504 & 1.363 \\
        X-learner (TabICL) & 1.669 & 1.493 & 1.462 & 1.396 & 1.352 & 1.330 & 1.307 \\
        \addlinespace[2pt]
        CausalPFN & 1.687 & 1.541 & 1.488 & 1.412 & 1.372 & 1.371 & 1.376 \\
        CausalPFN + kNN marg. & 1.460 & 1.338 & \underline{1.264} & 1.175 & 1.127 & 1.121 & 1.117 \\
        CausalPFN + kernel marg. & 1.466 & 1.349 & \underline{1.251} & 1.168 & 1.121 & 1.140 & 1.127 \\
        \addlinespace[2pt]
        P-learner (RF) & 1.514 & 1.485 & 1.406 & 1.280 & 1.112 & 0.964 & 0.732 \\
        P-learner (ridge) & 1.511 & 1.489 & 1.443 & 1.348 & 1.219 & 1.110 & 0.952 \\
        P-learner (TabICL) & 1.535 & 1.467 & 1.368 & 1.218 & \underline{1.037} & \textbf{0.847} & \textbf{0.660} \\
        \addlinespace[2pt]
        NMMR-V (fixed; cond. W) & 1.290 & 1.363 & 1.368 & 1.370 & 1.369 & 1.336 & 1.337 \\
        NMMR-V (fixed; ind. W) & 1.292 & 1.366 & 1.371 & 1.375 & 1.372 & 1.338 & 1.339 \\
        NMMR-U (fixed; cond. W) & $6.11\!\times\!10^{5}$ & $4.25\!\times\!10^{3}$ & 33.761 & 1.178 & 1.173 & 1.144 & 1.141 \\
        NMMR-U (fixed; ind. W) & $5.97\!\times\!10^{5}$ & $4.29\!\times\!10^{3}$ & 33.990 & 1.180 & 1.175 & 1.149 & 1.144 \\
        NMMR-V (median; cond. W) & \underline{1.115} & \underline{1.191} & \textbf{1.109} & 1.175 & 1.268 & 1.334 & 1.391 \\
        NMMR-V (median; ind. W) & \textbf{1.115} & \textbf{1.188} & \underline{1.111} & 1.178 & 1.271 & 1.335 & 1.396 \\
        NMMR-U (median; cond. W) & 6.794 & \underline{1.318} & \underline{1.121} & 1.129 & 1.232 & 1.247 & 1.231 \\
        NMMR-U (median; ind. W) & 7.024 & \underline{1.311} & \underline{1.121} & 1.132 & 1.228 & 1.242 & 1.230 \\
        \addlinespace[2pt]
        ProximalFM (ours) & 1.329 & \underline{1.210} & \underline{1.151} & 1.061 & 1.029 & 0.988 & 0.964 \\
        ProximalFM-FT (ours) & 1.333 & \underline{1.216} & \underline{1.145} & \textbf{1.049} & \textbf{1.007} & 0.960 & 0.930 \\
        \bottomrule
    \end{tabular}%
    }
\end{table*}

\FloatBarrier
\subsubsection{Speed}
\label{app:results_speed}

We measured runtime in a separate run of the 36 semi-synthetic episodes with nonlinear mechanisms, strong confounding, and weak proxy information. Episodes were executed sequentially in one process to avoid contention between simultaneous benchmark jobs; methods retained the parallelism specified for their own hyperparameter searches. At each context size, the recorded wall-clock time covers estimator construction, fitting, and prediction on the 256 query rows. The filled points in \Cref{fig:results_speed_rank} show the median of these per-episode times at $n_{\mathrm{ctx}}=2\,048$; hollow points subtract the measured hyperparameter-search time. Accuracy ranks come independently from the complete semi-synthetic evaluation matrix, using paired episode-level CATE nPEHE ranks.

\FloatBarrier
\subsection{Synthetic study of proxy informativeness}
\label{app:proxy_informativeness}

We vary proxy information within a fixed causal mechanism to examine when proximal estimation offers an advantage over regression adjustment using proxies. For each $d\in\{1,3,5\}$, we generate $20$ information levels equally spaced from $0.01$ to $0.99$.

For each of the resulting $60$ datasets, we evaluate methods using $128$, $512$, or $2\,048$ context rows and the same $256$ held-out query rows. Each block $X,U,W,Z$ has dimension $d$. \Cref{fig:benchmark_construction}(b) shows the causal structure and proxy-information control.

\paragraph{Causal mechanism and heterogeneity.} For every observation, $X,U,\epsilon_W,\epsilon_Z$ are mutually independent $\mathcal N(0,I_d)$ vectors. The independent outcome disturbance is $\epsilon_Y\sim\mathcal N(0,1)$. Independent uniform draws determine the Bernoulli treatments. Define
\[
S_X=d^{-1/2}\sum_{j=1}^d X_j,\qquad
S_U=d^{-1/2}\sum_{j=1}^d U_j.
\]
We generate
\begin{align}
p(X,U)&=\operatorname{clip}\{\operatorname{sigmoid}(0.5S_X+1.25S_U),0.02,0.98\},\\
A&\sim\operatorname{Bernoulli}\{p(X,U)\},\\
Y(0)&=0.5S_X+1.5S_U+\epsilon_Y,\\
Y(1)&=Y(0)+0.5+S_X+0.5S_U,\\
Y&=(1-A)Y(0)+AY(1).
\end{align}
The individual and conditional average effects satisfy
\[
\operatorname{ITE}=0.5+S_X+0.5S_U,\qquad
Q(X)=\E[\operatorname{ITE}\mid X]=0.5+S_X.
\]
Thus the CATE varies with $X$, while latent variation also modifies individual effects. In the population, $\operatorname{Var}(Q(X))=1$ and $\operatorname{Var}(\operatorname{ITE}-Q(X))=0.25$. Normalising the scores by $\sqrt d$ holds the population scales of confounding and heterogeneity fixed across dimensions.

\paragraph{Joint proxy information.} For each dimension, we draw and fix independent orthogonal matrices $R_W,R_Z$ using Gaussian matrices and QR decomposition with a nonnegative triangular diagonal. At target information $s$, define $q=s/(2-s)$ and generate
\[
W=\sqrt q\,R_WU+\sqrt{1-q}\,\epsilon_W,\qquad
Z=\sqrt q\,R_ZU+\sqrt{1-q}\,\epsilon_Z.
\]
Each proxy coordinate has unit variance. Each complete proxy block alone explains a fraction $q$ of each latent coordinate's variance. The joint Gaussian distribution gives
\begin{align}
\E[U\mid W,Z]&=\frac{\sqrt q}{1+q}(R_W^\top W+R_Z^\top Z),\\
\operatorname{Cov}(U\mid W,Z)&=\frac{1-q}{1+q}I_d=(1-s)I_d.
\end{align}
Consequently, the population $R^2$ for predicting any coordinate of $U$ from $(W,Z)$ jointly is exactly $s$. Independence of $X$ preserves this value when conditioning additionally on $X$. The rotations mix latent coordinates without changing information.

\paragraph{Bridge and pairing.} Define the unbiased proxy score $\widetilde S_U(W)=\mathbf{1}^\top R_W^\top W/\sqrt{qd}$. An explicit bridge is
\[
h_0(W,a,X)=0.5S_X+(1.5+0.5a)\widetilde S_U(W)+a(0.5+S_X).
\]
Independent proxy errors give $\E[Y-h_0(W,A,X)\mid A,Z,X]=0$, and averaging the bridge contrast over $W\mid X$ gives $Q(X)=0.5+S_X$. The proxy-score error has conditional variance $(1-q)/q$. This quantifies the noise amplification as proxy information decreases, even though the bridge equation remains valid.

\paragraph{Estimator comparison.}

We fit an interacted proximal two-stage least-squares (2SLS) estimator \citep{tchetgenIntroductionProximalCausal2020} with bridge and instrument designs
\begin{align*}
D_i&=(1,X_i^\top,A_i,A_iX_i^\top,W_i^\top,A_iW_i^\top),\\
T_i&=(1,X_i^\top,A_i,A_iX_i^\top,Z_i^\top,A_iZ_i^\top).
\end{align*}
Writing $P_T$ for projection onto the instrument column space, 2SLS minimises $\|P_T(Y-D\beta)\|_2^2$. The treatment interactions allow the bridge basis to contain $h_0$ and to represent effect modification by both $X$ and the latent confounder. We compute the projection using economic QR followed by SVD, retaining singular values above $\epsilon_{\mathrm{mach}}\max(n_{\mathrm{ctx}},p_T)\sigma_{\max}$, where $p_T=2+4d$. The subsequent least-squares fit uses NumPy's \texttt{rcond=None} rank threshold.

We compare 2SLS with unpenalised OLS using the design $(1,X^\top,A,AX^\top,C^\top,AC^\top)$, with either no additional covariates ($C$ empty) or proxy adjustment ($C=(W,Z)$). For these linear estimators, we average the fitted treatment contrast over the context distribution of the additional covariates:
\[
\widehat Q(x)=\widehat\beta_A+x^\top\widehat\beta_{AX}
+\overline C_{\mathrm{ctx}}^\top\widehat\beta_{AC}.
\]
For 2SLS, $C=W$; for X-only OLS, the final term is omitted. This marginalisation targets $Q(x)$ because $X$ is independent of the additional covariates in the data-generating mechanism.

We also evaluate ProximalFM and proxy-adjusted CausalPFN. ProximalFM conditions on the observational context $(X,W,Z,A,Y)$ and predicts CATE from query $X$. CausalPFN treats $(W,Z)$ as adjustment covariates; we marginalise its treatment contrasts using 16 seeded, aligned proxy pairs sampled from the context. At each context size, all methods use the same training rows and are scored against the known $Q(X)$ on the same 256 held-out query rows. We report
$\operatorname{PEHE} =\{256^{-1}\sum_{i\in\mathrm{query}} [\widehat Q(X_i)-Q(X_i)]^2\}^{1/2}$.

\begin{figure}[t]
    \centering
    \includegraphics[width=\linewidth]{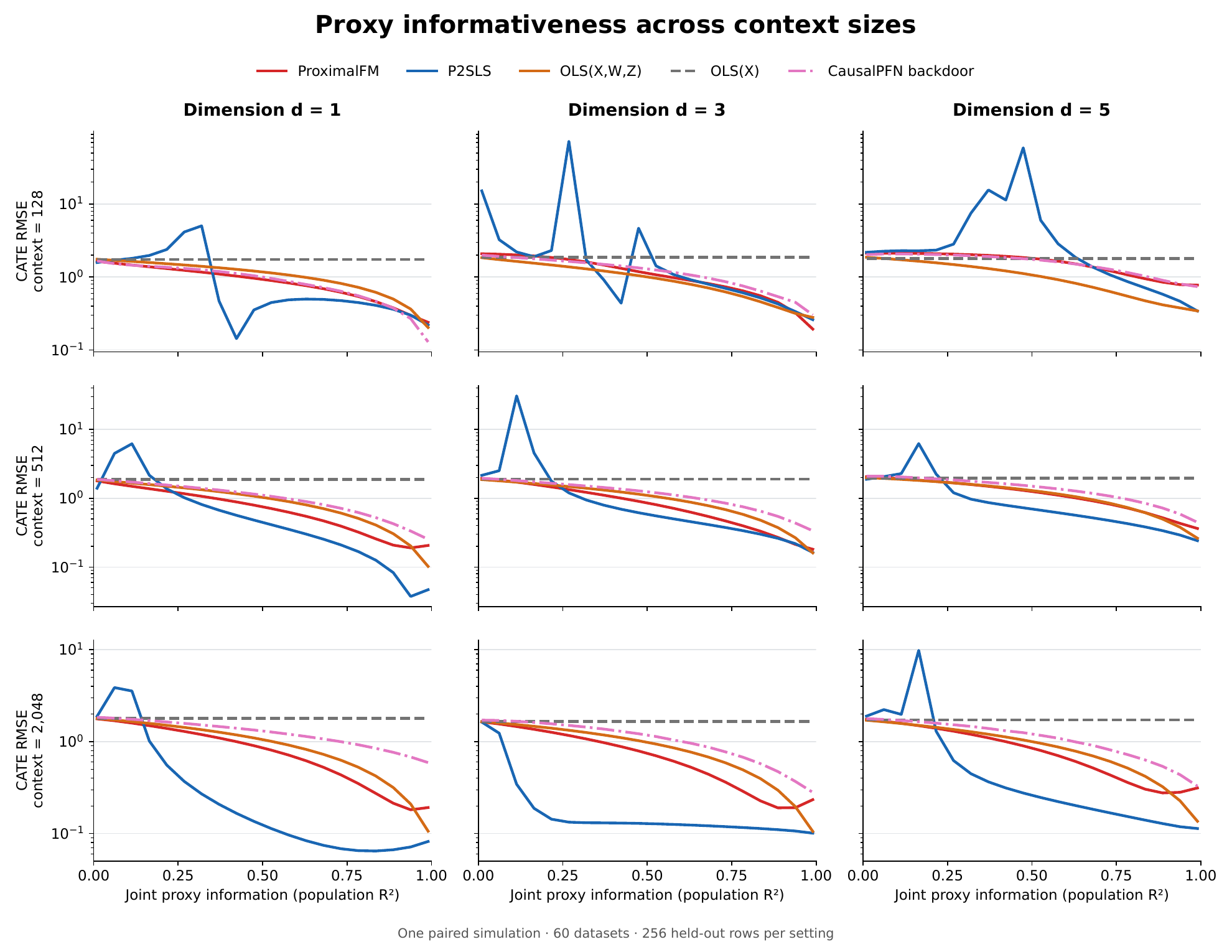}
    \caption{CATE RMSE against joint proxy information, measured by the population $R^2$ for predicting $U$ from $(W,Z)$. Columns vary proxy dimension $d$; rows vary context size. All methods are evaluated on the same 256 held-out query rows at each setting. The vertical axis is logarithmic. Within each dimension, the underlying draws are shared across information levels; results come from one realization per setting.}\label{fig:proxy_informativeness_results}
\end{figure}

\paragraph{Discussion.}

\Cref{fig:proxy_informativeness_results} shows the advantage of a well-specified proximal estimator: the 2SLS bridge basis contains the true outcome bridge. Its large fluctuations at low proxy information reflect the difficulty of estimating that bridge with weak instruments, rather than model misspecification. ProximalFM improves more smoothly as information increases and, at $n_{\mathrm{ctx}}=2\,048$, outperforms proxy-marginalized CausalPFN throughout the sweep and proxy-adjusted OLS in most settings. OLS is a favourable comparator here: the treatment effect is linear, and independence of $X$ and $(W,Z)$ makes proxy marginalisation a simple average over context rows, without conditional-neighbour estimation. As the proxies nearly reveal $U$, OLS catches up with and surpasses \modelname. These curves come from one paired simulation per setting.
\clearpage
\section{Towards a real physical system: Causal Chambers}
\label{app:causal_chambers}

We use the Light Tunnel Mk2 on the Causal Chamber platform\footnote{See \url{https://causalchamber.ai} for more details.} \citep{causalchambers,causalchambers_zeitler} to study average treatment effects in a controllable but real (non-simulated) optical system. The experiment combines physical responses with software control to create dependence between a hidden confounder, proxy variables, treatment, and outcome. This provides an exploratory step towards proximal estimation in a real-world systems, complementing the synthetic CATE benchmarks.

\begin{figure}[h!]
\centering
\includegraphics[width=154mm]{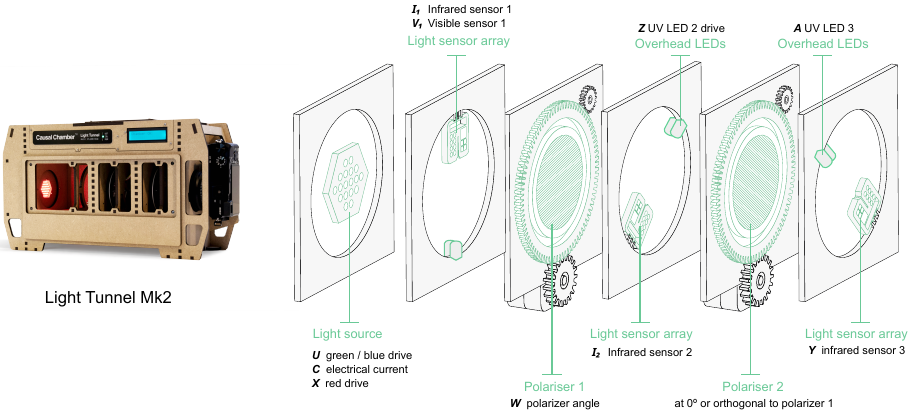}
\caption{The Light Tunnel Mk2 and a simplified diagram with the variables relevant to our setup.}
\label{fig:lt}
\end{figure}

\paragraph{Physical setup and causal structure.} Figure~\ref{fig:lt} shows the physical setup with the variables we consider, and Figure~\ref{fig:cc_setup} shows the intended pathways between them. The confounder $U$ is an LED drive drawn uniformly from the integers $0,\ldots,255$, controlling its brightness. The observed covariate $X$ controls a red LED, $Z$ controls the ultraviolet LED 2, and $W$ controls the position (in degrees) of the first polariser. Treatment $A\in\{0,1\}$ sets ultraviolet LED 3 to drive $a_1 A$, and the outcome $Y$ is the infrared sensor 3 reading. Auxiliary measurements comprise the electrical current $C$ drawn by the confounder LED, infrared readings $I_1,I_2$, and visible-light reading $V_1$.

The pathways $C\to X$, $I_1\to Z$, $V_1\to W$, and $I_2\to A$---resulting from the software control functions---link successive physical measurements to subsequent interventions. In the working proximal structure, $Z$ influences treatment assignment through $I_2$, while $W$ influences the light reaching the outcome sensor. The green LED and crossed polarisers define the nominal weak-confounding setting. The blue LED and a second polariser fixed at zero degrees define the nominal strong-confounding setting.

\begin{figure}[t]
\centering
\includegraphics[width=0.92\linewidth]{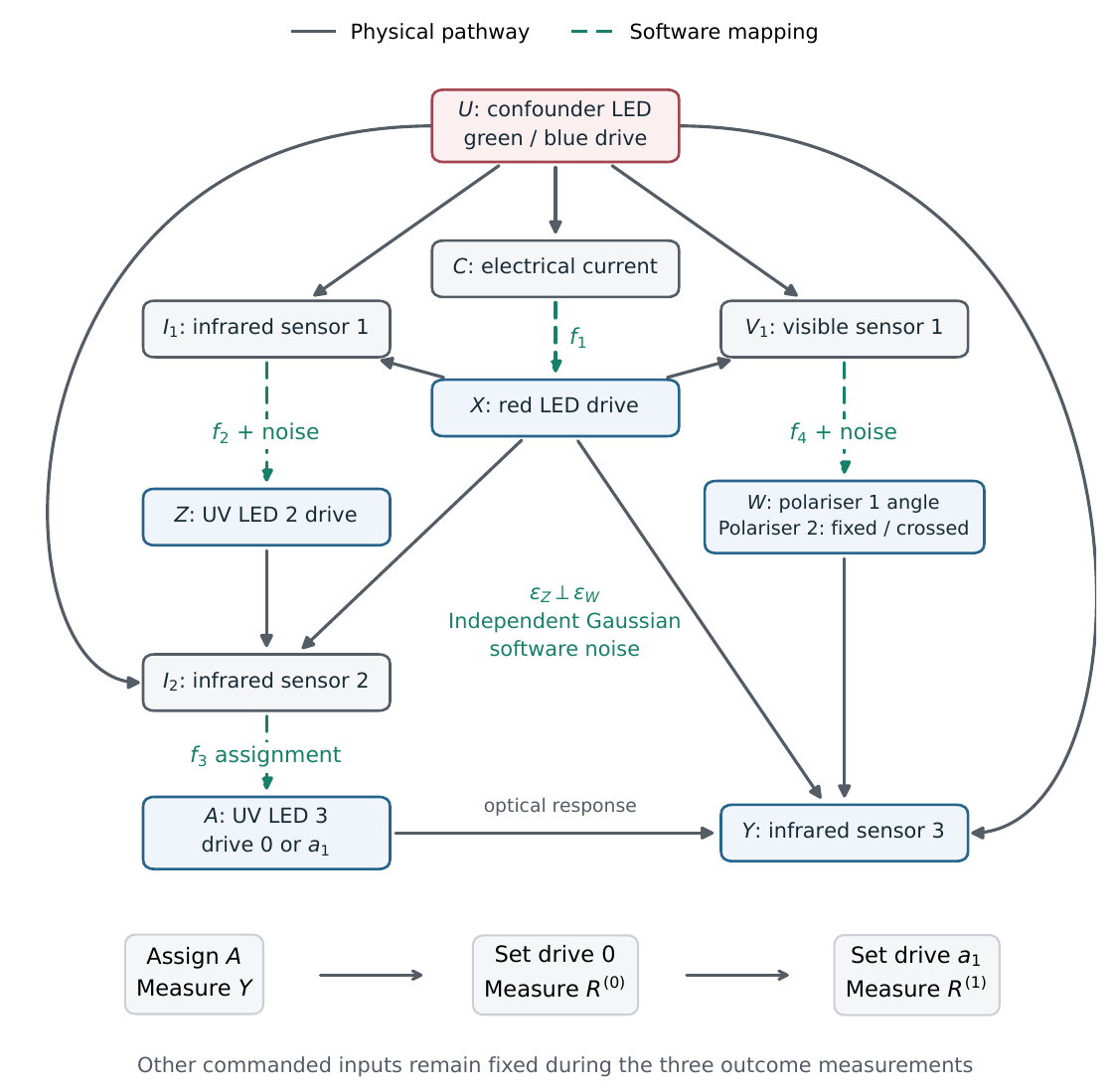}
\caption{Physical and software pathways in the optical experiment. Solid arrows represent the predetermined physical pathways and dashed green arrows represent effects resulting from the software control functions, i.e., \emph{software mappings}. Independent Gaussian noise enters the two proxy mappings. The lower sequence shows acquisition of the factual outcome and the two intervention measurements. The graph describes the intended within-observation structure. Its causal interpretation requires stable device state and the proxy restrictions discussed in the text.}
\label{fig:cc_setup}
\end{figure}

\paragraph{Control functions and acquisition.} Define
\[
C_{[l,h]}(v)=\max\{l,\min(h,v)\},\qquad
S_{L,k,m,b}(v)=\frac{L}{1+\exp\{-k(v-m)\}}+b.
\]
For each observation, we set the selected confounder LED to $U$ and measure $C$. We then set
\[
X=C_{[0,255]}\{\operatorname{round}(f_1(C))\}
\]
and measure $I_1,V_1$. Independent standard-normal draws $\epsilon_Z,\epsilon_W$ determine
\begin{align}
Z&=C_{[0,4095]}\{\operatorname{round}(f_2(I_1)+k_Z\sigma_Z\epsilon_Z)\},\\
W&=C_{[-180,180]}\{f_4(V_1)+k_W\sigma_W\epsilon_W\}.
\end{align}
Polariser angles use degrees. In the weak setting, the second polariser is crossed with the first at $W+90$, blocking most of the light reaching the third relative angle is zero. After setting these inputs, we measure $I_2$, assign treatment, and measure $Y$.

The linear family uses
\[
f_1(c)=C_{[0,255]}(-0.085c-1445),\qquad
f_2(i)=C_{[0,4095]}(0.0624i).
\]
The nonlinear family uses
\[
f_1(c)=S_{255,0.002,-19000,0}(c),\qquad
f_2(i)=S_{L,0.001,35000,0}(i),
\]
with $L=4095$ in the weak setting and $L=1000$ in the strong setting. Every family uses $f_4(v)=S_{180,0.002,4000,-90}(v)$. Thus the family label specifies the form of $f_1$ and $f_2$. Figure~\ref{fig:cc_controls} shows their saturation, the common sigmoid polariser response, and the treatment-assignment rules. These mappings change which parts of the optical response the experiment explores. The reduced nonlinear amplitude in the strong setting limits the contribution of $Z$ to the treatment-assignment sensor, preserving variation associated with the confounder.

\begin{figure}[t]
\centering
\includegraphics[width=\linewidth]{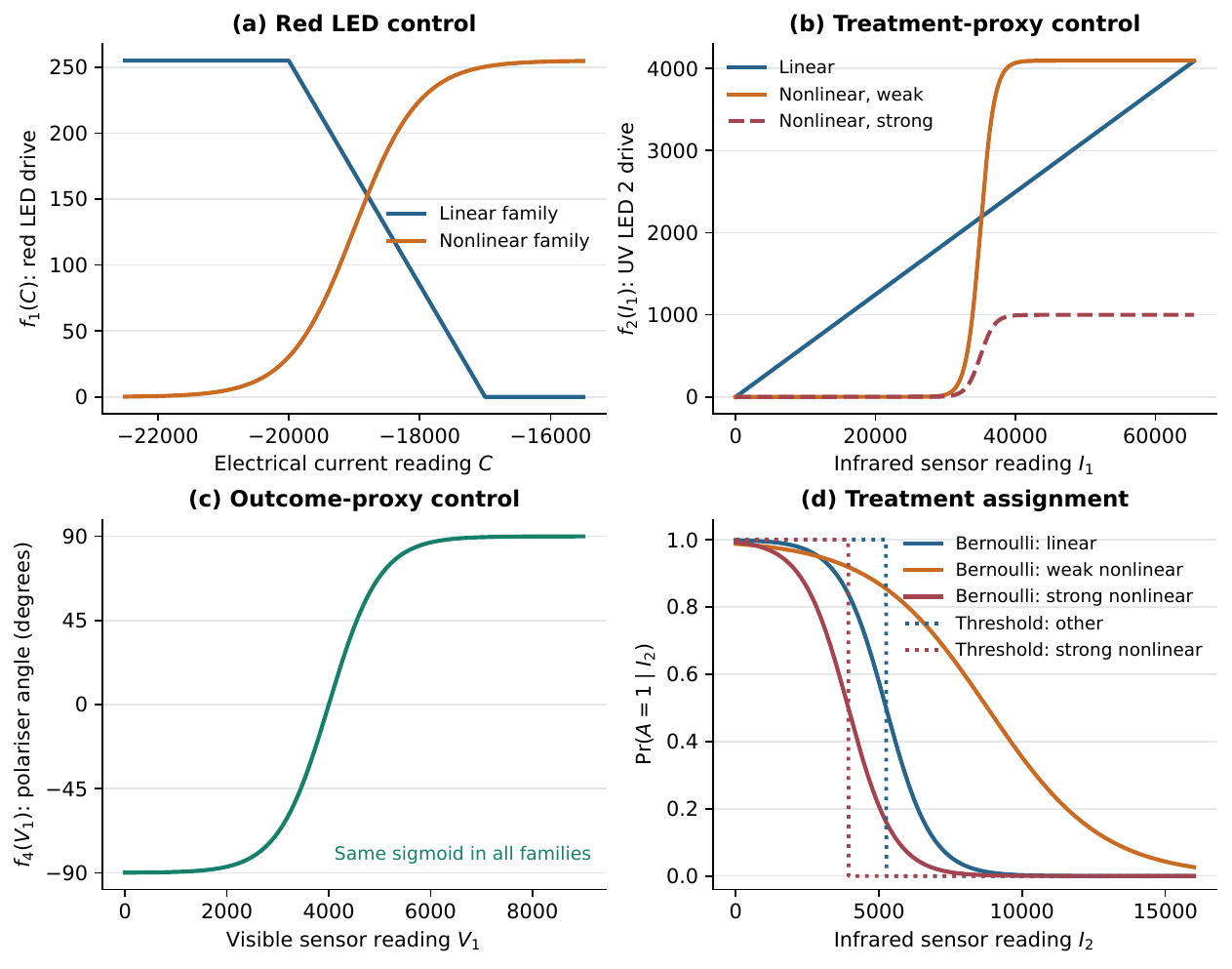}
\caption{Software control functions for the physical experiment. Panels (a)--(c) show the deterministic mappings before addition of proxy noise and final actuator clipping. Panel (d) shows the treatment probabilities for Bernoulli assignment and the corresponding hard-threshold protocols. The plots display the specified control functions over illustrative sensor ranges.}
\label{fig:cc_controls}
\end{figure}

\paragraph{Treatment and proxy-information settings.} We collect $1\,000$ under the following experimental design. We have a Bernoulli assignment as follows
\[
p(I_2)=\operatorname{sigmoid}\{(c-I_2)/s_A\},\qquad
A=\mathbf{1}\{V_A<p(I_2)\},\qquad
V_A\sim\operatorname{Unif}(0,1),
\]
where $(c,s_A)=(5250,800)$ for both linear settings, $(8789,2000)$ for weak nonlinear, and $(3930,800)$ for strong nonlinear. The Bernoulli rule supplies assignment variability at a given $I_2$. Its probabilities lie between $0.05$ and $0.95$ for $82.4\%$--$99.7\%$ of observations across the corresponding experimental conditions.

The noise multipliers $(k_Z,k_W)$ equal $(0,0)$ at high nominal informativeness, $(0.75,0.75)$ at intermediate informativeness, and $(2,2)$ at low informativeness. The strong-linear low-information condition uses $(0.6,2.5)$ to retain a measurable confounding signal while reducing proxy information. The fixed noise scales are
\[
\begin{array}{lrr}
\text{Setting}&\sigma_Z&\sigma_W\text{ (degrees)}\\
\hline
\text{Weak linear}&325&62.2\\
\text{Weak nonlinear}&1789&64.5\\
\text{Strong linear}&383&66.2\\
\text{Strong nonlinear}&412&62.6
\end{array}
\] We quantify realised information by the $R^2$ of a linear regression of $U$ on $(Z,W)$ with an intercept. Confounding is summarised by $\E_n[R^{(0)}\mid A=1]-\E_n[R^{(0)}\mid A=0]$. Because the proxies are physical actuators, their noise also changes the treatment and outcome distributions. Each noise level therefore defines a distinct operating distribution. NumPy random streams use seeds $981264976135$, $981264976142$, and $981264976146$ for $U$, proxy noises, and treatment uniforms, respectively. Restarting these streams for each condition pairs the commanded inputs and software randomness, while sensor values come from fresh physical measurements.

\paragraph{ATE reference and interpretation.} After measuring $Y$, we apply drive zero and then drive $a_1$, retaining the other commanded inputs, to obtain $R^{(0)}$ and $R^{(1)}$. The empirical average intervention reference is
\[
\widehat\theta_{\mathrm{ref}}
=\frac{1}{n}\sum_{i=1}^{n}\{R_i^{(1)}-R_i^{(0)}\}.
\]
The sequence in Figure~\ref{fig:cc_setup} makes this average directly measurable over the sampled operating states. Its causal interpretation assumes stability during each sequence and negligible systematic effects of measurement order or carryover.

The mean contrast ranges from approximately $886$ to $893$ sensor counts at drive $500$, and from $621$ to $624$ at drive $350$. Across conditions, the standard deviation of individual contrasts is $3.55$--$65.64$ counts. The root mean squared difference between $Y$ and the separately remeasured factual arm is $3.10$--$65.00$ counts. The overlap in these scales supports evaluating the average contrast, with individual contrasts interpreted in the presence of measurement variability. The physical estimand we obtain here is the ATE over the acquired states. The synthetic benchmarks in the other sections provide complementary evaluations against analytic CATE functions.

\paragraph{Adaptation of methods for ATE estimation.}

For each context prefix $\mathcal D_n=\{(X_i,W_i,Z_i,A_i,Y_i)\}_{i=1}^n$, we estimate the empirical ATE over the acquired operating states,
\[
\theta_n=\frac{1}{n}\sum_{i=1}^n \{Y_i(1)-Y_i(0)\}.
\]
All methods are fitted using the factual observations in $\mathcal D_n$ only. The paired intervention measurements $Y_i(0)$ and $Y_i(1)$ are reserved exclusively for the reference ATE. When a method produces a conditional effect estimate, we average its predictions over the same context prefix.

We consider a selection of methods from \Cref{app:results_baselineMethods}. \texttt{X-learner (TabICL)} fits arm-specific outcome and effect models using $(X,W,Z)$ and returns an estimated contrast at each observed $(X_i,W_i,Z_i)$; its ATE estimate is the mean of these contrasts. \texttt{CausalPFN} directly predicts a conditional treatment effect from the factual context, which we likewise average over the observed covariate and proxy values.

The proximal methods use the proxy roles explicitly. \texttt{P-learner} estimates outcome and treatment bridge functions and forms cross-fitted doubly robust proximal scores; its ATE estimate is the mean of these scores. \texttt{KPV} and \texttt{PMMR} estimate an outcome bridge $h(a,w,x)$ and obtain a marginal ATE by integrating the contrast $h(1,W,X)-h(0,W,X)$ over the empirical joint distribution of $(X,W)$ in the fitted context. \texttt{NMMR} uses the same empirical target, averaging its learned bridge contrast at the observed $(X_i,W_i)$ pairs. \texttt{P2SLS} uses a linear two-stage specification: it first predicts $W$ from $(A,X,Z)$ and then regresses $Y$ on $(A,X,\widehat W)$; the fitted coefficient of $A$ is its scalar ATE estimate.

Finally, \modelname{} treats the factual rows as its in-context dataset and predicts $\widehat{\tau}(X_i)$ for every context covariate value, with query-time proxies hidden. Its ATE estimate is the empirical mean $n^{-1}\sum_i\widehat{\tau}(X_i)$. This use of \modelname is not its primary design target: it uses $(W,Z)$ to interpret the factual context but predicts effects from $X$ alone, deliberately discarding query-time proxy information. It can therefore provide a useful ATE estimate by averaging its CATE predictions, but it is not designed to estimate the ATE directly or to integrate an outcome bridge over the empirical proxy distribution.

\FloatBarrier
\paragraph{Results}

\Cref{fig:causal_chambers_results} shows no single method that performs best in every physical setting. The \texttt{X-learner (TabICL)} is the most consistently accurate method, especially when confounding is weak. \texttt{CausalPFN} is also competitive, particularly for nonlinear mechanisms and weak confounding. In contrast, the proximal methods, including \modelname{}, perform well only in some settings, and larger contexts do not lead to systematic improvements. Overall, these results illustrate the difficulty and variability of causal estimation in real physical systems.

\begin{figure}
    \centering
    \includegraphics[width=\linewidth]{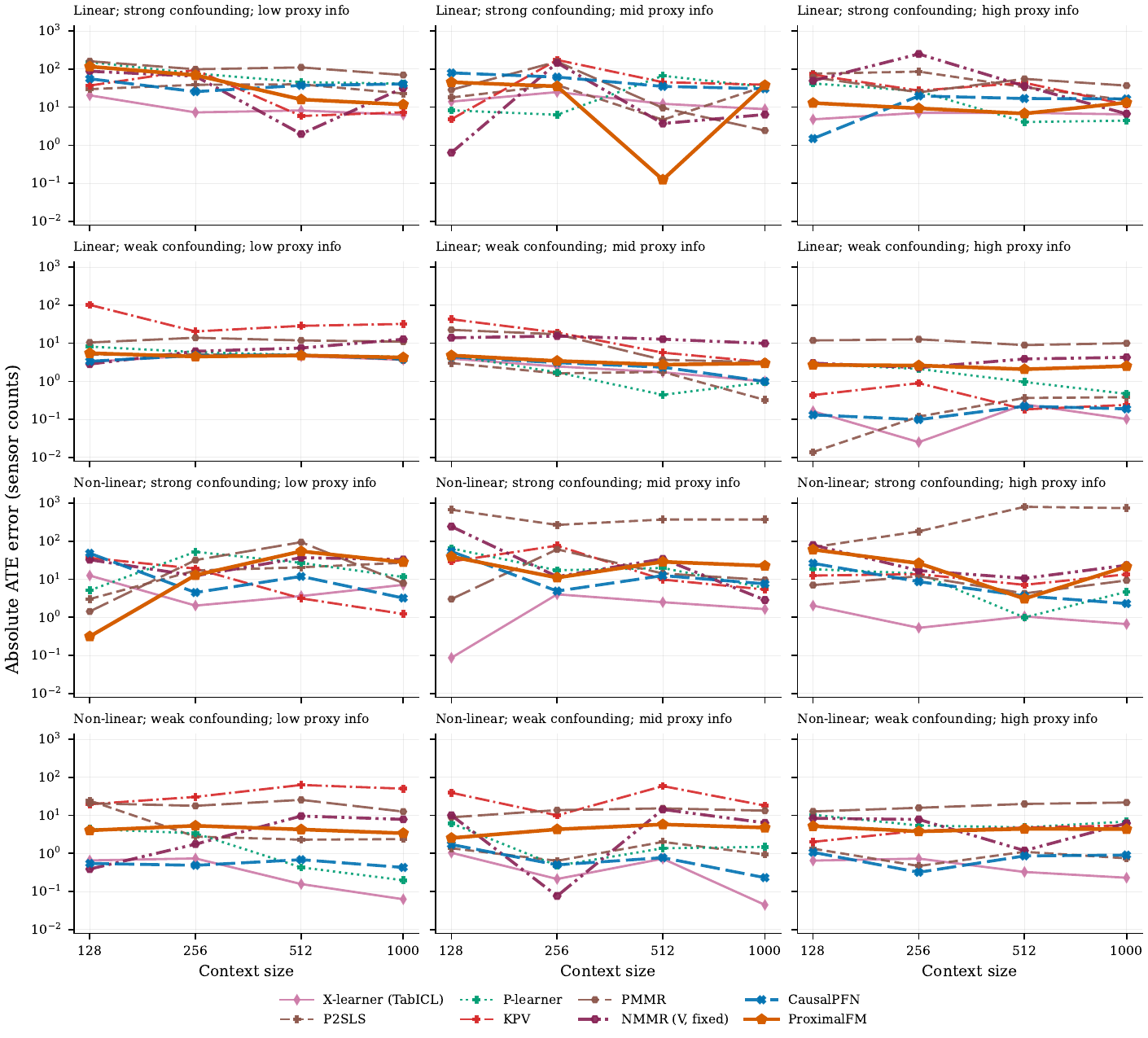}
    \caption{Absolute error of estimated average treatment effects in the Causal Chambers Light Tunnel experiment as the factual context grows. Each panel is one physical operating condition, crossing control-function family (linear or non-linear), confounding strength, and nominal proxy informativeness. Curves show $\lvert\widehat{\theta}_n-\widehat{\theta}_{\mathrm{ref},n}\rvert$ in raw infrared-sensor counts, where both the estimate and the paired-intervention reference are evaluated on the same nested context prefix of size $n$. Both axes use logarithmic scales; lower is better.}
    \label{fig:causal_chambers_results}
\end{figure}

\end{appendices}
\end{document}